\theoremstyle{definition}
\newtheorem{definition}{Definition}
\theoremstyle{remark}
\newtheorem{remark}{Remark}
\theoremstyle{definition}
\crefname{assumption}{assumption}{assumptions}
\DeclareMathOperator*{\argmin}{arg\,min}
\DeclareMathOperator{\E}{\mathbb{E}}
\DeclareMathOperator{\elbo}{\text{ELBO}}
\DeclareMathOperator{\smi}{\text{smi}}
\DeclareMathOperator{\pow}{\text{pow}}
\DeclareMathOperator{\cut}{\text{cut}}
\DeclareMathOperator{\msmi}{\text{msmi}}
\DeclareMathOperator{\elpd}{\text{elpd}}
\DeclareMathOperator{\bayes}{\text{bayes}}
\def\bmo{{}} %check with Chris we dont need \bm
\DeclarePairedDelimiterX{\infdivx}[2]{(}{)}{%
  #1\;\delimsize\|\;#2%
}
\newcommand{\kl}{D_{KL}\infdivx}
\newcommand{\hsp}{\hspace{20pt}}
\titleformat{\chapter}[hang]{\Huge\bfseries}{\thechapter\hsp\textcolor{gray}{|}\hsp}{0pt}{\Huge\bfseries}
\newacronym{smi}{SMI}{Semi-Modular Inference}
\newacronym{vsmi}{VSMI}{Variational Semi-Modular Inference}
\newacronym{sgd}{SGD}{Stochastic Gradient Descent}
\newacronym{vi}{VI}{Variational Inference}
\newacronym{vae}{VAE}{Variational Auto-Encoder}
\newacronym{mfvi}{MFVI}{Mean Field Variational Inference}
\newacronym{mcmc}{MCMC}{Markov Chain Monte Carlo}
\newacronym{bbvi}{BBVI}{Black Box Variational Inference}
\newacronym{svi}{SVI}{Stochastic Variational Inference}
\newacronym{dsvi}{DSVI}{Doubly Stochastic Variational Inference}
\newacronym[longplural={Gaussian Processes}]{gp}{GP}{Gaussian Process}
\newacronym{elbo}{ELBO}{Evidence Lower Bound}
\newacronym{elpd}{ELPD}{Expected Log-pointwise Predictive Density}
\newacronym{gbi}{GBI}{Generalised Bayesian Inference}
\newacronym{lpd}{LPD}{Log-pointwise Predictive Density}
\newacronym{nf}{NF}{Normalizing Flow}
\newacronym{iaf}{IAF}{Inverse Autoregressive Flow}
\newacronym{kl}{KL}{Kullback-Leibler Divergence}
\newacronym{vmp}{VMP}{Variational Meta-Posterior}
\newacronym{nsf}{NSF}{Neural Spline Flow}
\newacronym{mlp}{MLP}{Multi-Layer Perceptron}
\newacronym{hpv}{HPV}{Human Papilloma Virus}
\numberwithin{equation}{section}
\theoremstyle{plain}
\newtheorem{proposition}{Proposition}
\def\Y{\mathcal{Y}}
\def\Z{\mathcal{Z}}
\def\phispace{{\Omega_\Phi}}
\def\thetaspace{{\Omega_\Theta}}
\begin{document}

\begin{frontmatter}
  \title{Scalable Semi-Modular Inference with Variational Meta-Posteriors.}
  \runtitle{Scalable Modular Bayesian Inference}

  \begin{aug}
    \author{
      \fnms{Chris U.} \snm{Carmona}%
      \thanksref{addr1,t1}%
      \ead[label=e1]{carmona@stats.ox.ac.uk}%
      \ead[label=u1,url]{chriscarmona.me}
    }
    \and
    \author{
      \fnms{Geoff K.} \snm{Nicholls}%
      \thanksref{addr1}%
      \ead[label=e2]{nicholls@stats.ox.ac.uk}%
    }
    \runauthor{Carmona and Nicholls}

    \address[addr1]{
      Department of Statistics,
      University of Oxford,
      Oxford, UK \\
      \printead{e1} \printead{u1},
      \printead{e2}
    }

    \thankstext{t1}{Research supported by the Mexican Council for Science and Technolog (CONACYT) and The Central Bank of Mexico (Banco de Mexico).}

  \end{aug}

  \begin{abstract}
    The Cut posterior and related \acrfull*{smi} are Generalised Bayes methods for Modular Bayesian evidence combination. Analysis is broken up over modular sub-models of the joint posterior distribution. Model-misspecification in multi-modular models can be hard to fix by model elaboration alone and the Cut posterior and \acrshort*{smi} offer a way round this. Information entering the analysis from misspecified modules is controlled by an influence parameter $\eta$ related to the learning rate. This paper contains two substantial new methods. First, we give variational methods for approximating the Cut and \acrshort*{smi} posteriors which are adapted to the inferential goals of evidence combination. We parameterise a family of variational posteriors using a \acrlong*{nf} for accurate approximation and end-to-end training. Secondly, we show that analysis of models with multiple cuts is feasible using a new \acrlong*{vmp}. This approximates a family of \acrshort*{smi} posteriors indexed by $\eta$ using a single set of variational parameters.
    %We also clarify the use of Cut and \acrshort*{smi} posteriors in analyses which cut or modulate prior feedback in models with a single data set.
  \end{abstract}

  \begin{keyword}[class=MSC]
    62F15, 62C10, 62-08
  \end{keyword}

  \begin{keyword}
    \kwd{Variational Bayes}
    \kwd{Model misspecification}
    \kwd{Cut models}
    \kwd{Generalized Bayes}
    \kwd{Scalable inference}
  \end{keyword}

\end{frontmatter}

\section{Introduction} \label{sec:intro}

Evidence combination is a fundamental operation of statistical inference.
When we have multiple observation models for multiple data sets, with some model parameters appearing in more than one observation model, we have a multi-modular setting in which data sets identify modules.
The modules are connected in the graphical model for the joint posterior distribution of the parameters.
% In apparent contrast, the evidence informing parameters may be a single data set and a hierarchy of prior models in which parameters at different levels of the hierarchy are auxiliary variables.
% In our Bayesian setting this kind of evidence combination is not qualitatively different from data fusion: auxiliary variables can be thought of as missing data for higher levels of the hierarchy; generative models for missing data play the same role as the observation models in data fusion, and modules are identified with levels of the hierarchy.
% \cite{Carmona2020smi} and \cite{Yu2021variationalcut} analyse an archaeological data set using a model with this sort of structure, identifying modules as above.
% Collaboration of multi-disciplinary teams in large-scale analysis often amounts to evidence combination in a multi-modular setting.

Large-scale multi-modular models are susceptible to model contamination, as a hazard for misspecification in each module accumulates as modules are added.
If any module is significantly misspecified, it may undermine inference on the joint model \citep{Liu2009modularization}.
%Models are especially vulnerable to misspecification when the dimension of missing data is large and the auxiliary variables representing the missing data are correlated, as misspecification biases are reinforced by strong feedback from high dimensional latent parameters.
Model elaboration \citep{Smith1986,Gelman2014BDA3} may be impractical or at least very challenging.
%It is simply unclear what a suitable model would look like.
In this setting we may consider ``inference elaboration'', and turn to statistically principled alternatives to Bayesian inference such as Generalised Bayes \citep{Zhang2006,Grunwald2017a,Bissiri2016}.
Some modular inference frameworks allow the analyst to break up the workflow, whilst still implementing a valid belief update \citep{Bissiri2016,Nicholls2022smi}.
This is discussed in \cite{Nicholson2021covid} in the broader context of modular ``interoperability''.

Modular Bayesian Inference \citep{Liu2009modularization, Plummer2015cut,Jacob2017together, Carmona2020smi,Nicholls2022smi} addresses misspecification in a multi-modular setting by controlling feedback from misspecified modules (see \cref{sec:mod_bayes}).
%These methods assume that the analyst has identified the misspecified modules and allow inference to proceed ``safely'' without simply discarding these modules.
Recent applications of multi-modular inference note \citep[eg.][]{Nicholson2021covid, Teh2021covid} and demonstrate \citep[eg.][]{Carmona2020smi,Yu2021variationalcut,Styring2022urban} the potential benefits of partially down-weighting the influence of modules, rather than completely removing feedback.
%Following \cite{Jacob2017together}, different modified posterior distributions are called \emph{candidate posteriors}.
% We follow this assumption for the most part, but also give an extension which simultaneously identifies and protects against misspecification across multiple modules.
Modular Bayesian Inference is characterised by a modified posterior known as the \emph{Cut} posterior \citep{Plummer2015cut}. This removes feedback from identified misspecified modules. \emph{Semi-Modular} posteriors \citep{Carmona2020smi} interpolate between Bayes and Cut, controlling feedback using an influence parameter $\eta\in [0,1]$. This can be identified with the learning rate parameter in a power posterior \citep{Walker2001,Zhang2006,Grunwald2017a}.
The Cut and \acrshort*{smi} posteriors are valid belief updates in the sense of \citep{Bissiri2016}, and part of a larger family of valid inference procedures \citep{Nicholls2022smi}.

These approaches raise computational challenges due to intractable parameter-dependent marginal factors.
% Growing interest in Modular Bayesian Inference has motivated new methods to address the computational challenges of these modified posteriors.
In the case of Cut posteriors, nested Monte Carlo samplers are given in \cite{Plummer2015cut} and
\cite{Liu2020sacut} and
used in \cite{Carmona2020smi} to target \acrshort*{smi} posteriors.
These samplers suffer from double asymptotics, though work well in practice on some target posterior distributions.
\cite{Jacob2020couplings} give unbiased Monte Carlo samplers for the Cut posterior and \cite{Pompe2021cut} analyse the asymptotics of the Cut posterior and give two methods to target it: a Laplace approximation and Posterior Bootstrap.
%In concurrent independent work \cite{Yu2021variationalcut} give variational methods approximating the Cut posterior.
%They point out that this makes it possible to apply powerful candidate-posterior selection methods \citep{nott21-prior-check} to decide between Cut and Bayes.

Work to date on Modular Inference focuses on models with a small number of modules and a single pre-specified cut module.
We simultaneously adjust the contribution of multiple modules. This allows us to take an exploratory approach and ``discover'' the misspecified modules.
Dealing with multiple cuts and a vector of \emph{influence parameters} is challenging, as each additional ``cut'' increases the dimension of $\eta$ and the space of candidate posteriors.
One natural approach for selecting a candidate posterior is to take a grid of $\eta$-values, sample each distribution (Importance Sampling is not straightforward, as ratios of candidate posteriors are intractable) and evaluate a performance metric on each distribution.
This search strategy works for a single cut, but is already inefficient and quickly becomes cumbersome when the number of cuts increases.
% , partly due to the nested Monte-Carlo methods commonly used to target the posterior.

In this work, we give a novel variational framework for \acrshort*{smi} posteriors
which scales to handle multiple cuts.
The usual \acrfull*{elbo} training utility is intractable, due to the same parameter-dependent marginals that make \acrshort*{mcmc} sampling difficult.
Moreover, the resulting approximation does not meet the original objective of having controlled feedback between modules (see \cref{sec:vi_modular}).
Our solution takes a variational family with a pattern of conditional independence  between \emph{shared, extrinsic} and \emph{module-specific, intrinsic} parameters that matches the \acrshort*{smi} target, and uses the \emph{stop-gradient} operator to define a modified variational objective. The resulting variational framework gives good approximation, controllable feedback and end-to-end optimisation.
In parallel independent work, \cite{Yu2021variationalcut} give variational methods for Cut-posteriors.
Our approaches match at the Cut posterior: just as \acrshort*{smi} interpolates Cut and Bayes, so variational \acrshort*{smi} interpolates variational-Cut and variational-Bayes exactly.

One of the goals of \acrshort*{smi} is to correct for model misspecification, so it is important to get a good variational fit to the \acrshort*{smi}-posterior and not make matters worse with a poor approximation.
We leverage recent work on relatively expressive variational families using \acrfullpl*{nf} \citep{Rezende2015nf, Papamakarios2021normalizing}, as we get better uncertainty quantification than less expressive mean-field approximations.
In particular, we take Flow-based models with universal-approximation capabilities \citep[see][]{Huang2018flows,Durkan2019neural,Papamakarios2021normalizing} as our default variational families.
The conditional independence structure required by the \acrshort*{smi} posterior is achieved by defining the \emph{Conditioner} functions of the flow.

%Small changes of influence parameter $\eta$ results in a small variation in the candidate posterior.
We exploit the continuity of the \acrshort*{smi} posterior with varying $\eta$ and introduce the \acrfull*{vmp}, a variational approximation to the \emph{entire collection} of posteriors indexed by $\eta$, using a single set of parameters.
We train a function that takes $\eta$ as input and produces the variational parameters for the corresponding \acrshort*{smi} posterior.
We call this function the \acrshort*{vmp}-map.
The \acrlong*{vmp} is key to scalability (as illustrated in our example with 30 potential cuts in \cref{subsec:exp_rnd_eff}).
%The \acrshort*{vmp} is the ``output'' of our analysis.

The remaining task is to select an \acrshort*{smi} posterior ($ie$, $\eta$) for downstream analysis.
The performance metric deciding the level of influence will depend on the inferential goals.
Selection criteria \citep{wu-martin-20} developed for choosing the learning rate in the power posterior, such as matching information gain \citep{holmes-walker-17}, and predictive performance \citep{Vehtari2016,Jacob2017together,wu-martin-21} are relevant.
\cite{Yu2021variationalcut} leverage tractable variational distributions to compute calibrated test statistics \citep{nott21-prior-check} measuring evidence against Bayes and for Cut.
We use the \acrfull*{elpd} \citep{Vehtari2016}, which scores predictive performance.
Variational methods commonly achieve predictive accuracy comparable with \acrshort*{mcmc} despite the variational approximation \citep{Wang2019VariationalMisspecification} so this is a happy marriage.
We estimate the \acrshort*{elpd} using the WAIC \citep{Watanabe2012}. Fast sampling is available for the variational posterior density and this supports \acrshort*{elpd}-estimation for multiple cuts.

%We obtain accurate approximation to the posteriors in a test case with multimodal distributions and sharp variations along the $\eta$ space.

In summary, our contributions include:\\[-0.25in]
\begin{itemize}
  \item a variational framework for approximation of \acrshort*{smi} posteriors suitable for modular Bayesian inference;
  \item approximation of \acrshort*{smi} posteriors with \acrlongpl*{nf}, underlining the importance of flexible variational families;
  \item the \acrfull*{vmp}, a family of variational posteriors indexed by $\eta$ which approximates a family of \acrshort*{smi} posteriors using a single set of parameters;
  \item end-to-end training algorithms using the \emph{stop-gradient} operator;
        %\item we give an end-to-end training strategy for learning the \acrlong*{vmp};
  \item variational methods for identifying misspecified modules and modulating feedback which scale to handle multiple cuts;
  \item illustrations of the method on real and synthetic data.\\[-0.25in]
\end{itemize}
We provide code reproducing all results and figures \footnote{\url{https://github.com/chriscarmona/modularbayes}}.

\section{Modular Bayesian Inference} \label{sec:mod_bayes}

In order to fix ideas, we illustrate multi-modular inference using the model structure displayed in \cref{fig:toy_multimodular_model}. 
This structure is already quite rich, as more complex models may sometimes be reduced to this form by grouping together sub-modules into nodes appropriately.
Our methods extend straightforwardly to more complex models in a similar fashion to earlier work in this field.

\begin{figure}[!htb]
  \centering
  % \large
  \def\svgwidth{0.25\textwidth}
  \import{images/}{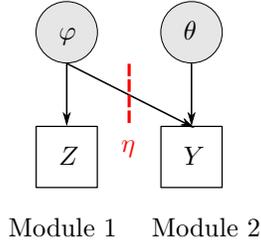}
  \caption{
    Graphical representation of a simple multi-modular model.
    Grey circles denote unknown quantities to be inferred, and white boxes are fixed quantities.
    The dashed red line indicates cut feedback from the $Y$-module into the $Z$-module.
    The addition of the $\eta$ symbol indicates modulated feedback via \acrshort*{smi}.
  }
  \label{fig:toy_multimodular_model}
\end{figure}

This generic setting has two modules with data $Y=(Y_1,...,Y_n),\ Y\in \Y^n$ and $Z=(Z_1,...,Z_m),\ Z\in \Z^m$ and continuous parameters $\varphi\in \phispace$ and $\theta\in \thetaspace$ of dimension $p_\varphi$ and $p_\theta$ respectively.
The generative models for parameters and data are $p(Z\mid \varphi)\,p(\varphi)$ and $p(Y\mid \varphi,\theta)\,p(\varphi,\theta)$.
The Bayesian posterior for this model can be written
\begin{align}
  p(\varphi,\theta \mid Z, Y) & = p(\varphi \mid Z, Y) \; p(\theta\mid Y, \varphi ) \label{eq:bayespost} \\
                              & \propto p(\varphi,\theta, Z, Y) \label{eq:bayespost2}
\end{align}
where the last line is the natural form for further computation using the joint distribution
\begin{align}\label{eq:joint_toymodel}
  p(\varphi,\theta, Z, Y) = p(\varphi,\theta)\; p(Z\mid \varphi)\; p(Y\mid \varphi,\theta)
\end{align}
\Cref{eq:bayespost} is given for contrast with the Cut model and SMI below.
Note that,
\begin{equation}\label{eq:phi_post}
  p(\varphi \mid Z, Y)\propto p(\varphi) \; p(Z\mid \varphi) \; p(Y\mid \varphi)
\end{equation}
and
\[
  p(\theta\mid Y, \varphi )\propto p(\theta\mid \varphi)\; \frac{p(Y\mid \varphi,\theta)}{p(Y\mid \varphi)}
\]
with
\[
  p(Y\mid \varphi)=\int p(Y\mid \varphi,\theta)\; p(\theta\mid \varphi) \; d\theta.
\]
In \cref{eq:bayespost} the value of $\varphi$ informs $\theta$.
In \cref{eq:phi_post} the marginal likelihood $p(Y\mid \varphi)$ can be thought of as ``feedback'' of information from the $Y$ module into the $Z$-module \citep{Liu2009modularization, Plummer2015cut, Jacob2017together}.
Any remaining normalising constants depend only on the data $Y,Z$.

\subsection*{Cutting Feedback}

Several different methods have been proposed to bring the generative models together in a joint distribution for the parameters given data.
Besides Bayesian inference itself, these include Markov Melding \citep{Goudie2019melding} (which focuses on settings where priors conflict across shared parameters) and Multiple Imputation \citep{Meng1994imputation}, which discusses inference for ``uncongenial'' modules, relevant here.
\cite{Nicholson2021covid} discusses the broader concept of ``interoperability'' of models in multi-modular settings.
In this paper we focus on \acrfull*{smi} defined in \cite{Carmona2020smi} and Cut model inference \citep{Plummer2015cut}, which is a special case.

Cut-model inference has proven useful in many settings, including complex epidemic models for the Covid pandemic \citep{Teh2021covid,Nicholson2021covid} and modular models linking isotope analysis and fertiliser use in Archaeological settings \citep{Styring2017extensification}, pharmaco-kinetic and -dynamic models \citep{Lunn2009pkpd} in pharmacological analysis, and health affects and air pollution \citep{Blangiardo11}.

Suppose the generative model $p(Y\mid \varphi,\theta)\; p(\varphi,\theta)$ in the $Y$-module is misspecified via $p(\theta\mid \varphi)$ or $p(Y\mid \varphi,\theta)$.
We hope to get a more reliable estimate of $\varphi$ by ``cutting'' the feedback from this module into the $\varphi$-estimation.
This is indicated by the dashed red line in \cref{fig:toy_multimodular_model}. Operationally, we drop the factor $p(Y\mid \varphi)$. Following \cite{Plummer2015cut},
\begin{align} \label{eq:cut_posterior}
  p_{\cut}(\varphi,\theta \mid Y,Z) & = p(\varphi \mid Z) \; p(\theta \mid Y,\varphi)
  %& = \frac{ p(Z \mid \varphi) p(\varphi) }{ p(Z) } \frac{ p( Y \mid \varphi,  \theta ) \; p(\varphi,\theta) }{ p( Y,\varphi)_{\theta}} \nonumber   \\
  \\ &
  \propto \frac{p(\varphi, \theta, Y, Z)}{p(Y \mid \varphi) }.  \nonumber
\end{align}
Cutting feedback leaves the Cut posterior with the intractable factor $p(Y \mid \varphi)$.
Inference with a Cut-posterior is a two-stage operation which can be seen as Bayesian Multiple Imputation. In the first stage we \emph{impute} $\varphi\sim p(\cdot \mid Z)$. This distribution of imputed $\varphi$ values is passed to the second \emph{analysis} stage where $\varphi$ are treated as randomly variable ``imputed data'' alongside $Y$, informing $\theta\sim p(\cdot \mid Y, \varphi)$. Looking ahead to \acrshort*{smi}, this setup is shown graphically in \cref{fig:toy_multimodular_model_2stg}, where $\varphi$ is imputed on the left (appearing in a grey circle as a parameter) and then conditioned on the right (appearing in a white square like $Y$). In a Cut-posterior $\eta=0$, and the $\tilde\theta,Y$ elements of the graph on the left are absent.

The Cut model posterior is a ``belief update'', in the sense of \cite{Bissiri2016}.
It is a rule $\psi$ for updating a prior measure of belief, $p_0(\varphi,\theta)$ say, using a loss $l(\varphi,\theta;Y,Z)$ connecting data and parameter (the -ve log-likelihood is a cannonical loss) to determine a posterior belief measure $p_1$ say.
They write $p_1=\psi(l,p_0)$.
\cite{Bissiri2016} require belief updates $\psi(l,p_0)$ to be coherent: in our notation, if the data are all conditionally independent given the parameters, and $Y=(Y^{(1)},Y^{(2)})$ and $Z=(Z^{(1)},Z^{(2)})$ are arbitrary partitions of the data in each module into two sets, then we should arrive at the same posterior $\psi(l(\varphi,\theta;Y,Z),p_0)$ if we take all the data $(Y,Z)$ together or if we update the prior to an intermediate posterior using $(Y^{(1)},Z^{(1)})$ and then update that intermediate posterior using the rest of the data, $(Y^{(2)},Z^{(2)})$, that is,
\begin{equation}\label{eq:coherent_bissiri}
  \psi(l(\varphi,\theta;Y,Z),p_0)=\psi(l(\varphi,\theta;Y^{(2)},Z^{(2)}),\psi(l(\varphi,\theta;Y^{(1)},Z^{(1)}),p_0)).
\end{equation}
They show with some generality that a valid belief update must be a Gibbs posterior if it is to be coherent, that is,
\[
  \psi(l(\varphi,\theta;Y,Z),p_0)\propto \exp(-l(\varphi,\theta;Y,Z))\,p_0(\varphi,\theta).
\]
Bayesian inference is coherent because the corresponding loss $l_{\text{bayes}}=-\log(p(Y\mid \varphi,\theta))-\log(p(Z\mid \varphi))$ is additive for independent data.
\cite{Carmona2020smi} show that the belief update determined by the Cut-model posterior is coherent and \cite{Nicholls2022smi} show it is valid.
This is surprising, as the loss $l_{\cut}=l_{\text{bayes}}+\log(p(Y\mid \varphi))$ is not simply additive.
This holds because the ``prior'' appearing in the marginal $p(Y^{(2)}\mid Y^{(1)},\varphi)$ in the second belief update is the posterior from the first stage and not $p_0(\theta\mid \varphi)$.

The Cut posterior can also be characterised {\it via} a constrained optimisation \citep{Yu2021variationalcut}.
Consider the class of all joint densities,
\[
  \mathcal{F}_{\cut} = \{q(\varphi,\theta) : q(\varphi) = p(\varphi\mid Z)\},
\]
for which the $\varphi$-marginal $q(\varphi)$ equals $p(\varphi\mid Z)$.
Densities in $\mathcal{F}_{\cut}$ are candidate Cut posteriors.
\cite{Yu2021variationalcut} show that, among densities in $\mathcal{F}$, the Cut posterior in \cref{eq:cut_posterior} is the best approximation to the Bayes posterior as measured by \acrshort*{kl} divergence, that is,
\begin{equation}\label{eq:cut_constrained_optim}
  p_{\cut}(\varphi,\theta \mid Y,Z)=\arg\min_{q\in \mathcal{F}_{\cut}} \kl{q(\varphi,\theta)}{ p(\varphi,\theta\mid Y,Z)}.
\end{equation}
They use this characterisation to motivate a framework for variational approximation of the Cut posterior.
%(see \cref{sec:v_cut_detail} below).
Our motivation for variational \acrshort*{smi} starts from an equivalent characterisation of \acrshort*{smi}.

Statistical inference for the Cut posterior is challenging due to the marginal likelihood factor $p(Y\mid \varphi)$.
Several approaches have been suggested.
\cite{Plummer2015cut} gives a nested \acrshort*{mcmc} scheme: run \acrshort*{mcmc} targeting $p(\varphi\mid Z)$; for each sampled $\varphi$ a separate \acrshort*{mcmc} run targets $p(\theta\mid Y,\varphi)$; this yields $(\varphi,\theta)\sim p_{\cut}$, at least approximately.
Nested \acrshort*{mcmc} for Cut models suffers from double asymptotics but is adequate in some cases \citep{Styring2017extensification,Teh2021covid,Moss2022}.
A recent nested \acrshort*{mcmc} variant \citep{Liu2020sacut} shows efficiency gains for high dimensional targets.
An exact unbiased variant of \acrshort*{mcmc} based on coalescing coupled chains \citep{Jacob2020couplings} removes the double asymptotics of the nested sampler.

%\todo{In other ML domains? Modular Generative Model Multitask VAE?}

%Definition of Cut posterior in the Generalised Update of Beliefs framework \citep{Bissiri2016}.
%Recall the associated loss function and briefly mention how the update is order-coherent.

\subsection{Semi-Modular Inference}

\acrfull*{smi} \citep{Carmona2020smi} is a modification of Bayesian multi-modular inference which allows the user to adjust the flow of information between data and parameters in separate modules.
Cut models stop misspecification in one module from causing bias in others.
However, this often leads to variance inflation.
Semi-modular posteriors determine a family of candidate posterior distributions indexed by an influence parameter $\eta\in [0,1]$.
They interpolate between the Cut and Bayesian posteriors, expanding the space of candidate distributions and including Bayesian inference and Cut-model inference as special cases.

\subsubsection{Modulating feedback from data modules}

Cut models and \acrshort*{smi} are typically presented using models like \cref{fig:toy_multimodular_model} and cutting or modulating feedback from the $Y,\varphi,\theta$ module into the $Z,\varphi$ module. However, some effective applications of Cut models and \acrshort*{smi} cut or modulate feedback from modules that have no data \citep{Jacob2017together,Styring2017extensification,Carmona2020smi,Yu2021variationalcut,Styring2022urban}. We return to this below and in \cref{sec:prior_feedback}.

The \textbf{\acrshort*{smi} posterior} for the cut in \cref{fig:toy_multimodular_model} is defined as
\begin{equation} \label{eqn:smi_01}
  p_{\smi,\eta}(\varphi,\theta,\tilde\theta\mid Y,Z) = p_{\pow, \eta}(\varphi,\tilde\theta \mid Y,Z) p(\theta \mid \varphi,Y)
\end{equation}
where $p_{\pow, \eta}( \varphi , \tilde\theta \mid Z, Y )$ is the power posterior
\begin{align}
  p_{\pow, \eta}( \varphi , \tilde\theta \mid Z, Y ) & \propto p_{\pow, \eta}( \varphi , \tilde\theta, Z, Y )
  \intertext{with}
  p_{\pow, \eta}( \varphi , \tilde\theta, Z, Y )     & = p(Z\mid \varphi) p( Y \mid \varphi, \tilde \theta )^\eta \;  p(\varphi,\tilde\theta). \label{eq:powjoint_toymodel}
\end{align}
Taking $\eta=1$ in the $\eta$-smi posterior and integrating over $\tilde\theta$  gives the conventional posterior in \cref{eq:bayespost} so that $p_{smi,1}(\varphi,\theta\mid Y,Z)=p(\varphi,\theta\mid Y,Z)$, while $\eta=0$ gives the Cut posterior in \cref{eq:cut_posterior}, with $p_{smi,0}(\varphi,\theta\mid Y,Z)=p_{\cut}(\varphi,\theta\mid Y,Z)$.

The \acrshort*{smi}-posterior in \cref{eqn:smi_01} is motivated in a similar way to the Cut-posterior.
The extra degree of freedom $\eta$ in the power posterior $p_{\pow, \eta}$ down-weights the feedback from the $Y$-module on $\varphi$.
It is chosen to give the best possible imputation of $\varphi$ in the first phase of the inference.
The parameters $\tilde\theta$ can be thought of as auxiliary parameters introduced for the purpose of imputing $\varphi$.
This two-stage process is represented in \cref{fig:toy_multimodular_model_2stg}.
As for the Cut-posterior, $\varphi$-values from the imputation stage are treated as ``imputed data'' in the second stage, so they appear as random variables (in a grey circle) on the left, and as conditioned data (in a white square) on the right.
%On integration,
%\begin{equation*}
%  p_{\smi,\eta}(\varphi,\theta \mid Y,Z) = \left[ \int p_{\pow, \eta}(\varphi,\tilde\theta \mid Y,Z) \; d \tilde\theta \right] p(\theta \mid Y,\varphi),
%\end{equation*}
%to obtain the \acrshort*{smi} belief update $p_{\smi,\eta}(\varphi,\theta \mid Y,Z)$. \cite{Carmona2020smi} prove that the \acrshort*{smi} posteriors are coherent belief updates (in the sense of \cref{eq:coherent_bissiri}) at every $\eta$.
\begin{figure}[!htb]
  \centering
  \small
  \def\svgwidth{0.5\textwidth}
  \import{images/}{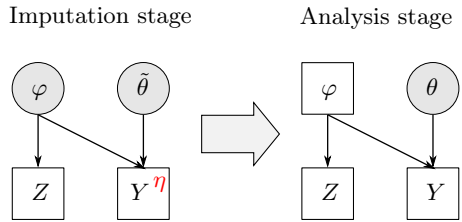}
  \caption{
    Graphical representation of the implicit two-stage inference process in \acrlong*{smi}.
    Grey circles denote unknown quantities to be infered, and white boxes are fixed quantities.
    %Our variational framework performs both stages of the inference in a single pass.
    % In the first stage, we infer the posterior of $\varphi$ and  $\tilde\theta$ using the powered joint probability.
    % In the second stage, we infer the posterior of $\theta$ conditioned on fixed values of $\varphi$.
  }
  \label{fig:toy_multimodular_model_2stg}
\end{figure}

In sample-based inference for \acrshort*{smi}, variants of nested \acrshort*{mcmc} \citep{Plummer2015cut} which target $p_{\pow, \eta}(\varphi,\tilde\theta \mid Y,Z)$ and then sample $p(\theta \mid Y,\varphi)$ for each sampled $\varphi$ have the same strengths and weaknesses as they do for the Cut posterior. Efficiency considerations are discussed in \cite{Carmona2020smi}.

\acrshort*{smi} can be characterised in the same way as the Cut model in \cref{eq:cut_constrained_optim}.
Consider the class of joint densities,
\[
  \mathcal{F}_{\smi,\eta} = \{q(\varphi,\theta,\tilde\theta) : q(\varphi,\tilde\theta) = p_{\pow, \eta}(\varphi,\tilde\theta \mid Z, Y)\},
\]
in which the marginal $q(\varphi,\tilde\theta)$ equals the power posterior $p_{\pow, \eta}(\varphi,\tilde\theta \mid Z, Y)$.
Densities in $\mathcal{F}_{\smi,\eta}$ are candidate \acrshort*{smi} posteriors.
At $\eta=1$ this is a duplicated Bayes posterior,
\[
  p_{\smi,\eta=1}(\varphi,\theta,\tilde\theta \mid Y,Z)=p(\varphi,\tilde\theta \mid Y,Z) p(\theta \mid Y,\varphi),
\]
in which both $p(\varphi,\tilde\theta \mid Y,Z)$ and $\int p_{\smi,1}(\varphi,\theta,\tilde\theta \mid Y,Z)\,d\tilde\theta$ equal $p(\varphi,\theta \mid Y,Z)$ in \cref{eq:bayespost}.
\begin{proposition}
  The \acrshort*{smi} posterior in \cref{eqn:smi_01} minimises the following \acrshort*{kl}-divergence over distributions in $\mathcal{F}_{\smi,\eta}$,
  % For $q\in \mathcal{F}_{\smi}$ we have
  % \[
  % KL(q(\varphi,\theta,\tilde\theta){\,\ \mid \,} p_{\smi,\eta}(\varphi,\theta,\tilde\theta\mid Y,Z))=E_{}(KL(q(\theta\mid \varphi){\,\ \mid \,} p(\theta\mid Y,\varphi)))
  % \]
  % \cite{Yu2021variationalcut} show that, among all densities in $\mathcal{F}$, the Cut posterior in \cref{eq:cut_posterior} is the best approximation to the Bayes posterior as measured by KL-divergence, that is,
  \begin{equation}\label{eqn:smi_as_F_minimiser}
    p_{\smi,\eta}(\varphi,\theta,\tilde\theta \mid Y,Z)=\arg\min_{q\in \mathcal{F}_{\smi,\eta}} \kl{q(\varphi,\theta,\tilde\theta)}{ p(\varphi,\theta,\tilde\theta\mid Y,Z)}.
  \end{equation}
\end{proposition}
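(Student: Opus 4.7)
The plan is to mirror the argument of \cite{Yu2021variationalcut} leading to \cref{eq:cut_constrained_optim}: I would reduce the constrained KL minimisation to a pointwise minimisation of the conditional $q(\theta\mid\varphi,\tilde\theta)$ by applying the chain rule for KL-divergence.

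First, I would use the marginal constraint defining $\mathcal{F}_{\smi,\eta}$ to write any $q$ in the class as $q(\varphi,\theta,\tilde\theta) = p_{\pow,\eta}(\varphi,\tilde\theta\mid Y,Z)\, q(\theta\mid\varphi,\tilde\theta)$. Then I would factor the reference density symmetrically, $p(\varphi,\theta,\tilde\theta\mid Y,Z) = p(\varphi,\tilde\theta\mid Y,Z)\, p(\theta\mid\varphi,\tilde\theta,Y,Z)$, and apply the chain rule for KL to obtain
\begin{align*}
\kl{q(\varphi,\theta,\tilde\theta)}{p(\varphi,\theta,\tilde\theta\mid Y,Z)}
&= \kl{p_{\pow,\eta}(\varphi,\tilde\theta\mid Y,Z)}{p(\varphi,\tilde\theta\mid Y,Z)} \\
&\quad + \mathbb{E}_{(\varphi,\tilde\theta)\sim p_{\pow,\eta}}\!\left[\kl{q(\theta\mid\varphi,\tilde\theta)}{p(\theta\mid\varphi,\tilde\theta,Y,Z)}\right].
\end{align*}
The first term depends only on $\eta$ and the data, not on $q$, so it is constant across $\mathcal{F}_{\smi,\eta}$. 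The inner KL in the second term is non-negative and vanishes precisely when $q(\theta\mid\varphi,\tilde\theta) = p(\theta\mid\varphi,\tilde\theta,Y,Z)$ almost everywhere, which therefore yields the unique minimiser of the whole objective.

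The remaining step is to identify $p(\theta\mid\varphi,\tilde\theta,Y,Z)$ with the factor $p(\theta\mid Y,\varphi)$ in \cref{eqn:smi_01}. This is where I expect the main obstacle, though it is really a point of clarification rather than a substantive technical difficulty: since $\tilde\theta$ does not appear in the generative model of \cref{eq:joint_toymodel}, I would fix the reference density $p(\varphi,\theta,\tilde\theta\mid Y,Z)$ to be the extension of that model by an auxiliary duplicate $\tilde\theta$ drawn from the same prior $p(\theta\mid\varphi)$ but not entering the $Y$-likelihood, matching the role $\tilde\theta$ plays in \cref{eqn:smi_01} and \cref{eq:powjoint_toymodel}. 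Under this extension $\tilde\theta \indep (\theta,Y,Z)\mid\varphi$, so $p(\theta\mid\varphi,\tilde\theta,Y,Z)=p(\theta\mid\varphi,Y)$, and the minimiser matches \cref{eqn:smi_01}. As sanity checks, at $\eta=0$ one recovers $p_{\cut}$, and at $\eta=1$, after marginalising $\tilde\theta$, the ordinary Bayes posterior, both as asserted immediately before the proposition.
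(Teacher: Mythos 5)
Your proof is correct and follows essentially the same route as the paper's: factor $q$ via the marginal constraint as $p_{\pow,\eta}(\varphi,\tilde\theta\mid Y,Z)\,q(\theta\mid\varphi,\tilde\theta)$, apply the chain rule for the \acrshort*{kl} divergence, note that the leading term is constant over $\mathcal{F}_{\smi,\eta}$, and minimise the remaining non-negative expectation by matching the conditional to $p(\theta\mid Y,\varphi)$. If anything you are more careful than the paper, which writes the leading term as $\kl{q(\varphi,\tilde\theta)}{p_{\pow,\eta}(\varphi,\tilde\theta\mid Y,Z)}$ (hence zero) and leaves implicit how the reference density extends to $\tilde\theta$; your explicit construction of the auxiliary $\tilde\theta$ and the resulting conditional independence $p(\theta\mid\varphi,\tilde\theta,Y,Z)=p(\theta\mid Y,\varphi)$ is a welcome clarification that does not change the argument.
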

\begin{proof}
  The following is similar to the proof of the corresponding result for the Cut model in \cite{Yu2021variationalcut}.
  For $q\in \mathcal{F}_{\smi,\eta}$, we have
  \[
    q(\varphi,\theta,\tilde\theta)=p_{\pow, \eta}(\varphi,\tilde\theta \mid Y,Z)q(\theta \mid \varphi,\tilde\theta),
  \]
  so it is sufficient to show that the \acrshort*{kl} divergence to the posterior is minimised by $q(\theta  \mid \varphi,\tilde\theta)=p(\theta \mid Y,\varphi)$ (as that gives $q=p_{\smi,\eta}$).
  We have,
  \begin{align*}
    \kl{q(\varphi,\tilde\theta)\,q(\theta \mid \varphi,\tilde\theta)}{ p(\varphi,\theta,\tilde\theta \mid Y,Z)} & = \kl{q(\varphi,\tilde\theta)}{ p_{\pow, \eta}(\varphi,\tilde\theta \mid Y,Z)}                                \\
                                                                                                                & \qquad +\quad E_{q(\varphi,\tilde\theta)}[\kl{q(\theta \mid \varphi,\tilde\theta)}{p(\theta \mid Y,\varphi)}] \\
                                                                                                                & =E_{p_{\pow, \eta} }[\kl{q(\theta \mid \varphi,\tilde\theta)}{p(\theta \mid Y,\varphi)}],
  \end{align*}
  and the argument of the expectation is non-negative and zero when $q(\theta \mid \varphi,\tilde\theta)=p(\theta \mid Y,\varphi), \varphi\in\Omega_\varphi, \theta, \tilde\theta\in \Omega_{\theta}$, so $q=p_{\smi,\eta}$ minimises the original target.
  %This characterisation motivates the variational approximation of the \acrshort*{smi}-posterior.
\end{proof}

\subsubsection*{Modulating prior feedback}\label{subsec:mod_prioir_feed_paper}

If a Cut is applied to a \textit{prior density} $p(\theta|\varphi)$, as in \cite{Liu2009modularization,Jacob2017together,Styring2017extensification} and we simply remove the prior factor at the imputation stage then all that remains in the imputation posterior distribution is the base measure. A detailed example is given in \cref{subsec:exp_rnd_eff}.
The ``imputation prior'' has been replaced with a constant, and this may be inappropriate in some settings.
However, we are free to choose the imputation prior and we should use this freedom, as \cite{Moss2022} illustrate. Here we outline how this is done in \acrshort*{smi}. See \cref{sec:prior_feedback} for detail.

Consider the generative model $\varphi \sim p(\cdot),\ \theta  \sim p(\cdot \mid \varphi)$ and $Y_i \sim p(\cdot \mid \varphi,\theta),\ i=1,...,n.$
This model is shown in the leftmost graph in \cref{fig:cut-prior} in \cref{sec:prior_feedback}. The posterior is
\[
  p(\varphi,\theta \mid Y)\propto p(Y \mid \varphi,\theta)p(\varphi)p(\theta \mid \varphi).
\]
The \acrshort*{smi}-posterior is
\begin{equation}\label{eq:smi_posterior_CUTPRIOR}
  p_{\smi,\eta}(\varphi,\theta,\tilde\theta \mid Y)=p_{\pow,\eta}(\varphi,\tilde\theta \mid Y)p(\theta \mid Y,\varphi),
\end{equation}
where now
\[
  p_{\pow,\eta}(\varphi,\tilde\theta \mid Y)\propto p(\varphi)p_\eta(\tilde\theta \mid \varphi)p(Y \mid \varphi,\tilde\theta).
\]
The imputation prior $p_\eta(\tilde\theta \mid \varphi)$ must satisfy $p_{\eta=1}(\tilde\theta \mid \varphi)=p(\tilde\theta \mid \varphi)$. Like the Bayes prior $p(\theta \mid \varphi)$, the ``Cut prior'', $p_{\eta=0}(\tilde\theta \mid \varphi)=\tilde p(\tilde\theta)$ say, is a modelling choice.
Typically $p(\theta \mid \varphi)$ is a Subjective Bayes prior elicited from physical considerations, but is misspecified, and $\tilde p(\tilde\theta)$ is a non-informative Objective Bayes prior.

This \acrshort*{smi}-posterior belief update which cuts feedback in a prior is order coherent in the sense of \cite{Bissiri2016} and \cite{Nicholls2022smi}.
\begin{proposition}\label{prop:smi_cut_prior_is_OK}
  The \acrshort*{smi} posterior in \cref{eq:smi_posterior_CUTPRIOR} with cut prior feedback is an order coherent belief update.
\end{proposition}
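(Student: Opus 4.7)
The plan mirrors the coherence argument for the data-feedback \acrshort*{smi} posterior in \cite{Carmona2020smi}, adapted to the prior-feedback setting. First I would recast $p_{\smi,\eta}$ in \cref{eq:smi_posterior_CUTPRIOR} as a Bissiri-type Gibbs update: with base prior $p_0(\varphi,\theta,\tilde\theta)=p(\varphi)\,p(\theta\mid\varphi)\,p_\eta(\tilde\theta\mid\varphi)$ and loss
\[
l(\varphi,\theta,\tilde\theta;Y) = -\log p(Y\mid\varphi,\tilde\theta) - \log p(Y\mid\varphi,\theta) + \log p(Y\mid\varphi),
\]
where $p(Y\mid\varphi)=\int p(Y\mid\varphi,\theta)\,p(\theta\mid\varphi)\,d\theta$, a direct rearrangement of \cref{eq:smi_posterior_CUTPRIOR} gives $p_{\smi,\eta}\propto p_0\exp(-l)$.

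Next, for a partition $Y=(Y^{(1)},Y^{(2)})$ of conditionally independent data, I would compute the intermediate belief $p_1=\psi(l(\cdot;Y^{(1)}),p_0)$ -- which by the Gibbs identification equals $p_{\smi,\eta}(\cdot\mid Y^{(1)})=p_{\pow,\eta}(\varphi,\tilde\theta\mid Y^{(1)})\,p(\theta\mid Y^{(1)},\varphi)$ -- and then apply $\psi(l(\cdot;Y^{(2)}),p_1)$, reading the marginal $\log p(Y^{(2)}\mid\varphi)$ in the loss against the updated belief, as $\log p(Y^{(2)}\mid Y^{(1)},\varphi)=\log\int p(Y^{(2)}\mid\varphi,\theta)\,p(\theta\mid Y^{(1)},\varphi)\,d\theta$. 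This is exactly the bookkeeping point made for the Cut posterior around \cref{eq:coherent_bissiri}, and is the only conceptual subtlety in the argument.

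With this in hand the verification splits cleanly over the imputation block $(\varphi,\tilde\theta)$ and the analysis factor $\theta$. Because the power posterior in \cref{eq:smi_posterior_CUTPRIOR} retains the ordinary (unpowered) likelihood $p(Y\mid\varphi,\tilde\theta)$, the $(\varphi,\tilde\theta)$-update is a \emph{standard} Bayes update with prior $p(\varphi)\,p_\eta(\tilde\theta\mid\varphi)$, and the $\theta$-update is likewise standard Bayes conditional on $\varphi$. Each is classically coherent, so composing the two sequential updates reproduces $p_{\pow,\eta}(\varphi,\tilde\theta\mid Y)\,p(\theta\mid Y,\varphi)=p_{\smi,\eta}(\cdot\mid Y)$, which is the required order-coherence identity.

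The main obstacle I anticipate is purely bookkeeping rather than mathematical: being explicit that in the second-stage loss the marginal $p(Y^{(2)}\mid\varphi)$ is evaluated against the updated belief $p_1(\theta\mid\varphi)$ rather than against the original prior $p(\theta\mid\varphi)$. Once that is handled the non-additive term $\log p(Y\mid\varphi)$ no longer obstructs coherence, and the result follows by combining two coherent Bayesian updates, exactly as in the data-feedback case.
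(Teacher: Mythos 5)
Your proposal is correct and follows essentially the same route as the paper: the paper's proof of \cref{prop:smi_cut_prior_is_OK} is literally ``replace $\tilde p(\tilde\theta)\to p_\eta(\tilde\theta\mid\varphi)$ in the proof of \cref{prop:cut_prior_is_OK}'', and that earlier proof identifies exactly your Gibbs loss $l=-\log p(Y\mid\varphi,\tilde\theta)-\log p(Y\mid\varphi,\theta)+\log p(Y\mid\varphi)$ and verifies prequential additivity by reading the marginal $p(Y^{(2)}\mid\varphi)$ against the updated belief $p(\theta\mid Y^{(1)},\varphi)$, which is precisely the bookkeeping point you flag as the only subtlety.
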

\begin{proof}
  See \cref{proof:cut_prior_is_OK}.
\end{proof}

Taking a \emph{normalised} family $p_\eta(\tilde\theta \mid \varphi),\ \eta\in [0,1]$ of interpolating priors ensures that the marginal prior for $p(\varphi)$ in the imputation doesn't depend on $\eta$. An un-normalised family such as $p_\eta(\varphi,\tilde\theta)\propto \tilde p(\tilde\theta)^{1-\eta} p(\tilde\theta \mid \varphi)^\eta$ has all the desired interpolating properties, but the marginal $p(\varphi)$ in the imputation stage will then depend on $\eta$.
In some settings (for example when working with normal priors with fixed variance) the two prior parameterisations may be equivalent as $\eta$ scales the variance.

\section{Variational Modular Inference}\label{sec:vi_modular}

We define a variational approximation for modular posteriors based on the reparametrisation approach.
Our strategy has an end-to-end training implementation which avoids two-stage procedures, but converges to the same solution.

\subsection[Variational Inference and Normalizing Flows]{Variational Inference and Normalizing Flows} \label{sec:vi_nf}

Applications of Variational Inference
%\todo{add comment on original motivation for VI} 
\citep{Jordan1999variational, Wainwright2008, Blei2017variational} were initially focused on \acrfull*{mfvi}.
This class of variational approximations is competitive with \acrshort*{mcmc} for prediction \citep{Wang2019VariationalMisspecification} but has disadvantages for uncertainty quantification in well specified models, making it less appealing for Bayesian inference for problems with small data sets where \acrshort*{mcmc} is feasible and well calibrated uncertainty measures are important.
%Although we are interested in treating misspecification, and misspecification error may dominate the approximation error and lead be competitive for prediction \citep{Wang-Blei-2019}, the management of misspecification is handled in our setting by the \acrshort*{smi} framework, so there is a strong emphasis in our setting on accurate approximation that supports the goals of \acrshort*{smi}.

Advances in variational methods have been motivated by its use in generative models in the Machine Learning literature and in particular in the context of \acrfullpl*{vae} \cite{Kingma2013vae,Kingma2019vaeintro} and applications in machine vision.
Variational families based on \acrfullpl*{nf} \citep{Rezende2015nf, Papamakarios2021normalizing, Kobyzev2020normalizing} developed in that context offer generative models which are much more expressive than \acrshort*{mfvi} and give better calibrated measures of uncertainty.
Adoption of \acrshortpl*{nf} in applications of statistical modelling and inference, where \acrshort*{mcmc} and \acrshort*{mfvi} are the de-facto approaches, has been more limited.
\acrfull*{svi} \citep{Hoffman2013svi} and \acrfull*{bbvi} \citep{Ranganath2014bbvi} offer efficient procedures to fit variational families which apply directly to \acrshort*{nf} parameterisations.
Recent advances include new methods for evaluating convergence and adequacy of variational approximation \citep{Yao2018yesbut,Xing19,Agrawal2020,Dhaka2020robust}.

\subsection{Variational Bayes in multi-modular models}\label{subsec:vi_multi}

We begin by giving a standard variational approximation to the Bayes posterior for the multi-modular model.
%This establishes notation, and will allow us to show that our variational-\acrshort*{smi} interpolates variational-Cut and variational-Bayes in the same way that \acrshort*{smi} interpolates the exact Cut and Bayes posteriors.
For concreteness, we use the multi-modular model in \cref{fig:toy_multimodular_model}.
Having established our methods on this class of models, extensions to other dependence structures are straightforward, as we illustrate in \cref{subsec:exp_rnd_eff}.

We take a parametrisation of the variational posterior in terms of a product
\begin{equation}\label{eq:q_modular_product}
  q_{\beta}(\varphi,\theta) = q_{\beta_1}(\varphi)q_{\beta_2}(\theta \mid \varphi),
\end{equation}
with each factor using a disjoint subset $\beta_1\in \Lambda_1$, $\beta_2\in \Lambda_2$ of a set of variational parameters $\beta=(\beta_1,\beta_2)$, with $\beta\in B$ and $B=\Lambda_1\times\Lambda_2$. Here $\Lambda_1=\Re^{L_1}$, $\Lambda_2=\Re^{L_2}$ and $B=\Re^{L_1+L_2}$ are typically high dimensional real spaces of variational parameters.

Our notation implies a \emph{flow}-based approach but captures a number of other parameterisations.
Let $\epsilon = (\epsilon_1, \epsilon_2)$ be a vector of continuous random variables distributed according to a \emph{base} distribution $p(\epsilon)=p(\epsilon_1, \epsilon_2)$, with $\epsilon_1\in \Re^{p_\varphi}$ and $\epsilon_2\in \Re^{p_\theta}$. We can for example take $p(\epsilon_1, \epsilon_2)$ to be the $\varphi,\theta$-prior.
Consider a diffeomorphism, $T: \Re^{p_\varphi+p_\theta}\to \Re^{p_\varphi+p_\theta}$ defined by concatenating the two diffeomorphisms expressing $\varphi$ and $\theta$, so that
\begin{align}
  \varphi_{(\beta_1,\epsilon)} & =T_1(\epsilon_1;\beta_1)\nonumber                                                                        \\
  \theta_{(\beta_2,\epsilon)}  & =T_2(\epsilon_2; \beta_2, \epsilon_1)\nonumber                                                           \\
  T(\epsilon; \beta)           & = \left(T_1(\epsilon_1;\beta_1), T_2(\epsilon_2;\beta_2, \epsilon_1). \right),\label{eq:bayes_transform}
\end{align}
For flow-based densities, $T_1: \Re^{p_\varphi}\to \Re^{p_\varphi}$ and $T_2: \Re^{p_\theta}\to \Re^{p_\theta}$ have properties listed in \cref{subsec:smi_flow}  \cite[see][Sec.~3]{Kobyzev2020normalizing} which allow us to sample, differentiate and evaluate the densities $q_\beta(\varphi,\theta)$ defined below. However, other familiar variational families such as \acrshort*{mfvi} can be expressed using \cref{eq:bayes_transform}.
Note that $T_2$ is a \emph{conditional transformation} that depends on $\epsilon_1$, so it can express correlation between $\varphi_{(\beta_1,\epsilon)}$ and $\theta_{(\beta_2,\epsilon)}$ (see \cref{sec:nf}).
In a normalising flow, $T_1$ and $T_2$ are compositions of diffeomorphisms, each with their own parameters. This increases the flexibility of the transformation.
%In \cref{subsec:smi_flow} we give more details of our flow implementation.

The Jacobian matrix, $J_{T}=\partial T/\partial\epsilon$ is block lower triangular, so its determinant
is a product of determinants of $J_{T_1} = \partial T_1/\partial \epsilon_1$ and $J_{T_2} = \partial T_2/\partial \epsilon_2$,
\begin{equation*}
  \left\vert J_T \right\vert = \left\vert J_{T_1} \right\vert \left\vert J_{T_2} \right\vert
\end{equation*}
with no cross dependence on $\beta_1,\beta_2$, so that $\nabla_{\beta_2} \log \left\vert J_{T_1} \right\vert =0$ and $\nabla_{\beta_1} \log \left\vert J_{T_2} \right\vert =0$.
The joint variational distribution produced by the flow is then
\begin{align*}
  q_{\beta}(\varphi, \theta) & = p(\epsilon_1, \epsilon_2) \left\vert J_T \right\vert ^{-1} \\
                             & =q_{\beta_1}(\varphi)q_{\beta_2}(\theta\mid \varphi),
\end{align*}
where
\begin{align}
  q_{\beta_1}(\varphi)            & = p(\epsilon_1) \left\vert J_{T_1} \right\vert ^{-1}, \label{eqn:q_lambda1_from_p_jacob}              \\
  q_{\beta_2}(\theta\mid \varphi) & =p(\epsilon_2\mid \epsilon_1) \left\vert J_{T_2} \right\vert ^{-1}.\label{eqn:q_lambda2_from_p_jacob}
\end{align}
We need to be able to evaluate the determinants of the Jacobians $J_{T_1}$ and $J_{T_2}$. This works for a \acrshort*{nf} because the matrices are lower trianglular. However, other simpler designs such as \acrshort*{mfvi} also admit straightforward evaluation.

The optimal variational parameters minimise the \acrshort*{kl} divergence to the posterior, but will not in general be unique.
Let
\begin{align}
  D^*_B & =\min_{\beta\in B} \kl{ q_{\beta}(\varphi, \theta) }{ p(\varphi, \theta \mid Z, Y) },\label{eqn:variational_bayes_standard_Dstar} \\
  \intertext{and}
  B^*   & = \{\beta\in B:  \kl{ q_{\beta}(\varphi, \theta) }{ p(\varphi, \theta \mid Z, Y) }=D^*_B\},\label{eqn:variational_bayes_standard}
\end{align}
and let $\beta^*=(\beta^*_1,\beta^*_2)\in B^*$ be a generic set of parameter values minimising the \acrshort*{kl} divergence.
The definition in \cref{eqn:variational_bayes_standard} is equivalent to maximising the \acrshort*{elbo},
\begin{equation}\label{eq:elbo_modular}
  \elbo_{\bayes} = \E_{(\varphi, \theta) \sim q_{\beta}(\varphi, \theta)}[ \log p(\varphi, \theta, Z, Y) - \log q_{\beta}(\varphi, \theta) ].
\end{equation}
Using the \emph{reparametrisation trick} and expanding the joint distribution
\begin{align*}
  \elbo_{\bayes} = \E_{\epsilon \sim p(\epsilon)}[ & \log p(\varphi_{(\beta_1,\epsilon)}, \theta_{(\beta_2,\epsilon)}, Z, Y) - \log q_{\beta}(\varphi_{(\beta_1,\epsilon)}, \theta_{(\beta_2,\epsilon)}) ]                                      \\
  = \E_{\epsilon \sim p(\epsilon)}[                & \log p(Z \mid \varphi_{(\beta_1,\epsilon)}) + \log p(Y \mid \varphi_{(\beta_1,\epsilon)}, \theta_{(\beta_2,\epsilon)}) + \log p(\varphi_{(\beta_1,\epsilon)}, \theta_{(\beta_2,\epsilon)}) \\
                                                   & - \log p(\epsilon) + \log \left\vert J_{T} \right\vert  ],
\end{align*}
and the gradients of the \acrshort*{elbo} with respect to the variational parameters $(\beta_1,\beta_2)$ are
\begin{align}
  \nabla_{\beta_1} \elbo_{\bayes} = \E_{\epsilon \sim p(\epsilon)}[ & \nabla_{\varphi} \left\{ \log p(Z \mid \varphi) + \log p(Y \mid \varphi, \theta) + \log p(\varphi) \right\} \nabla_{\beta_1} \{ \varphi \} \nonumber             \\
                                                                    & + \nabla_{\beta_1} \log \left\vert J_{T_1} \right\vert  ]  \label{eq:grad_elbo_l1_full},                                                                         \\
  \nabla_{\beta_2} \elbo_{\bayes} = \E_{\epsilon \sim p(\epsilon)}[ & \nabla_{\theta} \left\{ \log p(Y \mid \varphi, \theta) + \log p(\theta \mid \varphi) \right\} \nabla_{\beta_2} \{ \theta \}                            \nonumber \\
                                                                    & + \nabla_{\beta_2} \log \left\vert J_{T_2} \right\vert  ]. \label{eq:grad_elbo_l2_full}
\end{align}
These gradients are used in \acrlong*{svi} \citep{Hoffman2013svi} to obtain the optimal variational parameters $\beta^*\in B^*$ for approximation of the Bayes posterior.
%We also use \cref{eq:grad_elbo_l1_full,eq:grad_elbo_l2_full} to verify that our \acrshort*{sgd} algorithm for the \acrshort*{smi} posterior coincides with the Bayes posterior in the case $\eta=1$.
% We give the \acrshort*{sgd} algorithm for the more general setting of \acrshort*{smi} below.

\subsection{Variational SMI}\label{subsec:vsmi}

In this section we define our variational approximation to the \acrshort*{smi} posterior.
For this, we expand the variational distribution in \cref{eq:q_modular_product} to include the auxiliary parameter $\tilde\theta$.
Again, we parametrise the variational posterior as a product,
\begin{equation}\label{eq:q_modular_smi_product}
  q_{\lambda}(\varphi,\theta, \tilde\theta) = q_{\lambda_1}(\varphi) q_{\lambda_2}(\theta \mid \varphi) q_{\lambda_3}(\tilde\theta \mid \varphi)
\end{equation}
where each factor has its own parameters, $\lambda_1\in \Lambda_1$ and $\lambda_2,\lambda_3\in \Lambda_2$ where $\Lambda_1$ and $\Lambda_2$ are defined above. Let $\lambda=(\lambda_1,\lambda_2,\lambda_3)$ with $\lambda\in \Lambda$ and $\Lambda=\Lambda_1\times \Lambda_2\times \Lambda_2$ so that $\Lambda=\Re^{L_1+2L_2}$. The parameters of $q_{\lambda_2}(\theta \mid \varphi)$ and $q_{\lambda_3}(\tilde\theta \mid \varphi)$ both match the variational Bayes parameterisation so we write $(\lambda_1,\lambda_2)\in B$ and $(\lambda_1,\lambda_3)\in B$.
Let
\[
  \mathcal{Q}=\{q_\lambda(\varphi,\theta,\tilde\theta); \lambda\in \Lambda\}
\]
denote the class of densities in our variational family.

\subsubsection{The variational-SMI approximation and its properties}

The purpose of \acrshort*{smi} is to control the flow of information from the $Y$-module into the posterior distribution for $\varphi$.
This leads us to define three basic properties that a useful variational approximation of the \acrshort*{smi} posterior must possess:
\begin{description}
  \item[(P1)] (expresses Cut) at $\eta=0$ the optimal variational posterior $q_{\lambda_1^*}(\varphi)$ is completely independent of the generative model for $Y$,
    with \[\lambda_1^*\in\{\lambda_1\in\Lambda_1: \kl{q_{\lambda_1}(\varphi)}{p(\varphi|Z)}=d^*_{\cut}\}\] and $d^*_{\cut}=\min_{\lambda_1\in\Lambda_1}\kl{q_{\lambda_1}(\varphi)}{p(\varphi|Z)}$.
  \item[(P2)] (expresses Bayes) at $\eta=1$ the marginal variational \acrshort*{smi} posterior $q_{\lambda^*_1,\lambda_2^*}(\varphi,\theta)$ is equal to the variational Bayes posterior, $q_{\beta^*_1,\beta^*_2}(\varphi,\theta)$ in \cref{eqn:variational_bayes_standard}, so $(\lambda_1^*,\lambda^*_2)\in B^*$;
  \item[(P3)] (approximates \acrshort*{smi}) for $\eta\in [0,1]$, if $p_{\smi,\eta}\in\mathcal{Q}$ then the variational approximation is equal to the target \acrshort*{smi} posterior, so $q_{\lambda^*}=p_{\smi,\eta}$.
\end{description}

Properties (P1-2) require $q_{\lambda^*}$ to interpolate a variational approximation to the Cut-posterior (removing all feedback from $Y$ into the variational approximation to the distribution of $\varphi$) and our original variational approximation to the Bayes posterior.
We will see that a standard variational approximation to the \acrshort*{smi} posterior based on the \acrshort*{kl} divergence between $q_\lambda$ and $p_{\smi,\eta}$ cannot satisfy these properties. We give a variational procedure based on the loss $\mathcal{L}^{( \smi, \eta)}(\lambda)$ in \cref{eq:vsmi_loss}, and show in \cref{prop:stop-gradient-loss} that if we take a variational approximation to $p_{\smi,\eta}$ minimising this loss then our variational approximation satisfies properties (P1-3).

\subsubsection{Defining the variational family}
As in \cref{subsec:vi_multi}, our notation it set up for flow-based approximation of the \acrshort*{smi} posterior, but captures other variational families such as \acrshort*{mfvi}.
We expand the base distribution and diffeomorphism to accommodate the auxiliary $\tilde\theta$.
The distributions $p(\epsilon_1)$ and $p(\epsilon_2 \mid \epsilon_1)$ and the transformations $T_1$ and $T_2$ are unchanged from \cref{subsec:vi_multi}. %This choice delivers property (P2).

Let $\epsilon=(\epsilon_1, \epsilon_2, \epsilon_3)$ be the vector of random variables for our base distribution, $\epsilon\sim p(\epsilon)$, with $\epsilon_1$ and $\epsilon_2$ as before and $\epsilon_3\sim \epsilon_2$ so that $\epsilon_3\in \Re^{p_\theta}$.
Consider the extended diffeomorphism $T$ defined by the transformations,
\begin{align}
  \varphi_{(\lambda_1,\epsilon)}      & =T_1(\epsilon_1;\lambda_1)\nonumber                                                                                                                     \\
  \theta_{(\lambda_2,\epsilon)}       & =T_2(\epsilon_2; \lambda_2, \epsilon_1)\nonumber                                                                                                        \\
  \tilde\theta_{(\lambda_3,\epsilon)} & =T_2(\epsilon_3; \lambda_3, \epsilon_1)\nonumber                                                                                                        \\
  T(\epsilon; \lambda)                & = \left( T_1(\epsilon_1;\lambda_1) ,\; T_2(\epsilon_2;\lambda_2, \epsilon_1) ,\; T_2(\epsilon_3;\lambda_3, \epsilon_1) \right).\label{eq:smi_transform}
\end{align}
See \cref{eq:smi_transform_details} in \cref{sec:nf} for further details of these maps in a generic \acrshort*{nf} setting.

The Jacobian of the transformation, $J_{T}=\partial T/\partial\epsilon$, is block lower triangular as before, so its determinant factorises $\left\vert J_{T} \right\vert  = \left\vert J_{T_1}(\epsilon_1) \right\vert \, \left\vert J_{T_2}(\epsilon_1,\epsilon_2) \right\vert \, \left\vert J_{T_2}(\epsilon_1,\epsilon_3) \right\vert$ where we draw attention to the different arguments in the Jacobian factors involving $T_2$ but omit the $\lambda$-dependence.
We give more details of the transformation $T$ in \cref{subsec:smi_flow}.

Our variational family approximating the \acrshort*{smi} posterior $p_{\smi,\eta}$ has variational parameters $\lambda=(\lambda_1,\lambda_2,\lambda_3)$ and conditional independence structure
\begin{align}\label{eqn:q-lambda-smi-var}
  q_{\lambda}(\varphi, \theta, \tilde\theta) & = p(\epsilon_1, \epsilon_2, \epsilon_3) \left\vert J_{T} \right\vert ^{-1}
  \\
                                             & =q_{\lambda_1}(\varphi)q_{\lambda_2}(\theta\mid \varphi)q_{\lambda_3}(\tilde\theta\mid \varphi),\nonumber
\end{align}
where $q_{\lambda_1}$ and $q_{\lambda_2}$ are given in \cref{eqn:q_lambda1_from_p_jacob,eqn:q_lambda2_from_p_jacob} (replacing $\beta\to \lambda$) and
\begin{equation}
  q_{\lambda_3}(\tilde\theta\mid \varphi)=p(\epsilon_3\mid \epsilon_1) \left\vert J_{T_2} \right\vert ^{-1}.
\end{equation}

%The subspace $(\lambda_1,\lambda_2)\in B$ coincides with the Bayes parameterisation, so
% \[
% \cup_{(\lambda_1,\lambda_2,\lambda_3)\in\Lambda}\{(\lambda_1,\lambda_2)\}=B,
% \]
% and the same for $(\lambda_1,\lambda_3)\in B$.
% Finally, let
% \[
% B(\lambda_1)=\{\lambda_2: (\lambda_1,\lambda_2)\in B\}
% \]
% denote the space of $\lambda_2$-values given $\lambda_1$.
% Similarly, $\lambda_3\in B(\lambda_1)$ when $(\lambda_1,\lambda_3)\in B$.

\subsubsection{The standard variational loss does not satisfy Properties (P1-2)}
A naive application of variational approximation to \acrshort*{smi} would minimise the \acrshort*{kl} divergence to the \acrshort*{smi} posterior at $\lambda^*=(\lambda^*_1,\lambda^*_2,\lambda^*_3)$ where
\begin{equation}\label{eqn:naive_VI_KL_basic}
  \lambda^* = \argmin_{\lambda\in \Lambda} \kl{ q_{\lambda}(\varphi, \theta, \tilde\theta) }{ p_{ \smi, \eta}(\varphi, \theta, \tilde\theta \mid Z, Y) },
\end{equation}
(ignoring non-uniqueness for brevity).
However, this presents two problems, one of principle and one of practice.

The principle of \acrshort*{smi} is to control the flow of information from the $Y$-module into the posterior distribution for $\varphi$.
This is lost in this setup.
The \acrshort*{kl}-divergence in \cref{eqn:naive_VI_KL_basic} is
\begin{align}\label{eqn:naive_VI_KL_expand}
  \kl{ q_{\lambda} }{ p_{ \smi, \eta}}
   & =\kl{q_{\lambda_1,\lambda_3}(\varphi,\tilde\theta) }{p_{\pow, \eta}(\varphi,\tilde\theta \mid Y,Z)}\nonumber
  \\
   & \qquad+\quad E_{\varphi\sim q_{\lambda_1}}[\kl{q_{\lambda_2}(\theta\mid \varphi)}{p(\theta \mid Y,\varphi)}].
\end{align}
The first term allows controlled feedback from the $Y$-module, as the $Y$-dependence in the power posterior $p_{\pow, \eta}$ is controlled by $\eta$ and vanishes entirely when $\eta=0$.
However, the second term leaks information from the $Y$-module to inform $q_{\lambda_1}(\varphi)$, even in the case $\eta=0$, therefore violating property (P1).

This variational approximation will not in general satisfy property (P2) either. In order for property (P2) to be satisfied at $\eta=1$ we must have $(\lambda^*_1,\lambda^*_2)=(\beta^*_1,\beta^*_2)$ for some $(\beta^*_1,\beta^*_2)\in B^*$ whenever $(\lambda^*_1,\lambda^*_2,\lambda^*_3)$ satisfy \cref{eqn:naive_VI_KL_basic} for some $\lambda^*_3\in \Lambda_2$, that is, the marginal variational \acrshort*{smi} distribution for $(\varphi,\theta)$ must coincide with one of the variational Bayes solutions. The optimal $\lambda^*$ minimise \cref{eqn:naive_VI_KL_expand}, so they maximise the \acrshort*{elbo},
\begin{align}\label{eq:elbo_naive_smi}
  \elbo_{\smi \text{naive}} = & \E_{(\varphi, \tilde\theta) \sim q_{\lambda_1,\lambda_3}(\varphi, \tilde\theta)}[ \log p_{\pow, \eta}(\varphi, \tilde\theta, Z, Y) - \log q_{\lambda_1,\lambda_3}(\varphi, \tilde\theta) ] \nonumber \\
                              & + \E_{(\varphi, \theta) \sim q_{\lambda_1,\lambda_2}(\varphi, \theta)}[ \log p(\varphi, \theta, Y) - \log q_{\lambda_2}(\theta \mid \varphi) ]                                                       \\
                              & - \E_{\varphi \sim q_{\lambda_1}(\varphi)}[ \log p(Y, \varphi) ]. \nonumber
\end{align}
Since the $\lambda^*$ parameters solve $\nabla_\lambda \elbo_{\smi \text{naive}}=0$ and the $\beta^*$ parameters solve $\nabla_\beta \elbo_{\bayes}=0$, a necessary condition for (P2) is that
$\left.\nabla_\beta \elbo_{\bayes}\right\vert_{\beta=(\lambda^*_1,\lambda^*_2)}=0$
(ie \cref{eq:grad_elbo_l1_full,eq:grad_elbo_l2_full} at $\beta=(\lambda^*_1,\lambda^*_2)$) at $\eta=1$.
However, using the reparameterisation trick, the $\elbo_{\smi \text{naive}}$-gradients can be written
\begin{align}
  \nabla_{\lambda_1} \elbo_{\smi \text{naive}} = \E_{\epsilon \sim p(\epsilon)}[ & \nabla_{\varphi} \left\{ \log p(Z \mid \varphi) + \eta \log p(Y \mid \varphi, \tilde\theta) + \log p(\varphi) \right\} \nabla_{\lambda_1} \{ \varphi \} \nonumber                         \\
                                                                                 & + \nabla_{\lambda_1} \log \left\vert J_{T_1}(\epsilon_1) \right\vert \nonumber                                                                                                            \\
                                                                                 & + \nabla_{\varphi} \left\{ \log p(Y \mid \varphi, \theta) + \log p(Y\mid \varphi) \right\} \nabla_{\lambda_1} \{ \varphi \} ]  \label{eq:grad_elbo_l1_naive_smi},                         \\
  \nabla_{\lambda_2} \elbo_{\smi \text{naive}} = \E_{\epsilon \sim p(\epsilon)}[ & \nabla_{\theta} \left\{ \log p(Y \mid \varphi, \theta) + \log p(\theta \mid \varphi) \right\} \nabla_{\lambda_2} \{ \theta \}                            \nonumber                        \\
                                                                                 & + \nabla_{\lambda_2} \log \left\vert J_{T_2}(\epsilon_1,\epsilon_2) \right\vert  ]. \label{eq:grad_elbo_l2_naive_smi}                                                                     \\
  \nabla_{\lambda_3} \elbo_{\smi \text{naive}} = \E_{\epsilon \sim p(\epsilon)}[ & \nabla_{\theta} \left\{ \eta \log p(Y \mid \varphi, \tilde\theta) + \log p(\tilde\theta \mid \varphi) \right\} \nabla_{\lambda_3} \{ \tilde\theta \}                            \nonumber \\
                                                                                 & + \nabla_{\lambda_3} \log \left\vert J_{T_2}(\epsilon_1,\epsilon_2) \right\vert  ]. \label{eq:grad_elbo_l3_naive_smi}
\end{align}
If these equations and $\left.\nabla_\beta \elbo_{\bayes}\right\vert_{\beta=(\lambda^*_1,\lambda^*_2)}=0$
all hold at at $\eta=1$ then
\[
  \E_{\epsilon \sim p(\epsilon)}\left[\nabla_{\varphi} \left\{ \log p(Y \mid \varphi, \tilde\theta) + \log p(Y\mid \varphi) \right\} \nabla_{\lambda_1} \{ \varphi \}\right]_{\lambda=\lambda^*}=0.
\]
Our variational framework has to satisfy (P2) for every target $p_{\smi,\eta}$ and every variational family $\mathcal{Q}$. However, if we target the loss in \cref{eqn:naive_VI_KL_basic} then $\lambda^*$ would have to satisfy an over-determined system of equations at $\eta=1$ and this will in general have no solutions.

We learn from this that the loss we seek for the variational \acrshort*{smi} approximation is not captured by the \acrshort*{kl}-divergence in \cref{eqn:naive_VI_KL_basic}.
However, there is a second practical problem with carrying out \acrlong*{svi} based on this naive variational loss.
In practice, in order to minimise \cref{eqn:naive_VI_KL_expand}, we maximise $\elbo_{\smi,\text{naive}}$ in \cref{eq:elbo_naive_smi}
using a Monte Carlo estimate of its gradients.
The last term in \cref{eq:elbo_naive_smi} involves the intractable $E_{\varphi\sim q_{\lambda_1}}[\log(p(Y, \varphi)]$, making the $\lambda_1$-variation unrealisable in practice.

\subsubsection{Loss for variational-SMI}

One way to characterise variational \acrshort*{smi} is by generalising the two-stage optimisation approach given by \cite{Yu2021variationalcut} for the Cut posterior. We will see that this approach satisfies properties (P1-3), and that the optimal variational parameters are given by minimising a customised variational loss.
Let
\begin{equation}\label{eqn:distance_to_set}
  d(q_{\lambda},\mathcal{F}_{\smi,\eta})=\min_{\tilde q\in \mathcal{F}_{\smi,\eta}}\kl{q_{\lambda}}{\tilde q}.
\end{equation}
define the divergence between the density $q_{\lambda}$ and the set of densities $\mathcal{F}_{\smi,\eta}$.

\begin{proposition}\label{prop:var_smi_distance}
  The divergence defined in \cref{eqn:distance_to_set} can be written
  \begin{equation}\label{eqn:var_smi_distance}
    d(q_{\lambda},\mathcal{F}_{\smi,\eta})=\kl{q_{\lambda_1, \lambda_3}(\varphi, \tilde\theta)}{p_{\pow, \eta}(\varphi,\tilde\theta \mid Y,Z)},
  \end{equation}
  and hence does not depend on $\lambda_2$.
\end{proposition}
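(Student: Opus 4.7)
The plan is to chain-rule the KL divergence on the joint $(\varphi,\theta,\tilde\theta)$ factorisation that is forced on $\tilde q \in \mathcal{F}_{\smi,\eta}$ by the defining constraint of $\mathcal{F}_{\smi,\eta}$. Every $\tilde q \in \mathcal{F}_{\smi,\eta}$ factorises as
\[
\tilde q(\varphi,\theta,\tilde\theta) = p_{\pow,\eta}(\varphi,\tilde\theta \mid Y,Z)\, \tilde q(\theta \mid \varphi,\tilde\theta),
\]
and by the conditional independence built into the variational family \cref{eqn:q-lambda-smi-var}, the corresponding marginal of $q_\lambda$ is $q_\lambda(\varphi,\tilde\theta) = q_{\lambda_1}(\varphi)q_{\lambda_3}(\tilde\theta\mid\varphi) = q_{\lambda_1,\lambda_3}(\varphi,\tilde\theta)$, while the corresponding conditional is $q_\lambda(\theta \mid \varphi,\tilde\theta) = q_{\lambda_2}(\theta \mid \varphi)$.

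Next I would apply the standard chain rule for KL to obtain
\[
\kl{q_\lambda}{\tilde q} = \kl{q_{\lambda_1,\lambda_3}(\varphi,\tilde\theta)}{p_{\pow,\eta}(\varphi,\tilde\theta\mid Y,Z)} + E_{q_{\lambda_1,\lambda_3}}\!\left[\kl{q_{\lambda_2}(\theta\mid\varphi)}{\tilde q(\theta\mid\varphi,\tilde\theta)}\right].
\]
The first term is fixed as $\tilde q$ varies over $\mathcal{F}_{\smi,\eta}$, since $\mathcal{F}_{\smi,\eta}$ pins down the $(\varphi,\tilde\theta)$-marginal. The second term is a non-negative expectation of a conditional KL and is the only part that depends on the free conditional $\tilde q(\theta\mid\varphi,\tilde\theta)$.

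I would then minimise: the choice $\tilde q(\theta\mid\varphi,\tilde\theta) = q_{\lambda_2}(\theta\mid\varphi)$ is measurable in $(\varphi,\tilde\theta)$, integrates to one, and makes the integrand identically zero, so the infimum is attained and equals the first term. Putting this $\tilde q$ back together gives a bona fide element of $\mathcal{F}_{\smi,\eta}$ of the form $p_{\pow,\eta}(\varphi,\tilde\theta\mid Y,Z)\,q_{\lambda_2}(\theta\mid\varphi)$, confirming feasibility. Hence
\[
d(q_\lambda,\mathcal{F}_{\smi,\eta}) = \kl{q_{\lambda_1,\lambda_3}(\varphi,\tilde\theta)}{p_{\pow,\eta}(\varphi,\tilde\theta\mid Y,Z)},
\]
which manifestly does not involve $\lambda_2$.

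There is no real obstacle; the only subtlety worth stating explicitly in the write-up is that the constraint defining $\mathcal{F}_{\smi,\eta}$ only fixes the $(\varphi,\tilde\theta)$-marginal, leaving the conditional for $\theta$ completely free. This is what allows the second KL term to be driven to zero and lets us conclude that the minimiser is attained in $\mathcal{F}_{\smi,\eta}$ (and not merely an infimum), while simultaneously explaining why $\lambda_2$ drops out of the divergence. This independence is precisely what will later be exploited to repair property (P1) that the naive ELBO failed to preserve.
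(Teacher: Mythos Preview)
Your proposal is correct and follows essentially the same route as the paper: factor $\tilde q\in\mathcal{F}_{\smi,\eta}$ as $p_{\pow,\eta}(\varphi,\tilde\theta\mid Y,Z)\,\tilde q(\theta\mid\varphi,\tilde\theta)$, apply the chain rule for KL, and minimise the expected conditional KL by choosing $\tilde q(\theta\mid\varphi,\tilde\theta)=q_{\lambda_2}(\theta\mid\varphi)$. Your explicit feasibility check and remark on why $\lambda_2$ drops out are slightly more detailed than the paper's version but add nothing new to the argument.
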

\begin{proof}
  See \cref{proof:var_smi_distance}.
\end{proof}

We now define the optimal variational parameters.
These will minimise divergence from distributions in $\mathcal{F}_{\smi,\eta}$ and otherwise approximate \acrshort*{smi}.
First, exploiting \cref{prop:var_smi_distance}, $(\lambda_1,\lambda_3)$ minimise \cref{eqn:var_smi_distance}. Let
\begin{align}
  d^*_{\smi}        & =\min_{(\lambda_1,\lambda_3)\in B}\kl{q_{\lambda_1, \lambda_3}(\varphi, \tilde\theta)}{p_{\pow, \eta}(\varphi,\tilde\theta \mid Y,Z)}\nonumber \\
  \Lambda^*_{(1,3)} & = \{(\lambda_1,\lambda_3)\in B:
  \kl{q_{\lambda_1, \lambda_3}(\varphi, \tilde\theta)}{p_{\pow, \eta}(\varphi,\tilde\theta \mid Y,Z)}=d^*_{\smi}\}.
  \label{eqn:lambda13-star-defn}\end{align}
Secondly, $\lambda_2$ is chosen for best approximation of $p_{ \smi, \eta}$ at fixed $(\lambda^*_1,\lambda^*_3)\in \Lambda^*_{(1,3)}$. Let
\begin{align}
  D^*_{\smi}(\lambda^*_1)      & =\min_{\lambda_2\in \Lambda_2} E_{\varphi\sim q_{\lambda^*_1}}[\kl{q_{\lambda_2}(\theta\mid \varphi)}{p(\theta \mid Y,\varphi)}]\nonumber                                                    \\
  \Lambda^*_{(2)}(\lambda^*_1) & =\{\lambda_2\in \Lambda_2: E_{\varphi\sim q_{\lambda^*_1}}[\kl{q_{\lambda_2}(\theta\mid \varphi)}{p(\theta \mid Y,\varphi)}]  = D^*_{\smi}(\lambda^*_1)\}\label{eqn:lambda2-star-defn-equiv}
\end{align}
The following proposition shows that $\lambda^*_2\in \Lambda^*_{(2)}(\lambda^*_1)$ targets a good fit to $p_{ \smi, \eta}$.
\begin{proposition}\label{prop:var_smi_lambda2}
  The set $\Lambda^*_{(2)}(\lambda^*_1)$ defined in \cref{eqn:lambda2-star-defn-equiv} is equivalently
  \begin{align}
    \tilde D^*_{\smi}(\lambda^*_1) & =\min_{\lambda_2\in \Lambda_2} \kl{q_{(\lambda^*_1,\lambda_2,\lambda^*_3)}}{p_{ \smi, \eta}}\nonumber                                                       \\
    \Lambda^*_{(2)}(\lambda^*_1)   & =\{\lambda_2\in \Lambda_2:  \kl{q_{(\lambda^*_1,\lambda_2,\lambda^*_3)}}{p_{ \smi, \eta}} = \tilde D^*_{\smi}(\lambda^*_1)\}. \label{eqn:lambda2-star-defn}
  \end{align}
\end{proposition}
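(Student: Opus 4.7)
The plan is to show that, with $(\lambda^*_1,\lambda^*_3)$ fixed, the KL divergence $\kl{q_{(\lambda^*_1,\lambda_2,\lambda^*_3)}}{p_{\smi,\eta}}$ differs from the conditional term $E_{\varphi\sim q_{\lambda^*_1}}[\kl{q_{\lambda_2}(\theta\mid\varphi)}{p(\theta\mid Y,\varphi)}]$ only by an additive constant that does not depend on $\lambda_2$. Since both objectives share the same argmin set, \cref{eqn:lambda2-star-defn-equiv} and \cref{eqn:lambda2-star-defn} define the same set $\Lambda^*_{(2)}(\lambda^*_1)$.

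First I would write out the factorisations of both densities: from \cref{eqn:q-lambda-smi-var},
\[
q_{(\lambda^*_1,\lambda_2,\lambda^*_3)}(\varphi,\theta,\tilde\theta)=q_{\lambda^*_1,\lambda^*_3}(\varphi,\tilde\theta)\,q_{\lambda_2}(\theta\mid\varphi),
\]
and from \cref{eqn:smi_01},
\[
p_{\smi,\eta}(\varphi,\theta,\tilde\theta\mid Y,Z)=p_{\pow,\eta}(\varphi,\tilde\theta\mid Y,Z)\,p(\theta\mid Y,\varphi).
\]
Note in particular that the $\theta$-conditional on the right-hand side depends only on $\varphi$, not on $\tilde\theta$, which is what allows the decomposition to go through cleanly.

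Next I would apply the chain rule for KL divergence to this product structure, yielding
\begin{align*}
\kl{q_{(\lambda^*_1,\lambda_2,\lambda^*_3)}}{p_{\smi,\eta}}
&= \kl{q_{\lambda^*_1,\lambda^*_3}(\varphi,\tilde\theta)}{p_{\pow,\eta}(\varphi,\tilde\theta\mid Y,Z)} \\
&\qquad + E_{(\varphi,\tilde\theta)\sim q_{\lambda^*_1,\lambda^*_3}}\bigl[\kl{q_{\lambda_2}(\theta\mid\varphi)}{p(\theta\mid Y,\varphi)}\bigr].
\end{align*}
Since the conditional KL inside the expectation depends on $(\varphi,\tilde\theta)$ only through $\varphi$, the expectation collapses to $E_{\varphi\sim q_{\lambda^*_1}}[\kl{q_{\lambda_2}(\theta\mid\varphi)}{p(\theta\mid Y,\varphi)}]$, using $q_{\lambda^*_1}(\varphi)=\int q_{\lambda^*_1,\lambda^*_3}(\varphi,\tilde\theta)\,d\tilde\theta$.

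The first term on the right is independent of $\lambda_2$, so minimising over $\lambda_2\in\Lambda_2$ is equivalent to minimising the second term alone. This gives $\tilde D^*_{\smi}(\lambda^*_1)=D^*_{\smi}(\lambda^*_1)+C(\lambda^*_1,\lambda^*_3)$ for a constant $C(\lambda^*_1,\lambda^*_3)$, and identifies the argmin sets, completing the proof. There is no real obstacle here beyond being careful with the chain rule; the one subtlety worth flagging is that $p(\theta\mid Y,\varphi)$ does not depend on $\tilde\theta$, which is exactly the structural feature of $p_{\smi,\eta}$ that makes the $\lambda_2$-optimisation decouple.
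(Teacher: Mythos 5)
Your proof is correct and follows essentially the same route as the paper, which simply cites the already-established decomposition in \cref{eqn:naive_VI_KL_expand} (your chain-rule expansion rederives exactly that identity) and observes that the first term is $\lambda_2$-free. Your explicit remarks about $p(\theta\mid Y,\varphi)$ not depending on $\tilde\theta$ and the expectation collapsing from $q_{\lambda^*_1,\lambda^*_3}$ to $q_{\lambda^*_1}$ are the same structural facts the paper relies on implicitly.
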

\begin{proof}
  Expand the \acrshort*{kl} divergence in \cref{eqn:lambda2-star-defn} using \cref{eqn:naive_VI_KL_expand} and substitute
  $(\lambda_1,\lambda_3)=(\lambda^*_1,\lambda^*_3)$.
  The first term does not depend on $\lambda_2$ and the second term gives \cref{eqn:lambda2-star-defn-equiv}.
\end{proof}

%We set out to find a variational approximation minimising a well defined loss and interpolating between variational approximations to the Cut and Bayes posteriors.
We now define variational \acrshort*{smi} and demonstrate (P1-3).
\begin{definition}\label{defn:var-smi}
  \emph{(Variational \acrshort*{smi})}
  A variational \acrshort*{smi} posterior density is a density $q_{\lambda^*}(\varphi,\theta,\tilde\theta)$ parameterised in  \cref{eq:q_modular_smi_product} with $\lambda^*\in\Lambda^*$ where
  \begin{equation}\label{eqn:smi-lambda-star}
    \Lambda^*=\bigcup_{(\lambda^*_1,\lambda^*_3)\in \Lambda^*_{(1,3)}}\left(\bigcup_{\lambda^*_2\in\Lambda^*_{(2)}(\lambda^*_1)}\{(\lambda^*_1,\lambda^*_2,\lambda^*_3)\}\right),
  \end{equation}
  and $\Lambda^*_{(1,3)}$ and $\Lambda^*_{(2)}(\lambda^*_1)$ are defined in \cref{eqn:lambda13-star-defn,eqn:lambda2-star-defn-equiv} respectively.
\end{definition}

% \begin{proposition}\label{prop:var_smi_is_cut_at_eta0}
%   Variational \acrshort*{smi} satisfies property (P1).
%   Let
%   \begin{align*}
%     \tilde\lambda^*_1 & =\arg\min_{\lambda_1} \kl{q_{\lambda_1}(\varphi)}{p(\varphi \mid Z)}                   \\
%     \tilde\lambda_2   & =\arg\min_{\lambda_2} \kl{q_{\lambda_2}(\theta\mid \varphi)}{p(\theta \mid Y,\varphi)}
%   \end{align*}
%   and $q_{\tilde\lambda^*_1,\tilde\lambda^*_2}(\varphi,\theta)=q_{\tilde\lambda^*_1}(\varphi)q_{\tilde\lambda^*_2}(\theta\mid \varphi)$ give a direct variational approximation to the Cut-posterior
%   $p_{\cut}(\varphi,\theta \mid Y,Z)=p(\varphi \mid Z)\,p(\theta \mid Y,\varphi)$.
%   When $\eta=0$ we find that $q_{\lambda^*_1}(\varphiremk:smi-defn-eta-depend)=q_{\tilde\lambda^*_1}(\varphi)$ and $q_{\lambda^*_2}(\theta\mid \varphi)=q_{\tilde\lambda^*_2}(\theta\mid \varphi)$.
%   It follows that variational \acrshort*{smi} targets the Cut-posterior and in particular $q_{\lambda^*_1}$ does not depend in any way on $p(Y\mid \varphi,\theta)$ or $p(\theta\mid \varphi)$.
% \end{proposition}

\begin{remark}\label{remk:smi-defn-eta-depend}
  Our discussion in this section takes $\eta$ fixed. As we vary $\eta$ the target $p_{\smi,\eta}$ varies, so the set of optimal variational parameters $\Lambda^*$ depends on $\eta$. Below we write $\Lambda^*(\eta)$ when we need to emphasise this dependence.
\end{remark}

\begin{remark}\label{remk:var-smi-roots}
  The variational \acrshort*{smi} parameters $\lambda^*\in\Lambda^*$ are roots of the equations
  \begin{align}
    \nabla_{(\lambda_1,\lambda_3)}\kl{q_{\lambda_1,\lambda_3}}{p_{\pow, \eta}}                                       & =0 \label{eqn:lam-star-roots1a} \\
    \nabla_{\lambda_2} E_{\varphi\sim q_{\lambda_1}}\kl{q_{\lambda_2}(\theta\mid \varphi)}{p(\theta \mid Y,\varphi)} & =0\label{eqn:lam-star-roots1b}
  \end{align}
  with positive curvature.
  We have not substituted $\lambda_1=\lambda^*_1$ in \cref{eqn:lam-star-roots1b}. As a system, any $\lambda_1$ satisfying \cref{eqn:lam-star-roots1b} is required to be a root (with $\lambda^*_3$) of \cref{eqn:lam-star-roots1a} so the system imposes this condition. This will allow us to solve these equations as a single system using \acrshort*{sgd} on the loss $\mathcal{L}^{\smi,\eta}$ in \cref{prop:stop-gradient-loss} below, avoiding a two-stage procedure.
\end{remark}

\begin{remark}\label{remk:var-smi-roots-gradients}
  Consider the variational family defined in \cref{eqn:q-lambda-smi-var}.
  Using the reparametrisation trick and expanding terms, \cref{eqn:lam-star-roots1a,eqn:lam-star-roots1b} are
  \begin{align}
    0 = \E_{\epsilon \sim p(\epsilon)}[ & \nabla_{\varphi} \left\{ \log p(Z \mid \varphi) + \eta \log p(Y \mid \varphi, \tilde\theta) + \log p(\varphi) \right\} \nabla_{\lambda_1} \{ \varphi \} \nonumber
    \\
                                        & + \nabla_{\lambda_1} \log \left\vert J_{T_1}(\epsilon_1) \right\vert  ],  \label{eq:vsmi_roots_l1}                                                                   \\
    0 = \E_{\epsilon \sim p(\epsilon)}[ & \nabla_{\theta} \left\{ \log p(Y \mid \varphi, \theta) + \log p(\theta \mid \varphi) \right\} \nabla_{\lambda_2} \{ \theta \} \nonumber                              \\
                                        & + \nabla_{\lambda_2} \log \left\vert J_{T_2}(\epsilon_1,\epsilon_2) \right\vert  ] \label{eq:vsmi_roots_l2}                                                          \\
    0 = \E_{\epsilon \sim p(\epsilon)}[ & \nabla_{\tilde\theta} \left\{ \eta \log p(Y \mid \varphi, \tilde\theta) + \log p(\tilde\theta \mid \varphi) \right\} \nabla_{\lambda_3} \{ \tilde\theta \} \nonumber \\
                                        & + \nabla_{\lambda_3} \log \left\vert J_{T_2}(\epsilon_1,\epsilon_3) \right\vert  ] \label{eq:vsmi_roots_l3}
  \end{align}
  Notice that the extra terms in \cref{eq:grad_elbo_l1_naive_smi} are absent in \cref{eq:vsmi_roots_l1} so $\lambda^*$ will not be over-determined when we come to match variational Bayes at $\eta=1$.
\end{remark}

Consider now property (P1). If $\lambda^*\in\Lambda^*$ with $\lambda^*=(\lambda^*_1,\lambda^*_2,\lambda^*_3)$ are some generic fitted variational parameters, then $q_{\lambda^*_1}(\varphi)$ cannot depend in any way on $p(Y\mid \varphi,\theta)$ at $\eta=0$, as the power posterior in \cref{eqn:lambda13-star-defn} is
\[
  p_{\pow, \eta=0}(\varphi,\tilde\theta \mid Y,Z)=p(\varphi \mid Z)\,p(\tilde\theta\mid \varphi).
\]
The $Y$ observation observation model doesn't enter \cref{eqn:lambda13-star-defn} at $\eta=0$.
Under an additional assumption on the variational family, we can remove any $p(\tilde\theta \mid \varphi)$-dependence (so $q_{\lambda^*}$ is ``completely independent of the generative model'' at $\eta=0$).

\begin{proposition}\label{prop:var_smi_is_cut_at_eta0}
  Variational \acrshort*{smi} satisfies property (P1) at $\eta=0$: If the set
  \[
    \Lambda^*_{(3)}=\{\lambda_3\in \Lambda_2: q_{\lambda^*_3}(\tilde\theta\mid \varphi)=p(\tilde\theta\mid \varphi)\}
  \]
  is non-empty and we set
  \[
    \Lambda^*_{(1)} =   \{\lambda_1\in \Lambda_1:
    \kl{q_{\lambda_1}(\varphi)}{p(\varphi \mid Z)}=d^*_{\cut}\}
  \]
  with $d^*_{\cut}$ defined in (P1) then $\Lambda^*_{(1,3)}$ defined in \cref{eqn:lambda13-star-defn} satisfies
  \[
    \Lambda^*_{(1,3)} = \Lambda^*_{(1)} \times \Lambda^*_{(3)},
  \]
  % \[
  % d(q_{(\lambda_1,\lambda_2,\lambda_3)})=\kl{q_{\lambda_1}(\varphi)}{p(\varphi \mid Z)}.
  % \]
  so $q_{\lambda^*_1}$ does not depend in any way on $p(Y \mid \varphi,\theta)$ or $p(\theta \mid \varphi)$ at $\eta=0$.
\end{proposition}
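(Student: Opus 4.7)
The strategy is to specialise the KL-divergence defining $\Lambda^*_{(1,3)}$ in \cref{eqn:lambda13-star-defn} to $\eta=0$ and exploit the chain-rule decomposition of the KL over the two variational factors $q_{\lambda_1}(\varphi)$ and $q_{\lambda_3}(\tilde\theta\mid\varphi)$. At $\eta=0$ the power posterior collapses to
\[
  p_{\pow,0}(\varphi,\tilde\theta\mid Y,Z)=p(\varphi\mid Z)\,p(\tilde\theta\mid\varphi),
\]
because the likelihood contribution $p(Y\mid\varphi,\tilde\theta)^\eta$ becomes constant.

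Using the variational factorisation in \cref{eqn:q-lambda-smi-var} together with the chain rule for KL divergence, the objective decomposes as
\begin{align*}
  \kl{q_{\lambda_1,\lambda_3}(\varphi,\tilde\theta)}{p_{\pow,0}(\varphi,\tilde\theta\mid Y,Z)}
   & = \kl{q_{\lambda_1}(\varphi)}{p(\varphi\mid Z)}                                                                          \\
   & \quad + E_{\varphi\sim q_{\lambda_1}}\bigl[\kl{q_{\lambda_3}(\tilde\theta\mid\varphi)}{p(\tilde\theta\mid\varphi)}\bigr].
\end{align*}
Both summands are non-negative and functionally decoupled: the first depends only on $\lambda_1$, while the second depends on $\lambda_3$ (and on $\lambda_1$ only through the outer expectation). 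Minimising the first gives the value $d^*_{\cut}$, attained on $\Lambda^*_{(1)}$; and for any $\lambda_3\in\Lambda^*_{(3)}$ (non-empty by hypothesis) the inner KL is identically zero, killing the expectation for every $\lambda_1$.

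This proves the forward inclusion $\Lambda^*_{(1)}\times\Lambda^*_{(3)}\subseteq\Lambda^*_{(1,3)}$ and identifies $d^*_{\smi}=d^*_{\cut}$ at $\eta=0$. For the reverse inclusion, any $(\lambda^*_1,\lambda^*_3)\in\Lambda^*_{(1,3)}$ must attain this joint minimum, and since the two non-negative summands are independently bounded below, each must attain its own minimum. This forces $\lambda^*_1\in\Lambda^*_{(1)}$; and the expectation of the inner KL must vanish, which forces $q_{\lambda^*_3}(\tilde\theta\mid\varphi)=p(\tilde\theta\mid\varphi)$ for $q_{\lambda^*_1}$-almost every $\varphi$, i.e.\ $\lambda^*_3\in\Lambda^*_{(3)}$. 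The concluding independence claim for $q_{\lambda^*_1}$ then follows, since the problem defining $\Lambda^*_{(1)}$ involves only the marginal $p(\varphi\mid Z)$ from the $Z$-module and makes no reference to $p(Y\mid\varphi,\theta)$ or $p(\theta\mid\varphi)$.

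The main obstacle is the mild measure-theoretic subtlety in the reverse inclusion: vanishing of the expected inner KL yields equality of conditional densities only $q_{\lambda^*_1}$-almost surely on $\phispace$. I would handle this by invoking a standard support assumption on the variational family (e.g.\ $q_{\lambda^*_1}$ has positive density on $\phispace$, as is automatic for flow-based parameterisations with a full-support base measure) combined with continuity of $q_{\lambda_3}(\tilde\theta\mid\varphi)$ and $p(\tilde\theta\mid\varphi)$ in $\varphi$; alternatively the equality in the definition of $\Lambda^*_{(3)}$ can be read modulo null sets without altering the conclusion.
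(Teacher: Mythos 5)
Your proposal is correct and follows essentially the same route as the paper's proof: specialise the power posterior at $\eta=0$, apply the chain-rule decomposition of the KL divergence into the $\varphi$-marginal term plus the expected conditional term, and observe that the second term vanishes on the non-empty $\Lambda^*_{(3)}$ so that $d^*_{\smi}=d^*_{\cut}$ and the minimisers factor as a product set. Your explicit treatment of the reverse inclusion and the almost-sure versus everywhere equality of the conditional densities is a welcome refinement of a point the paper passes over, but it does not change the argument.
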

\begin{proof}
  See \cref{proof:var_smi_is_cut_at_eta0}.
\end{proof}

The point here is that the auxiliary variable $\tilde\theta$ is present only through its prior in the power posterior at the Cut, $\eta=0$, but this factor is perfectly expressed by a corresponding factor in the variational approximation, and hence doesnt enter the $\lambda_1$-variation.
The condition that there is $\lambda^*_3\in \Lambda_2$ such that $q_{\lambda^*_3}(\tilde\theta\mid \varphi)=p(\tilde\theta\mid \varphi)$ is met by choosing $\epsilon_2,\epsilon_3\sim p(\cdot\mid \varphi)$, the prior distribution for $\theta$ and $\tilde\theta$.
We can then find $\lambda^*_3$ to give $T_2(\epsilon_3;\lambda^*_3,\epsilon_1)=(\epsilon_1,\epsilon_3)$ equal to the identity map (possible in a flow-parameterised map, but not in general in \acrshort*{mfvi}).
At this $\lambda_3$-value, $\tilde\theta=\epsilon_3$ and hence $q_{\lambda^*_3}(\tilde\theta\mid \varphi)=p(\tilde\theta\mid \varphi)$. If we have a cut prior as in \cref{sec:prior_feedback} then take $\epsilon_2,\epsilon_3\sim \tilde p(\cdot)$, the Cut prior.

The variational approximation to the Cut-posterior defined in \cref{prop:var_smi_is_cut_at_eta0} is similar to that given in \cite{Yu2021variationalcut}.
We focus on flow-based parameterisations of the variational density $q_\lambda$, but apart from this our methods coincide at $\eta=0$.

We consider now property (P2).
Taking $\eta=1$, the power posterior is the Bayes posterior, so \cref{eqn:variational_bayes_standard,eqn:lambda13-star-defn} are identical optimisation problems as the $\lambda$-dependence is the same.
However, this shows that $q_{\lambda^*_1,\lambda^*_3}(\varphi,\tilde\theta)$ is variational Bayes at $\eta=1$, and we have to check that $q_{\lambda^*_1,\lambda^*_2}(\varphi,\theta)$ is variational Bayes.

\begin{proposition} \label{prop:smi-show-p2}
  Variational \acrshort*{smi} satisfies property (P2). Let
  \[
    \Lambda^*_{(1)}=\bigcup_{(\lambda^*_1,\lambda^*_3)\in \Lambda^*_{(1,3)}} \{\lambda_1^*\}.
  \]
  The set of Bayes and \acrshort*{smi} variational posteriors for $\varphi,\theta$ are the same, that is,
  \[
    \bigcup_{\lambda_1^*\in\Lambda^*_{(1)}}\bigcup_{\lambda^*_2\in \Lambda_{(2)}^*(\lambda^*_1)} \{(\lambda^*_1,\lambda^*_2)\}=B^*,
  \]
  when $\eta=1$.
\end{proposition}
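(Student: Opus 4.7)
The plan rests on two facts: at $\eta=1$ the power posterior equals the full Bayes posterior on the relabeled variables, so that \cref{eqn:lambda13-star-defn} becomes identical to the variational Bayes problem in \cref{eqn:variational_bayes_standard}; and the KL chain rule applied to the Bayes factorisation $p(\varphi,\theta\mid Y,Z) = p(\varphi\mid Y,Z)\,p(\theta\mid Y,\varphi)$ of \cref{eq:bayespost}. I would carry out the proof in three short steps.

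First, $p_{\pow,1}(\varphi,\tilde\theta\mid Y,Z)=p(\varphi,\tilde\theta\mid Y,Z)$, and the variational family $q_{\lambda_1}(\varphi)q_{\lambda_3}(\tilde\theta\mid\varphi)$ used in \cref{eqn:lambda13-star-defn} coincides with the Bayes family $q_{\beta_1}(\varphi)q_{\beta_2}(\theta\mid\varphi)$ of \cref{eq:q_modular_product} under the identification $(\lambda_1,\lambda_3)\leftrightarrow(\beta_1,\beta_2)$. Hence $d^*_{\smi}=D^*_B$ and $\Lambda^*_{(1,3)}=B^*$ as subsets of $B$, so by definition $\Lambda^*_{(1)}$ equals the projection of $B^*$ onto the first coordinate.

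Second, fix any $\lambda^*_1\in\Lambda^*_{(1)}$. The KL chain rule yields
\begin{align*}
\kl{q_{\lambda^*_1}(\varphi)q_{\beta_2}(\theta\mid\varphi)}{p(\varphi,\theta\mid Y,Z)}
 &= \kl{q_{\lambda^*_1}(\varphi)}{p(\varphi\mid Y,Z)} \\
 &\quad + E_{\varphi\sim q_{\lambda^*_1}}[\kl{q_{\beta_2}(\theta\mid\varphi)}{p(\theta\mid Y,\varphi)}].
\end{align*}
The first term is independent of $\beta_2$, so $(\lambda^*_1,\beta_2)\in B^*$ if and only if $\beta_2$ minimises the second term, which by \cref{eqn:lambda2-star-defn-equiv} characterises $\Lambda^*_{(2)}(\lambda^*_1)$. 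Assembling the two steps then yields the claimed set equality.

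The main (and only real) obstacle is bookkeeping around the non-uniqueness of minimisers on both sides. Specifically, one must verify that the optimum value $D^*_{\smi}(\lambda^*_1)$ in \cref{eqn:lambda2-star-defn-equiv} is the same for every $\lambda^*_1\in\Lambda^*_{(1)}$, so that the characterisation in step two applies uniformly. This is guaranteed by step one: the marginal KL $\kl{q_{\lambda^*_1}(\varphi)}{p(\varphi\mid Y,Z)}$ is likewise constant over $\lambda^*_1\in\Lambda^*_{(1)}$, since these two marginal KL values must sum to the common joint minimum $D^*_B$. With that bookkeeping handled, the argument reduces to reading off $B^*$ as the disjoint union $\bigcup_{\lambda^*_1\in\Lambda^*_{(1)}}\{\lambda^*_1\}\times\Lambda^*_{(2)}(\lambda^*_1)$, which is exactly the left-hand side of the proposition.
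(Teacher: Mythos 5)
Your proof is correct and follows essentially the same route as the paper's: identify $\Lambda^*_{(1,3)}$ with $B^*$ at $\eta=1$ (the power posterior becomes the Bayes posterior and $q_{\lambda_3}(\tilde\theta\mid\varphi)$ shares the parameterisation of $q_{\beta_2}(\theta\mid\varphi)$), then use the KL chain rule to show that, for each fixed $\lambda^*_1\in\Lambda^*_{(1)}$, the parameters completing a Bayes optimum are exactly $\Lambda^*_{(2)}(\lambda^*_1)$. One small quibble: the uniformity of $D^*_{\smi}(\lambda^*_1)$ over $\Lambda^*_{(1)}$ that you raise in your final paragraph is not actually needed, since the chain-rule characterisation is applied pointwise in $\lambda^*_1$, and your justification for it (that two terms summing to the constant $D^*_B$ must each be constant) does not follow as stated --- fortunately nothing in the argument rests on it.
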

\begin{proof}
  See \cref{proof:smi-show-p2}.
\end{proof}

\begin{proposition} \label{prop:smi-show-p3}
  Variational \acrshort*{smi} satisfies property (P3).
  If $p_{ \smi, \eta}\in\mathcal{Q}$ then $q_{\lambda^*}=p_{ \smi, \eta}$ for $\lambda^*\in \Lambda^*$.
\end{proposition}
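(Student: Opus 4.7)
The plan is to reduce (P3) to two separate statements: that the variational marginal on $(\varphi,\tilde\theta)$ matches the power posterior, and that the variational conditional for $\theta$ given $\varphi$ matches $p(\theta\mid Y,\varphi)$. The key observation is that if $p_{\smi,\eta}\in\mathcal{Q}$, then there exist parameters $\lambda^\circ=(\lambda_1^\circ,\lambda_2^\circ,\lambda_3^\circ)\in\Lambda$ with $q_{\lambda^\circ}=p_{\smi,\eta}$, and by the factorisation in \cref{eq:q_modular_smi_product} together with the factorisation $p_{\smi,\eta}(\varphi,\theta,\tilde\theta\mid Y,Z)=p_{\pow,\eta}(\varphi,\tilde\theta\mid Y,Z)\,p(\theta\mid Y,\varphi)$, the individual factors must agree: $q_{\lambda_1^\circ,\lambda_3^\circ}(\varphi,\tilde\theta)=p_{\pow,\eta}(\varphi,\tilde\theta\mid Y,Z)$ and $q_{\lambda_2^\circ}(\theta\mid\varphi)=p(\theta\mid Y,\varphi)$ (the second holding for $\varphi$ in the support of the marginal).

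Next I would feed these witnesses into the defining optimisation problems \cref{eqn:lambda13-star-defn,eqn:lambda2-star-defn-equiv}. Since $q_{\lambda_1^\circ,\lambda_3^\circ}(\varphi,\tilde\theta)=p_{\pow,\eta}(\varphi,\tilde\theta\mid Y,Z)$ makes the \acrshort{kl} in \cref{eqn:lambda13-star-defn} equal zero, we conclude $d^*_{\smi}=0$; therefore any $(\lambda_1^*,\lambda_3^*)\in\Lambda^*_{(1,3)}$ achieves \acrshort{kl} divergence zero and so $q_{\lambda_1^*,\lambda_3^*}(\varphi,\tilde\theta)=p_{\pow,\eta}(\varphi,\tilde\theta\mid Y,Z)$ almost everywhere. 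Similarly, plugging $\lambda_2^\circ$ into \cref{eqn:lambda2-star-defn-equiv} (with $\lambda_1=\lambda_1^*$) gives $D^*_{\smi}(\lambda_1^*)=0$, since the inner \acrshort{kl} vanishes pointwise in $\varphi$. Hence any $\lambda_2^*\in\Lambda^*_{(2)}(\lambda_1^*)$ satisfies $\kl{q_{\lambda_2^*}(\theta\mid\varphi)}{p(\theta\mid Y,\varphi)}=0$ for $q_{\lambda_1^*}$-almost all $\varphi$, which gives $q_{\lambda_2^*}(\theta\mid\varphi)=p(\theta\mid Y,\varphi)$ on the support of the marginal $q_{\lambda_1^*}$.

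Combining the two equalities factor by factor using \cref{eq:q_modular_smi_product} yields
\[
  q_{\lambda^*}(\varphi,\theta,\tilde\theta)=q_{\lambda_1^*,\lambda_3^*}(\varphi,\tilde\theta)\,q_{\lambda_2^*}(\theta\mid\varphi)=p_{\pow,\eta}(\varphi,\tilde\theta\mid Y,Z)\,p(\theta\mid Y,\varphi)=p_{\smi,\eta}(\varphi,\theta,\tilde\theta\mid Y,Z),
\]
establishing (P3) for any $\lambda^*\in\Lambda^*$ as constructed in \cref{eqn:smi-lambda-star}.

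The main obstacle is bookkeeping with the ``almost everywhere'' qualifiers coming from \acrshort{kl}-divergence-zero conclusions: the equality $q_{\lambda_2^*}(\theta\mid\varphi)=p(\theta\mid Y,\varphi)$ is only guaranteed on the support of $q_{\lambda_1^*}$, but this support coincides with that of the $\varphi$-marginal of $p_{\pow,\eta}$ (from the first step), which is precisely where the conditional $p(\theta\mid Y,\varphi)$ is used when reconstructing $p_{\smi,\eta}$, so the argument closes cleanly. No subtle measure-theoretic issue arises beyond this compatibility of supports.
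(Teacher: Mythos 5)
Your proposal is correct and follows essentially the same route as the paper's proof: exhibit witness parameters whose factors match $p_{\pow,\eta}$ and $p(\theta\mid Y,\varphi)$, note they drive the two defining \acrshort{kl} objectives to zero, and conclude that every optimiser must therefore also achieve zero divergence and hence equal the target. You simply spell out the marginalisation step identifying the factors and the almost-everywhere/support bookkeeping that the paper leaves implicit.
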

% \begin{proof}
%   See \cref{proof:smi-show-p3}.
% \end{proof}
\begin{proof}\label{proof:smi-show-p3}
  This is usually immediate for standard variational methods but has to be checked here. If $p_{\smi, \eta}\in\mathcal{Q}$ then there exist $\lambda_1,\lambda_3$ such that  $q_{\lambda_1,\lambda_3}=p_{\pow, \eta}$ and $\lambda_2$ such that $q_{\lambda_2}(\theta\mid \varphi)=p(\theta \mid Y,\varphi)$ and since these choices minimise the \acrshort*{kl}-divergences in \cref{prop:var_smi_distance,prop:var_smi_lambda2} they are the optimal values, so $q_{\lambda^*}=p_{\smi, \eta}$.
\end{proof}

\subsubsection{The overall loss targeted by variational-SMI}

% The \acrshort*{smi} posterior in \cref{eqn:smi_as_F_minimiser} belongs to the family of distributions $\mathcal{F}_{\smi,\eta}$ with power-posterior marginal in $(\varphi,\tilde\theta)$.
% This has the utility of allowing controlled feedback through $\eta$, so we take a loss for $q_\lambda(\varphi,\theta,\tilde\theta)$ in \cref{eqn:q-lambda-smi-var}
% prioritising proximity to $\mathcal{F}_{\smi,\eta}$ and only secondly the \acrshort*{kl}-divergence to $p_{ \smi, \eta}$.
% We will see that this delivers all the properties (P1-3) listed above.

We have defined $\lambda^*$ in two steps, \cref{eqn:lambda13-star-defn,eqn:lambda2-star-defn-equiv} with two losses, $d(q_\lambda,\mathcal{F}_{\smi,\eta})$ and $\kl{q_{\lambda}}{p_{ \smi, \eta}}$.
We can bring this together into a single overall loss in two ways. The first is formal but useful for computation. The second is useful for understanding.

For computational purposes we define the loss $\mathcal{L}^{( \smi, \eta)}$ targeted by variational \acrshort*{smi} using the \texttt{stop\_gradient} operator $\cancel{\nabla}(\cdot)$ acting on $\varphi_{(\epsilon,\lambda_1)}$. 
%\todo{clarify}
The \texttt{stop\_gradient} operator protects the object it acts on from the gradient operator $\nabla_{\lambda}$.
Let
\begin{align} \label{eq:vsmi_loss}
  \mathcal{L}^{( \smi, \eta)}(\lambda) = \elbo_{\pow, \eta}(\lambda_1,\lambda_3) + \elbo_{\bayes \cancel{\nabla}(\varphi)}(\lambda_1,\lambda_2)
\end{align}
where
\begin{align}
  \elbo_{\pow, \eta}(\lambda_1,\lambda_3) = \E_{(\varphi,\tilde\theta)\sim q_{\lambda_1,\lambda_3}}[                & \log p_{\pow, \eta}(\varphi, \tilde\theta, Z, Y) - \log q_{\lambda_1,\lambda_3}(\varphi, \tilde\theta) ] \label{eq:elbo_modular_pow}                  \\
  \elbo_{\bayes \cancel{\nabla}(\varphi)}(\lambda_1,\lambda_2) = \E_{(\varphi,\theta)\sim q_{\lambda_1,\lambda_2}}[ & \log p( \cancel{\nabla}(\varphi), \theta, Z, Y) - \log q_{\lambda_1,\lambda_2}(\cancel{\nabla}(\varphi), \theta) ]. \label{eq:elbo_modular_stop_grad}
\end{align}
with the joint and powered joint distributions given as \cref{eq:joint_toymodel,eq:powjoint_toymodel}.
We are in effect defining the function and its derivative separately and so this loss is formal and cannot take the place of \cref{prop:smi-show-p3-extra} below in giving meaning to the variation.
However it is convenient for implementation, as the \texttt{stop\_gradient} operator is directly expressed in the automatic differentiation framework we use.

\begin{proposition}\label{prop:stop-gradient-loss}
  The set $\Lambda^*$ in \cref{defn:var-smi} is the set of solutions of $\nabla_{\lambda} \mathcal{L}^{( \smi, \eta)} = 0$ corresponding to minima.
\end{proposition}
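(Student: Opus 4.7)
The plan is to compute the block gradients of $\mathcal{L}^{(\smi,\eta)}$ with respect to $\lambda_1$, $\lambda_2$, $\lambda_3$ and verify that they reproduce the stationarity system \cref{eq:vsmi_roots_l1,eq:vsmi_roots_l2,eq:vsmi_roots_l3} from \cref{remk:var-smi-roots-gradients}, whose minimising roots characterise $\Lambda^*$ through \cref{defn:var-smi}. The two easy blocks come first: since $\elbo_{\pow,\eta}$ alone depends on $\lambda_3$, the reparametrisation $\tilde\theta = T_2(\epsilon_3;\lambda_3,\epsilon_1)$ gives \cref{eq:vsmi_roots_l3} directly. Likewise $\elbo_{\bayes\cancel{\nabla}(\varphi)}$ alone depends on $\lambda_2$, and because $\varphi$ does not depend on $\lambda_2$ the stop-gradient is inert for this block; reparametrising $\theta = T_2(\epsilon_2;\lambda_2,\epsilon_1)$ and using the conditional independence $p(\theta \mid \varphi,Y,Z) = p(\theta \mid \varphi,Y)$ read off \cref{fig:toy_multimodular_model} yields \cref{eq:vsmi_roots_l2}.

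The decisive step is the $\lambda_1$-gradient, to which both summands can in principle contribute. I would show that the stop-gradient suppresses the contribution of $\elbo_{\bayes\cancel{\nabla}(\varphi)}$ entirely. Under reparametrisation $\varphi = T_1(\epsilon_1;\lambda_1)$, the operator $\cancel{\nabla}(\cdot)$ kills the implicit $\lambda_1$-dependence of $\varphi$ inside both $\log p(\cancel{\nabla}(\varphi),\theta,Z,Y)$ and $\log q_{\lambda_1,\lambda_2}(\cancel{\nabla}(\varphi),\theta)$. The sole residual $\lambda_1$-dependence is explicit, in the entropy factor $\log q_{\lambda_1}(v)\big|_{v=\varphi}$, whose $\lambda_1$-derivative is the score of $q_{\lambda_1}$ evaluated at a sample from $q_{\lambda_1}$. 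By the standard score-function identity, $\E_{\varphi\sim q_{\lambda_1}}[\nabla_{\lambda_1}\log q_{\lambda_1}(\varphi)]=0$, so $\nabla_{\lambda_1}\elbo_{\bayes\cancel{\nabla}(\varphi)}=0$ and the total $\lambda_1$-gradient equals $\nabla_{\lambda_1}\elbo_{\pow,\eta}$, which is exactly \cref{eq:vsmi_roots_l1}.

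Having matched all three stationarity equations, I would identify the minimising critical points. Up to a $\lambda$-independent additive constant, $\elbo_{\pow,\eta}(\lambda_1,\lambda_3) = -\kl{q_{\lambda_1,\lambda_3}(\varphi,\tilde\theta)}{p_{\pow,\eta}(\varphi,\tilde\theta \mid Y,Z)} + \text{const}$, whose maximisers are precisely $\Lambda^*_{(1,3)}$ in \cref{eqn:lambda13-star-defn}. Fixing any $(\lambda_1^*,\lambda_3^*)\in\Lambda^*_{(1,3)}$ and using the same conditional independence, $\elbo_{\bayes\cancel{\nabla}(\varphi)}(\lambda_1^*,\lambda_2) = -\E_{\varphi\sim q_{\lambda_1^*}}[\kl{q_{\lambda_2}(\theta \mid \varphi)}{p(\theta \mid \varphi,Y)}] + c(\lambda_1^*)$ with the second term independent of $\lambda_2$, so its maximisers in $\lambda_2$ are exactly $\Lambda^*_{(2)}(\lambda_1^*)$ in \cref{eqn:lambda2-star-defn-equiv}. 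The minima of $-\mathcal{L}^{(\smi,\eta)}$ therefore factorise as the union in \cref{eqn:smi-lambda-star}, recovering $\Lambda^*$ exactly.

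The main obstacle I foresee is making the stop-gradient bookkeeping airtight: one must verify that $\cancel{\nabla}(\cdot)$ acts on every occurrence of $\varphi$ inside $\log p$ and $\log q_{\lambda_1,\lambda_2}$, correctly isolate the single residual $\lambda_1$-dependence in the entropy term, and then invoke the score identity to dispose of it. Once that is nailed down, everything else is a repeated application of the reparametrisation trick precisely as formulated in \cref{remk:var-smi-roots-gradients}, and the two intra-block concavity arguments above certify that the critical points produced in fact realise minima of $-\mathcal{L}^{(\smi,\eta)}$.
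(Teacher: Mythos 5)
Your proposal is correct and follows essentially the same route as the paper: compute the three block gradients of $\mathcal{L}^{(\smi,\eta)}$ under the reparametrisation and match them to the stationarity system \cref{eq:vsmi_roots_l1,eq:vsmi_roots_l2,eq:vsmi_roots_l3} that characterises $\Lambda^*$ via \cref{remk:var-smi-roots-gradients}. Your score-identity argument for why the stopped Bayes term contributes nothing to the $\lambda_1$-gradient, and your explicit verification that the minimising roots decompose as the union in \cref{eqn:smi-lambda-star}, are details the paper's proof asserts without spelling out, so they are welcome elaborations rather than departures.
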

\begin{proof}
  See \cref{proof:stop-gradient-loss}.
\end{proof}

An overall loss \emph{function} can be given as follows. Let $v\ge 0$ and
\begin{equation}\label{eqn:loss_weighted_var_smi}
  \mathcal{L}^{(v)}(\lambda)=d(q_\lambda,\mathcal{F}_{\smi,\eta})+
  v\cdot \kl{q_{\lambda}}{p_{ \smi, \eta}}
\end{equation}
denote a weighted loss which allows varying levels of priority to be put on proximity to $\mathcal{F}_{\smi,\eta}$ and approximation of $p_{ \smi, \eta}$.
%The variational-\acrshort*{smi} approximation $q_{\lambda^*}$ prioritises matching the power-posterior over the overall fit to $p_{ \smi, \eta}$.
\begin{proposition} \label{prop:smi-show-p3-extra}
  Let $\mathcal{L}^*(v)=\min_{\lambda\in\Lambda}\mathcal{L}^{(v)}(\lambda)$
  and
  \[
    \Lambda^*(v)=\{\lambda\in \Lambda: \mathcal{L}^{(v)}(\lambda)=\mathcal{L}^*(v)\}.
  \]
  Under regularity conditions on $\mathcal{F}_{smi,\eta}$ and $p_{ \smi, \eta}$ given in \cref{prop:use_IFT_show_loss_limit}, for every solution $\lambda^*\in \Lambda^*$ in \cref{defn:var-smi} and all sufficiently small $v\ge 0$ there exists a unique continuous function $\lambda^*(v)$ satisfying $\lambda^*(v)\in \Lambda^*(v)$ and \[\lim_{v\to 0}\lambda^*(v)=\lambda^*.\]
\end{proposition}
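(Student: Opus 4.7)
The plan is to reduce the statement to a standard implicit function theorem argument applied to the first-order conditions of $\mathcal{L}^{(v)}$ at each $\lambda^*\in\Lambda^*$. First I would rewrite the loss using the decomposition in \cref{eqn:naive_VI_KL_expand}. Writing $A(\lambda_1,\lambda_3):=d(q_\lambda,\mathcal{F}_{\smi,\eta})=\kl{q_{\lambda_1,\lambda_3}}{p_{\pow,\eta}}$ (by \cref{prop:var_smi_distance}) and $B(\lambda_1,\lambda_2):=E_{\varphi\sim q_{\lambda_1}}[\kl{q_{\lambda_2}(\theta\mid\varphi)}{p(\theta\mid Y,\varphi)}]$, so that $\kl{q_\lambda}{p_{\smi,\eta}}=A+B$, we obtain
\[
\mathcal{L}^{(v)}(\lambda)=(1+v)A(\lambda_1,\lambda_3)+v\,B(\lambda_1,\lambda_2).
\]
Observe that $A$ does not depend on $\lambda_2$ and $B$ does not depend on $\lambda_3$. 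For any $v\ge 0$ and $\lambda$ achieving a minimum, the first-order condition $\nabla_\lambda\mathcal{L}^{(v)}=0$ gives $(1+v)\nabla_{(\lambda_1,\lambda_3)}A+v\nabla_{(\lambda_1,\lambda_3)}B=0$ together with $v\,\nabla_{\lambda_2}B=0$. The second equation is degenerate at $v=0$ and is the source of non-uniqueness of $\Lambda^*(0)$; dividing through by $v$ when $v>0$ replaces it by $\nabla_{\lambda_2}B=0$, which by \cref{eqn:lambda2-star-defn-equiv} is precisely the characterisation of $\Lambda^*_{(2)}(\lambda_1)$.

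Setting $\mu=(\lambda_1,\lambda_3)$, I would apply the implicit function theorem to the enlarged system
\[
F(\mu,\lambda_2,v)=\Bigl(\nabla_\mu A(\mu)+\tfrac{v}{1+v}\nabla_\mu B(\lambda_1,\lambda_2),\;\nabla_{\lambda_2}B(\lambda_1,\lambda_2)\Bigr),
\]
which is jointly $C^1$ in a neighbourhood of $(\mu^*,\lambda_2^*,0)$ and satisfies $F(\mu^*,\lambda_2^*,0)=0$ for every $\lambda^*=(\mu^*,\lambda_2^*)\in\Lambda^*$ by \cref{eqn:lam-star-roots1a,eqn:lam-star-roots1b}. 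The Jacobian in $(\mu,\lambda_2)$ at the base point is block lower triangular,
\[
\partial F\big|_{(\lambda^*,0)}=\begin{pmatrix}\nabla^2_\mu A(\mu^*) & 0 \\ \nabla_\mu\nabla_{\lambda_2}B & \nabla^2_{\lambda_2}B(\lambda_1^*,\lambda_2^*)\end{pmatrix},
\]
where the upper-right block vanishes because the $v$-factor in $F_1$ is zero at $v=0$. Under the regularity conditions assumed in \cref{prop:use_IFT_show_loss_limit} -- $C^2$-smoothness and strict second-order sufficient conditions at $\lambda^*$, so that $\nabla^2_\mu A(\mu^*)$ and $\nabla^2_{\lambda_2}B(\lambda_1^*,\lambda_2^*)$ are positive definite -- this block is invertible. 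The implicit function theorem then yields an open interval $[0,v_0)$ and a unique $C^1$ map $v\mapsto\lambda^*(v)=(\mu^*(v),\lambda_2^*(v))$ on $[0,v_0)$ with $F(\lambda^*(v),v)=0$ and $\lambda^*(0)=\lambda^*$, hence $\lim_{v\to 0}\lambda^*(v)=\lambda^*$.

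It remains to verify that for sufficiently small $v>0$ the critical point $\lambda^*(v)$ is actually a minimiser, i.e.\ $\lambda^*(v)\in\Lambda^*(v)$. The full Hessian of $\mathcal{L}^{(v)}$ at $\lambda^*(v)$ equals $(1+v)\nabla^2A+v\nabla^2 B$ on the $(\mu,\mu)$ block and $v\nabla^2_{\lambda_2}B$ on the $(\lambda_2,\lambda_2)$ block, with off-diagonal coupling scaled by $v$; by continuity in $(\lambda,v)$ and positive definiteness of $\nabla^2A$ and $\nabla^2_{\lambda_2}B$ at the base point, the Hessian is positive definite for all sufficiently small $v>0$, so the critical branch is locally the unique minimiser. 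The main obstacle, as indicated, is precisely the degeneracy of the naive first-order system $\nabla_\lambda\mathcal{L}^{(v)}=0$ in the $\lambda_2$-direction at $v=0$: a direct application of the IFT to this system fails because $\mathcal{L}^{(0)}$ is constant in $\lambda_2$. The remedy -- dividing the $\lambda_2$ equation by $v$ before passing to the limit -- is the technical step that must be justified, and it is what makes the block structure of the Jacobian above non-singular.
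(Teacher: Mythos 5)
Your proposal is correct and follows essentially the same route as the paper: decompose $\mathcal{L}^{(v)}=(1+v)\,d(q_\lambda,\mathcal{F}_{\smi,\eta})+v\cdot E_{\varphi\sim q_{\lambda_1}}[\kl{q_{\lambda_2}(\theta\mid\varphi)}{p(\theta\mid Y,\varphi)}]$, rescale the first-order conditions (dividing the $\lambda_2$-equation by $v$) so that at $v=0$ they reduce to \cref{eqn:lam-star-roots1a,eqn:lam-star-roots1b}, and apply the implicit function theorem using the block structure of the Jacobian, whose determinant factors into the two Hessian determinants assumed invertible in \cref{prop:use_IFT_show_loss_limit}. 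Your additional check that the critical branch is a local minimiser for small $v>0$ via positive definiteness of the Hessian is a welcome refinement the paper leaves implicit.
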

\begin{proof}
  See \cref{proof:smi-show-p3-extra}.
\end{proof}

% %\todo{connect this loss to the stop gradient loss - and give proof of cut and bayes continuous approximation at eta = 0,1}

The value of \cref{prop:smi-show-p3-extra} is that it allows us to interpret $q_{\lambda^*},\ \lambda^*\in\Lambda^*$ as minimising a proper loss function $\mathcal{L}^{(v)}$ at small $v$ (approximately). The minimum loss $\mathcal{L}^*(v)$ decreases as we expand the variational family $\mathcal{Q}$ and is zero when $p_{ \smi, \eta}\in\mathcal{Q}$, in which case $q_{\lambda^*}=p_{ \smi, \eta}$ for $\lambda^*\in \Lambda^*(v)$. In contrast, although $d^*_{\smi}$ decreases as $\mathcal{Q}$ expands, $D^*_{\smi}(\lambda_1^*)$ may increase, though must eventually go to zero when $\mathcal{Q}$ expands to include $p_{ \smi, \eta}$, by \cref{prop:smi-show-p3}. However, $\mathcal{L}^{(v)}$ is not a viable optimisation target at small $v$ because the second term in \cref{eqn:loss_weighted_var_smi} is intractable, as we saw in our discussion of \cref{eq:elbo_naive_smi}.

\subsubsection{Stochastic gradient descent for variational-SMI}
\Cref{alg:vsmi} gives our \acrlong*{sgd} method to target the \acrshort*{smi} posterior for a fixed value of the influence parameter, $\eta$.
The algorithm is based on the loss $\mathcal{L}^{( \smi, \eta)}$ in \cref{eq:vsmi_loss} and consists of a single training loop, using the \verb|stop_gradient| operator to avoid two-stage optimisation procedures. This is given for understanding. In \cref{sec:meta_posterior} and \cref{alg:v_meta_smi} we will train a ``meta-posterior'' approximating the whole family of \acrshort*{smi}-posteriors as a function of $\eta$.
 
\begin{algorithm}[tb]
  \caption{Variational Posterior approximation for $p_{ \smi, \eta}$} \label{alg:vsmi}
  \begin{algorithmic}
    %\STATE This algorithm assumes that we want to partially cut the influence of module 2 (with response $Y$) into $\varphi$.
    \STATE \textbf{Input:} $\mathcal{D}$: Data. $p(\varphi,\theta,\mathcal{D})$: Multi-modular probabilistic model. $q_\lambda=(p(\epsilon), T, \lambda)$: variational family. A value of $\eta \in [0,1]$: Influence parameter(s) for suspect module(s) \\[0.1in]
    \STATE \textbf{Output:} Variational approximation $q_{\hat\lambda}(\varphi,\theta,\tilde\theta)$ of the $\eta$-\acrshort*{smi} posterior. \\[0.1in]

    \STATE Initialise variational parameters $\lambda$
    \WHILE{\acrshort*{sgd} not converged}
    \STATE (Optional) Sample a random minibatch of data $\mathcal{D}^{(b)} \sim \mathcal{D}$.
    \FOR{$s = 1,\ldots,S$}
    \STATE Sample the base distribution, $\epsilon_s \sim p(\epsilon)$.
    \STATE Transform the sampled values $(\varphi_s, \theta_s, \tilde\theta_s) \leftarrow T(\epsilon_s; \lambda)$ as in \cref{eq:smi_transform}.
    \ENDFOR

    \STATE Compute the Monte Carlo estimate of the loss $\mathcal{L}^{( \smi, \eta)}$ in \cref{eq:vsmi_loss} and its gradients.
    \begin{equation}
      \hat{\mathcal{L}}^{( \smi, \eta)} = \widehat{\elbo}_{\pow, \eta} + \widehat{\elbo}_{\cancel{\nabla}(\varphi)}
    \end{equation}
    where
    \begin{align}
      \widehat{\elbo}_{\pow, \eta}               & = - \frac{1}{S} \sum_{s=1}^{S} \left[ \log p_{\pow, \eta}(\varphi_{s}, \tilde\theta_{s}, \mathcal{D}^{(b)}) - \log q(\varphi_{s}, \tilde\theta_{s}) \right]           \\
      \widehat{\elbo}_{\cancel{\nabla}(\varphi)} & = - \frac{1}{S} \sum_{s=1}^{S} \left[ \log p(\cancel{\nabla}(\varphi_{s}), \theta_{s}, \mathcal{D}^{(b)}) - \log q( \cancel{\nabla}(\varphi_{s}), \theta_{s}) \right]
    \end{align}

    \STATE Update $\lambda$ using the estimated gradient vector $\nabla_{\lambda}\hat{\mathcal{L}}^{( \smi, \eta)}$
    \STATE Check convergence of $q_{\lambda}(\varphi_s, \theta_s, \tilde\theta_s)$
    \ENDWHILE

    \RETURN $\hat\lambda=\lambda$

  \end{algorithmic}
\end{algorithm}

\subsection{Selecting the SMI posterior} \label{subsec:best_smi}

We now give a utility for selection of the influence parameter $\eta$.
This will depend on the goals of inference. Recall that $Y\in \mathcal{Y}^m$ and $Z\in \mathcal{Z}^n$.
In the following we take a predictive loss based on the \acrshort*{smi}-predictive distribution for independent new data $(y',z')\in \mathcal{Y}\times\mathcal{Z}$,
\begin{equation*}
  p_{\smi,\eta}(y',z' \mid Y,Z)=\int p(y',z' \mid \varphi,\theta)p_{\smi,\eta}(\varphi,\theta \mid Y,Z)\,d\varphi d\theta,
\end{equation*}
and a utility which is equivalent to the negative of the \acrshort*{kl} divergence to the true generative model for the new data $p^*(y',z')$.
This utility is the \acrfull*{elpd} \citep{Vehtari2016},
\begin{equation*}
  U(\eta)=\int p^*(y',z')\log(p_{\smi,\eta}(y',z' \mid Y,Z))\,dy' dz'.
\end{equation*}
In our variational setting, these quantities are replaced by estimates based on our variational approximation $q_{\lambda^*}$ to $p_{\smi,\eta}$.
The variational parameters are $\lambda^*(\eta)\in \Lambda^*(\eta)$, per \cref{remk:smi-defn-eta-depend}.
%We have suppressed this dependence in our notation so far.
We define the variational \acrshort*{smi} posterior predictive distribution
\begin{equation}\label{eq:post_predictive_var_smi}
  q_{\eta}(y',z')=\int p(y',z' \mid \varphi,\theta)q_{\lambda^*(\eta)}(\varphi,\theta)\,d\varphi d\theta,
\end{equation}
with corresponding utility
\begin{equation}\label{eq:ELPD_var_smi_def}
  u(\eta)=\int p^*(y',z')\log(q_{\eta}(y',z'))\,dy' dz'.
\end{equation}
%In \acrshort*{mfvi} these quantities may sometimes be computed analytically.
We estimate $u$ in \cref{eq:ELPD_var_smi_def} using the WAIC \citep{Watanabe2012}, following \cite{Vehtari2016}. See \cref{subsec:best_smi_vmp} for further details.
When the variational \acrshort*{smi} posterior predictive distribution can be calculated in closed form, $u(\eta)$ may be estimated using leave one out cross validation.
This is asymptotically equivalent to the WAIC, but will in general be too computationally demanding to compute.
%This is, asymptotically in the number of anchor profiles, equivalent to estimation using LOOCV,
%\begin{equation}\label{eq:ELPD_est_loocv}
%  \widehat{U}_{loocv}(\eta)=\sum_{i=1}^n\sum_{j=1}^m \log(p_{\smi,\eta}(Y_i,Z_j \mid Y_{-i},Z_{-j}).
%\end{equation}
In order to complete the inference, we select the optimal influence parameter \begin{equation}\label{eqn:eta-star-defn}
  \eta^*=\arg\min_{\eta} u(\eta),
\end{equation}
and return the final selected variational \acrshort*{smi} posterior,
$q_{\lambda^*(\eta^*)}(\varphi,\theta)$, for further analysis.

A number of other procedures have been given for selecting the influence in a pure power-posterior setting. \cite{wu-martin-20,wu-martin-21} introduce a new method and summarise and compare a selection of methods, reflecting different priorities in the inference and corresponding utilities.
If our goal is parameter estimation, then a utility that directly targets parameter estimates, rather than predictive distributions, will be preferred.
In recent work, \cite{chakraborty22_smiLF} select $\eta$ as the most Bayes-like (i.e., the largest) value that does not show goodness-of-fit violation with the Cut.
%, as measured by the distribution of the \acrshort*{kl} divergence between the $\varphi$ posteriors at $\eta>0$ and $\eta=0$, taken in the sampling distribution of the data $Y$.
\cite{Carmona2022spatial} use a utility tailored to their inference objectives.
They have data in which $\theta$ is a high dimensional vector, and some of the components of $\theta$ are directly measured.
They use the posterior mean square error for prediction of known $\theta$-components in a LOOCV framework to select $\eta$, linking $\eta$-selection to success in parameter estimation.
%This seems ideal, when information of this sort is available.

\section{The Variational Meta-Posterior}\label{sec:meta_posterior}

In order to select a posterior from the family of variational \acrshort*{smi} posteriors we need the fitted variational parameters $\lambda^*(\eta)$ as a function of $\eta$ in order to estimate the selection criterion $u(\eta)$ in \cref{eq:ELPD_var_smi_def} as a function of $\eta$ and select an optimal $\eta$-value in \cref{eqn:eta-star-defn} and the variational \acrshort*{smi} posterior $q_{\lambda^*(\eta^*)}$.

Up to this point $\eta\in [0,1]$, has been a scalar.
When $\eta$ is scalar, we can fit the variational posterior independently at a lattice of $\eta$-values, estimate the \acrshort*{elpd} at each value, smooth the estimated \acrshort*{elpd} values over $\eta\in [0,1]$ and select the $\eta$-value maximising this function.
However, when we analyse multi-modular models with multiple misspecified modules, the dimension of $\eta$ grows with the number of bad modules and so independent fitting is both inefficient and computationally prohibitive.
In this section we give two parameterisations of the \acrfull*{vmp}, $q_{\lambda(\alpha,\eta)}$ and $q_{\alpha,\eta}$. In the former, based on a ``\acrshort*{vmp}-map'', the parameters $\lambda$ of the \acrshort*{nf} are themselves parameterised as functions of $\eta$ with parameters $\alpha$. In the latter, based on a ``\acrshort*{vmp}-flow'', $\eta$ is treated as an additional input to the \acrshort*{nf} alongside $\epsilon$, with its own additional flow parameters $\mu$, and $\alpha=(\lambda,\mu)$. %We learn  $q_{\lambda^*(\alpha^*,\eta)}$ and $q_{(\lambda^*,\alpha^*),\eta}$ in a single end-to-end training framework.

\subsection{Motivation and definition}

The \acrshort*{smi}-posterior varies continuously with $\eta$.
%, in the sense that
%\[
%  \lim_{\delta\to 0} \kl{p_{\smi,\eta}}{p_{\smi,\eta+\delta}}=0,
%\]
Expanding the \acrshort*{kl} divergence at $\eta+\delta$,
\[
  \kl{p_{\smi,\eta}}{p_{\smi,\eta+\delta}}=-\delta E_{\pow, \eta}\left(\log(p(Y \mid \varphi,\tilde\theta))\right)+\log\left(E_{\pow, \eta}(p(Y \mid \varphi,\tilde\theta)^{\delta})\right),
\]
and this is continuous and has continuous derivatives in $\delta$ if the integrals exist. This motivates flow- and map- parameterisations of the variational densities $q_{\lambda(\alpha,\eta)}$ and $q_{\alpha,\eta}$ which are continuous in the same sense.

\subsubsection{The VMP-map}
Continuity holds in a stronger sense. Under regularity conditions, a continuous sequence of solutions $\lambda^*(\eta)\in \Lambda^*(\eta)$ passes through any point $\lambda^*\in \Lambda^*(\eta^*)$.
%We applied the Implicit Function Theorem in \cref{prop:use_IFT_show_loss_limit} to show the continuity of the roots $\lambda^*(v)$ of $\nabla_{\lambda}\mathcal{L}^{(v)}$ at $v=0$.
Applying the Implicit Function Theorem (as in \cref{prop:use_IFT_show_loss_limit}), to the $\eta$-dependence of the roots of the functions on the LHS of \cref{eqn:lam-star-roots1a,eqn:lam-star-roots1b} we can show that, for every $\lambda^*\in\Lambda^*(\eta^*)$, there is a unique continuous function $\lambda^*(\eta)$ satisfying $\lambda^*(\eta)\in\Lambda^*(\eta)$ for $\eta$ in an open neighborhood of $\eta^*$ and satisfying $\lambda^*(\eta^*)=\lambda^*$. The regularity conditions require the functions on the LHS of \cref{eqn:lam-star-roots1a,eqn:lam-star-roots1b} to be continuously differentiable in $\lambda$ and $\eta$, and the Jacobians of those functions (in $(\lambda_1,\lambda_3)$ and $\lambda_2$) to be invertible at $\lambda=\lambda^*$.
% \begin{align*}
%   \kl{p_{\smi,\eta}}{p_{\smi,\eta}+\delta}
%   & =
%   -\int p_{\pow, \eta}(\varphi,\tilde\theta \mid Y,Z) \log(p(Y \mid \varphi,\tilde\theta)^\delta) d\varphi d\tilde \theta
%   \\
%   & \qquad +\quad
%   \log\left(\frac{\int p(Z\mid \varphi) p( Y \mid \varphi, \tilde \theta )^{\eta+\delta} \;  p(\varphi,\tilde\theta) d\varphi d\tilde \theta}{\int p(Z\mid \varphi) p( Y \mid \varphi, \tilde \theta )^\eta \;  p(\varphi,\tilde\theta)d\varphi d\tilde \theta} \right)
%   \\
%   & =-\delta E_{\pow, \eta}\left(\log(p(Y \mid \varphi,\tilde\theta))\right)+\log\left(E_{\pow, \eta}(p(Y \mid \varphi,\tilde\theta)^{\delta})\right)
% \end{align*}
% and the limit is zero whenever the integrals exist.

This motivates a low dimensional reparameterisation of $\lambda$ which approximates $\lambda^*(\eta)$. Let $\lambda(\alpha,\eta)=f_\alpha(\eta)$, where $\alpha\in A$ is a vector of real parameters and let
%lots cut here - I put it in the ''notes'' file
\begin{equation}\label{eq:f-alpha-vmp-map-defn}
  f_\alpha: H \rightarrow \Lambda
\end{equation}
be a continuously differentiable mapping parameterised by $\alpha$. We refer to $f_\alpha$ as the \textbf{\acrshort*{vmp}-map} and
%variational parameters that approximate the entire $\mathcal{P}_{\smi}$ family.
define a \acrlong*{vmp} as a family of distributions 
\[\mathcal{Q}_{H,\alpha} = \{ q_{f_\alpha(\eta)} \;,\; \eta \in H \},\quad \alpha\in A.\] 
%and parameterised by $f_\alpha$ at fixed $\alpha\in A$.,
%\begin{equation}\label{eq:meta_post_QHA_defn}
%  \mathcal{Q}_{H,\alpha} = \{ q_{f_\alpha(\eta)} \;,\; \eta \in H \}.
%\end{equation}
%Here $\mathcal{Q}_{H,\alpha} \subset \mathcal{Q}$ is a surface of dimension $\dim(H)$ in $\mathcal{Q}$. When we vary $\alpha$ we vary $\lambda(\eta)$ at all $\eta$, so we vary $\alpha$ seeking a surface $\mathcal{Q}_{H,\alpha^*}$ with the property that $q_{f_{\alpha^*(\eta)}}\in \mathcal{Q}_{H,\alpha^*}$ is a good approximation to $p_{\smi,\eta}$ for every $\eta\in H$ at the same time.

There is a question of how the $\alpha$ parameters should contribute to the different components of $\lambda$. We take $\alpha=(\alpha_1,\alpha_2,\alpha_3)$ and
\[
  f_\alpha(\eta)=(f^{(1)}_{\alpha_1}(\eta),f^{(2)}_{\alpha_2}(\eta),f^{(3)}_{\alpha_3}(\eta))
\]
where $f^{(1)}_{\alpha_1}: H\to \Lambda_1$
and $f^{(2)}_{\alpha_2},f^{(3)}_{\alpha_3}: H\to \Lambda_2$ and set $\lambda_k(\alpha_k,\eta)=f^{(k)}_{\alpha_k}(\eta),\ k=1,2,3$. This gives $f_\alpha: H\to \Lambda$ as before, but breaks up the dependence as
\begin{equation}\label{eq:vmp-map-transform-variables-alpha123}
    (\varphi_{(\lambda_1(\alpha_1,\eta),\epsilon)},\theta_{(\lambda_2(\alpha_2,\eta),\epsilon)},\tilde\theta_{(\lambda_3(\alpha_3,\eta),\epsilon)}) \sim q_{f_\alpha(\eta)}.
\end{equation}
Changing $\alpha_1$ to improve the fit to the $\varphi$ distribution at one $\eta$ does not affect the $\theta$ distribution at another $\eta$-value, though it will affect the $\varphi$ distribution there. 
%This parameterisation converged more rapidly to lower loss solutions than a parameterisation which took a single set of parameters $\alpha=\alpha_k,\ k=1,2,3$ shared across the parameters.

A very expressive \acrshort*{vmp}-map may be undesirable due to a bias-variance trade off in the estimation of $\lambda^*$. The estimates $\hat\lambda$ of $\lambda^*$ output by \cref{alg:vsmi} are estimated independently over $\eta$ and will not in general lie in $\Lambda^*(\eta)$. Reparameterising with $f_\alpha$ and estimating $\hat\alpha$ for best fit across $\eta\in H$ smooths the output $\hat\lambda(\eta)=f_{\hat\alpha}(\eta),\ \eta\in H$ at the price of some potential bias. Properties (P1-3) hold only approximately on both outputs.

\subsubsection{The VMP-flow}

When we parameterise the \acrshort*{vmp} with a \acrshort*{nf}, we model the $\eta$-dependence of the variational densities $q$. We can parameterise the function $\lambda^*(\eta)$ using a \acrshort*{vmp}-map. Alternatively we can add an $\eta$-input to the maps $T_1,T_2$ (technically an extra conditioner, like $\epsilon_1$). The flow architecture is expanded with extra nodes and weight parameters $\mu=(\mu_1,\mu_2,\mu_3)$ with $\mu\in M$ say. The map $T$ is continuous in its inputs so $q$ will be continuous in $\eta$. Let $\alpha=(\alpha_1,\alpha_2,\alpha_3)$ with $\alpha_k=(\lambda_k,\mu_k),\ k=1,2,3$ and $\alpha\in A$ where now $A=\Lambda\times M$. The transformations $T_1,T_2$ with input $\eta$ are given in terms of the conditioners and transformers of the \acrshort*{nf} in \cref{eq:meta_new_map_phi} to \cref{eq:meta_new_map_theta_tilde} in \cref{sec:nf}. Formally,
\begin{align}
  \varphi_{(\alpha_1,\eta,\epsilon)}      & =T_1(\epsilon_1;\alpha_1,\eta)\nonumber                                                                                                                     \\
  \theta_{(\alpha_2,\eta,\epsilon)}       & =T_2(\epsilon_2; \alpha_2, (\eta,\epsilon_1))\nonumber                                                                                                        \\
  \tilde\theta_{(\alpha_3,\eta,\epsilon)} & =T_2(\epsilon_3; \alpha_3, (\eta,\epsilon_1))\nonumber                                                                                                        \\
  T(\epsilon;\alpha,\eta)                & = \left( T_1(\epsilon_1;\alpha_1,\eta))  ,\; T_2(\epsilon_2; \alpha_2, (\eta,\epsilon_1))  ,\; T_2(\epsilon_3; \alpha_3, (\eta,\epsilon_1))  \right).\label{eq:smi_transform_new_vmp}
\end{align}
We call this extended flow mapping $T$ with input $\eta$ a {\bf \acrshort*{vmp}-flow}. In terms of the new map, the variational densities are
\begin{align}\label{eqn:q-lambda-smi-var-meta-new}
  q_{\alpha,\eta}(\varphi, \theta, \tilde\theta) =q_{\alpha_1,\eta}(\varphi)q_{\alpha_2,\eta}(\theta\mid \varphi)q_{\alpha_3,\eta}(\tilde\theta\mid \varphi),
\end{align}
simply replacing $\lambda_k\leftarrow\alpha_k,\ k-1,2,3$ and making the $\eta$-dependence explicit as it is a flow input. This gives a second
\acrlong*{vmp} as the family of distributions 
\[\mathcal{Q}_{H,\alpha} = \{ q_{\alpha,\eta} \;,\; \eta \in H \}, \quad\alpha\in A.\] 

%In contrast to the \acrshort*{vmp}-map, where we smooth the variational parameters $\lambda^*(\eta)$, the optimal parameters $\alpha^*$ are not functions of $\eta$ as we enforce smoothness directly on the density $q_{\alpha^*,\eta}$ through the argument $\eta$ passed as input to the flow defining $q_{\alpha,\eta}$. 

\subsection{Learning the Variational Meta-Posterior for SMI}\label{subsec:learning_vmp_map}

The \acrlong*{vmp} for \acrshort*{smi} is characterised by a pair $(\mathcal{P}_{\smi}, \mathcal{Q}_H)$, where
$
\mathcal{P}_{\smi}=\{p_{\smi,\eta}: \eta\in H\}
$
is the family of \acrshort*{smi} posteriors indexed by $\eta$ which we want to approximate and
$ \mathcal{Q}_{H}$ is the family of all available \acrlongpl*{vmp}, which can be written $\mathcal{Q}_{H} = \cup_{\alpha\in A}  \mathcal{Q}_{H,\alpha}$
for both \acrshort*{vmp}-map and \acrshort*{vmp}-flow based \acrshortpl*{vmp}.

In this section we give losses for estimation of $\alpha^*$ in the \acrshort*{vmp}-map and \acrshort*{vmp}-flow. Let $\eta_{1:R}=(\eta_r)_{r=1,...,R}$ be a given lattice of $\eta$-values and let
\[\rho(\eta)=\frac{1}{R}\sum_{r=1}^R \delta_{\eta_r}(\eta),\ \eta\in H.\]
The values in $\eta_{1:R}$ would ideally be concentrated around $\eta^*$. As this isn't known in advance, concentrating them near the points $\eta_{1:R}\in \{0,1\}^R$ is a useful rule as $\lambda^*(\eta)$ often varies rapidly with $\eta$ near Cut and Bayes. Adaptive sequential estimation and maximisation of $u(\eta)$ may be of interest in future work.

We take a meta-\acrshort*{smi} loss weighted across $\eta\in \eta_{1:R}$. For the \acrshort*{vmp}-map this is
\begin{align}
  \mathcal{L}^{(\msmi-map)}(\alpha) & =\E_{\eta\sim\rho}\left(\mathcal{L}^{(\smi,\eta)}(f_{\alpha}(\eta)\right)\nonumber                       \\
                                & =\frac{1}{R}\sum_{r=1}^R \mathcal{L}^{(\smi,\eta_r)}(f_{\alpha}(\eta_r)) \label{eq:meta_smi_target_loss}
\end{align}
where $\mathcal{L}^{(\smi,\eta)}(\lambda)$ is defined in \cref{eq:vsmi_loss} and we have taken $\lambda=f_{\alpha}(\eta)$ in order to enforce the parameterisation at each $\eta\in \eta_{1:R}$. For the \acrshort*{vmp}-flow the loss is
\begin{align}\label{eq:vsmi_loss-meta-new-main}
  \mathcal{L}^{(\msmi-flow)}(\alpha)  =\E_{\eta\sim\rho}\left(\mathcal{L}^{(\msmi-flow,\eta)}(\alpha)\right) 
\end{align}
where $\mathcal{L}^{(\msmi-flow,\eta)}(\alpha)$ is obtained by substituting $\varphi_{(\alpha_1,\eta,\epsilon)}$ etc into $\mathcal{L}^{(\smi,\eta)}$ in \cref{eq:vsmi_loss} and is defined in detail in \cref{eq:vsmi_loss-meta-new} in \cref{sec:nf}.

In the \acrshort*{vmp}-flow the optimal variational parameters $\lambda^*$ don't depend on $\eta$ and this seems to give relatively more rapid and stable convergence in \acrshort*{sgd} targeting $\mathcal{L}^{(\msmi-flow)}(\alpha)$ compared to \acrshort*{sgd} targeting in $\mathcal{L}^{(\msmi-map)}(\alpha)$ (given in \cref{alg:v_meta_smi}). It is a relatively ``lightweight'' parameterisation, as the dimension of $\alpha$ in the \acrshort*{vmp}-flow is quite a bit smaller than that of $\mu$ in the \acrshort*{vmp}-map.

In order to estimate $\alpha^*=\arg\min_{\alpha\in A} \mathcal{L}^{(\msmi)}(\alpha)$ (dropping the -map and -flow distinction, and ignoring non-uniqueness for brevity) and fit the \acrshort*{vmp}, we apply \acrshort*{sgd}, simply replacing $\mathcal{L}^{(\smi,\eta)}(\lambda)$ \cref{alg:vsmi} with $\mathcal{L}^{(\msmi)}(\alpha)$ in \cref{alg:v_meta_smi} in \cref{sec:sgd-for-meta-losses}, and updating $\alpha$ with the gradient $\nabla_\alpha \mathcal{L}^{(\msmi)}$ instead of updating $\lambda$ with the gradient $\nabla_{\lambda} \mathcal{L}^{(\smi,\eta)}$. We implement this using \acrfull*{svi}. We take a continuous density $\rho$ and sample a new batch $\eta_{1:R}$ of $\eta$-values at each pass of the \acrshort*{sgd} algorithm. We approximate the family  $\mathcal{P}_{\smi}$ in a single end-to-end optimisation, propagating the loss function gradients through the \acrshort*{vmp}-map or -flow using automatic differentiation. See \cref{alg:v_meta_smi} in \cref{sec:sgd-for-meta-losses}.

When $\dim(H)=C$ ($C$ influence parameters for $C$ cuts) we have vectors $\eta_r=(\eta_{1,r},...,\eta_{C,r}),\ r=1,...,R$. We defined $\rho$ for resampling purposes as $\eta_{c,r}\sim \mbox{beta}(a,1),\ c=1,...,C$ with $a<1$ independently for each component of $\eta$, for example $a=0.2$. This concentrates sampled $\eta$ at the Cut boundaries of $H$ as noted above.
%Specifically, we sample $\eta$ from a vector of independent Beta distributions.

\subsection{Maximising the ELPD using the Variational Meta-Posterior}\label{subsec:best_smi_vmp}

The \acrshort*{vmp} allows us to produce posterior samples for any given $\eta \in H$ efficiently. This helps us find the best influence parameter $\eta^*$ (see \cref{subsec:best_smi}), as we can estimate the utility function $u(\eta)$ accurately in fractions of a second and compare it across the family of \acrshort*{smi} posteriors. In settings with a single cut, maximising $u(\eta)$ may be as simple as linear search, but in the case of many potential cuts, with a higher-dimensional $H$ space, we require more elaborate search strategies.

In our case the utility $u(\eta)$ is the \acrshort*{elpd} and we estimate this (its negative) using the WAIC estimator given in \cite{Vehtari2016} and minimise the WAIC over $\eta\in H$ with \acrshort*{sgd}. Denote by $\psi_{\hat\alpha,\eta,\epsilon}=(\varphi_{\hat\alpha,\eta,\epsilon},\theta_{\hat\alpha,\eta,\epsilon},\tilde\theta_{\hat\alpha,\eta,\epsilon})$ a full sample parameter vector $\psi_{\alpha,\eta,\epsilon}\sim q_{\hat\alpha,\eta}$ from a fitted \acrshort*{vmp}-flow evaluated at $\eta$. Denote by $\psi_{\alpha,\eta,\epsilon_{1:J}}$ a set of $J$ iid samples from $q_{\alpha,\eta}$ and let $\mathcal{D}=(Y,Z)$ denote the data. The WAIC is a function $\widehat {-\elpd}(\psi_{\alpha,\eta,\epsilon_{1:J}},D)$ of the samples and data \citep{Vehtari2016}. In order to implement \acrshort*{sgd} we need function evaluations and derivatives of $\widehat {-\elpd}(\eta)$ wrt $\eta$ (keeping only the $\eta$ dependence). Function evaluations are very fast. In our JAX/TensorFlow setup \citep{deepmind2020jax,Dillon2017tfp} we get $\eta$-derivatives using automatic differentiation through the functions in the \acrshort*{elpd} and all the way into $\psi_{\alpha,\eta,\epsilon_{j}},\ j=1,...,J$.
This can be seen in operation in our online code.

The main difficulty (in our example in \cref{subsec:exp_rnd_eff} where $\eta\in [0,1]^{N}$) is that the \acrshort{elpd} is clearly non-convex (from our plots), and quite flat when the $\eta$ components are all close to one. We therefore initialise \acrshort*{sgd} using an (informed) greedy backward search. This uses backwards selection over cuts starting from Bayes, cutting the module which gives the greatest reduction in $\widehat {-\elpd}$ and stopping when no decrease is possible. 

\section{Experiments} \label{sec:experiments}

Our experiments illustrate the following points: Variational-\acrshort*{smi} with a \acrshort*{nf} and with or without a \acrshort*{vmp} accurately approximates \acrshort*{smi}-posteriors at all $\eta$ in the examples we consider; the \acrshort*{vmp}-framework allows us to select an influence parameter \emph{vector} $\eta^*\in H$, where $H=[0,1]^C$ and $C$ is the number of cuts, at values of $C$ which are completely out of reach for one-$\eta$-at-a-time \acrshort*{mcmc} or variational-\acrshort*{smi}.
\acrshort*{mcmc} is fine if we want to check the \acrshort*{vmp} at a handful of $\eta$ values. %The flow and \acrshort*{vmp} neural architectures and optimisation settings used are given in \cref{sec:experiments_extra}.
%\todo{What are the points we want to make? Could also plot the losses and show how it goes up and down etc.}

We use two examples which have become default test cases \citep[e.g.][]{Plummer2015cut,Jacob2017together,Carmona2020smi,Liu2020sacut,Nicholls2022smi}. In the first epidemiological example taken from \cite{Plummer2015cut}, we show that \acrshort*{vmp} agrees with variational-\acrshort*{smi} and nested-\acrshort*{mcmc} (which serves as ground truth) across a range of $\eta$-values and in particular at $\eta=0,1$, Cut and Bayes.  An expressive variational family is needed, so while \acrshortpl*{nf} are effective, \acrshort*{mfvi} fails.  In the second random effects example taken from \cite{Jacob2017together}, we illustrate variational-\acrshort*{smi} with multiple cuts,
and compare different methods for estimating the utility $u(\eta)$ which we take as the \acrshort*{elpd} throughout.
In a companion paper, \cite{Carmona2022spatial}, we give an analysis of a spatial model where \acrshort*{mcmc} at even one $\eta$-value is infeasible. This (third) extended example illustrates careful choice of utility for $\eta$-selection, as well as being of independent interest in the application domain.

Our variational family has \acrfull*{nsf} transformers \citep{Durkan2019neural} with \acrshort*{mlp} conditioners in eight coupling layers \citep{Dinh2016realnvp}, with a \acrshort*{mfvi} analysis for comparison. See \cref{sec:nf} for this terminology. We found this arrangement gave an expressive transformation $T=(T_1,T_2,T_2)$ that was easily trained and worked for both examples. Code to replicate all results in this section is available as an open-source repository \footnote{\url{https://github.com/chriscarmona/modularbayes}}.

Our implementation is based on DeepMind JAX Ecosystem \citep{deepmind2020jax} and TensorFlow Probability \citep{Dillon2017tfp}. Experiments were carried out using a single Cloud TPU machine type v3-8.
Qualitative runtimes to approximate a single \acrshort*{smi} posterior for the Random Effects example using our favoured \acrshort*{nf} were in the range of $10$ minutes and sampling 10000 iid samples takes less than a second. This total time is similar to the time to obtain one correlated sample of size $10,000$ at one $\eta$-value using nested \acrshort*{mcmc}. Training the \acrshort*{vmp} required between $0.5$ and $2$ hours. However, this training time is compensated by a significant reduction in the search for $\eta^*$, as we can generate samples from any $q_{\hat\alpha,\eta}$ and estimate $u(\eta)$ (the WAIC) in a fraction of a second. Optimisation using greedy initialisation and \acrshort*{sgd} requires thousands of WAIC-estimates and took about 5 minutes using the \acrshort*{vmp}, whereas each of these estimates would take 10 minutes using \acrshort*{mcmc}. Further, we cannot get gradients by automatic differentiation in nested \acrshort*{mcmc}.

\subsection{Epidemiological Model} \label{subsec:exp_epidemiology}

We revisit the well-known epidemiological model for the correlation between \acrfull*{hpv} prevalence and cervical cancer incidence \citep[see][for details]{Maucort-Boulch2008, Plummer2015cut}.
In this modular model a small ``expensive'' prospective trial controlling sample selection from the target population gives straightforward statistical modelling. A second much larger retrospective data set contains information about population parameters, but was gathered with little control over sample selection bias. This sort of data synthesis appears frequently. For example, the simplest Covid prevalence model in \cite{Nicholson2021covid}, which brings together sample survey data and walk-in testing results, belongs to this class.

The data consist of four variables observed from $n=m=13$ groups of women from $n$ different countries.
The model has two modules, a Binomial distribution for the number $Z_i,\ i=1,...,n$ of women infected with HPV in a sample of size $N_i$ from the $i$'th group and a Poisson distribution for the number of cancer cases $Y_i$ during $T_i$ women-years of followup. That is,
\begin{gather*}
  Z_i \sim Binomial(N_i, \varphi_i ). \nonumber \\
  Y_i \sim Poisson( \mu_i ) \\
  \mu_i=T_i \exp( \theta_1+\theta_2 \varphi_i ),\quad i=1,\ldots,13.\nonumber
\end{gather*}
Following previous authors, the parameter spaces are $\theta\in [0,\infty)^2$ and $\varphi\in [0,\infty)^n$ and the priors are truncated independent normal priors with variance 1000.

% Those authors use a Cut posterior for analysis. The HPV data have this structure: $Z$ is prospective and representative; $Y$ is retrospective and biased.
% The graphical model in \cref{fig:toy_multimodular_model} applies in these settings.
% Restricting the feedback from the Poisson module over the posterior of $\varphi$ is of interest, so the location and ``direction'' of the Cut is the same as well.
% The \acrshort*{smi} posterior introduces an influence parameter, $\eta$, which controls the contribution from the retrospective study (the Poisson module) into the prospective study (the Binomial module).
%In our example the \acrshort*{smi} analysis in \cite{Carmona2020smi} selects the Cut-posterior, so the data from the retrospective study should not be used in estimation of the parameters $\varphi$ of the prospective study.

Our variational approximation takes an $L=17$-dimensional independent standard Normal $p(\epsilon)=N(\epsilon; 0,\mathbb{I}_{17})$ as our \emph{base distribution} ($L_1=13$ elements for $\varphi$, $L_2=2$ for $\theta$, and so also $L_2=2$ for $\tilde\theta$).
% \begin{equation*}
%   \epsilon \sim N(0,\mathbb{I}_{17}),
% \end{equation*}
% This is transformed to give the \acrshort*{smi} posterior,
% \[
% \left(\varphi_{(\lambda_1,\epsilon)}, \theta_{(\lambda_2,\epsilon)}, \tilde\theta_{(\lambda_3,\epsilon)} \right) = \Big(T_1( \epsilon_{1:13} ; \lambda_1 ),\; T_2( \epsilon_{14:15} ; \lambda_2, \epsilon_{1:13} ),\; T_2( \epsilon_{16:17} ; \lambda_3, \epsilon_{1:13} )\Big)
% \]
% \begin{align*}
%   \varphi_{(\lambda_1,\epsilon)}      & = T_1( \epsilon_{1:13} ; \lambda_1 ) \\
%   \theta_{(\lambda_2,\epsilon)}       & = T_2( \epsilon_{14:15} ; \lambda_2, \epsilon_{1:13} ) \\
%   \tilde\theta_{(\lambda_3,\epsilon)} & = T_2( \epsilon_{16:17} ; \lambda_3, \epsilon_{1:13} ) \\
% \end{align*}
The \acrshort*{nf}-conditioner in $T_2$ (\cref{sec:nf}) takes $\epsilon_{1:13}$ as an input, allowing correlation between $\varphi$ and $\theta$ and between $\varphi$ and $\tilde\theta$, and conditional independence between $\theta$ and $\tilde\theta$ given $\varphi$. 

Samples from a \acrshort*{vmp} $q_{f_{\hat\alpha}(\eta)}$ fitted using \acrshort*{vmp}-\acrshort*{nsf}-\acrshort*{mlp} and a \acrshort*{vmp}-map are shown in  \cref{fig:epidemiology_vmp} (at $\eta\in \{0.001,0.1,1\}$, corresponding to the Cut, Bayes, and a value of $\eta$ ``halfway'' between the two \footnote{for illustration we take 0.1 instead of 0.5 because posteriors with $\eta	\gtrsim 0.2 $ are very similar}). ``Ground truth'' \acrshort*{smi} distributions obtained using nested-\acrshort*{mcmc} \citep{Plummer2015cut,Carmona2020smi} are shown in \cref{fig:epidemiology_mcmc} for comparison. Samples from the \acrshort*{vmp} $q_{\hat\alpha,\eta}$ fitted using a \acrshort*{vmp}-flow and samples from variational-\acrshort*{smi} distributions $q_{\hat\lambda(\eta)}$ (estimating $\lambda^*(\eta)$ separately at each $\eta$ without a \acrshort*{vmp}) are essentially identical to the variational $q_{f_{\hat\alpha}}(\eta)$ posteriors and are omitted. The good agreement here to \acrshort*{mcmc} shows both that the training losses $\mathcal{L}^{(\smi,\eta)}$ and $\mathcal{L}^{(\msmi)}$ we wrote down in \cref{subsec:vi_multi} and \cref{sec:meta_posterior} are doing their job and enforcing a good fit to $p_{\smi,\eta}$ over all $\eta\in [0,1]$, and at the same time interpolating variational approximations with good inferential properties to the Cut (no $Y$-module feedback) and Bayes (full feedback) posteriors.

In \cref{sec:experiments_extra} we include a comparison with \acrshort*{mfvi} (see \cref{fig:epidemiology_mfvi}). This demonstrates its failure, under-dispersed relative to the target $p_{\smi,\eta}$, and demonstrates the advantages of using an expressive flow-family.
In this example we omit the final stage of an \acrshort*{smi} analysis, that is, we do not estimate the \acrshort*{elpd} and select $\eta^*$ and $p_{\smi,\eta^*}$. As the variational posteriors match nested \acrshort*{mcmc}, this part of the analysis is the same as that given in \cite{Carmona2020smi} (though faster, as sampling our flow is \emph{much} faster than nested-\acrshort*{mcmc}).

\begin{figure}[!htb]
  \centering
  \includegraphics[width=0.48\textwidth]{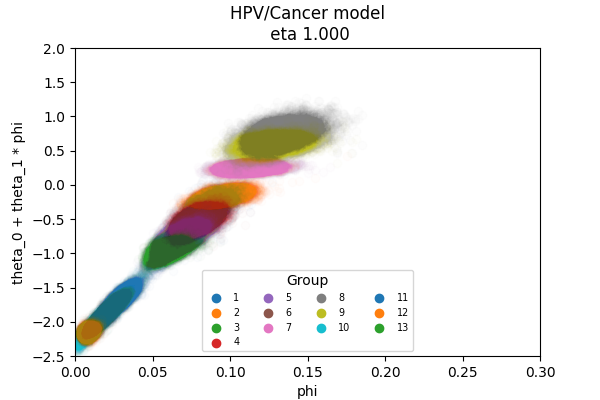}
  \includegraphics[width=0.37\textwidth]{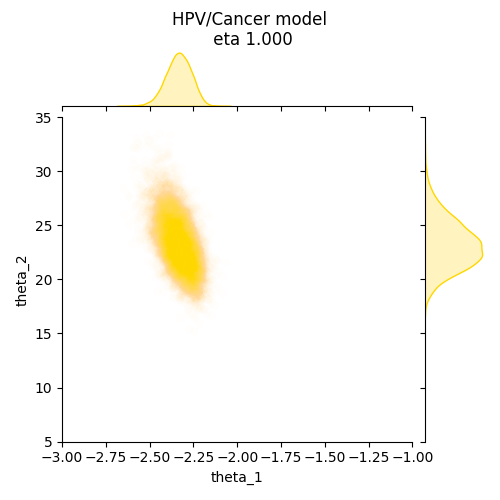}
  \includegraphics[width=0.48\textwidth]{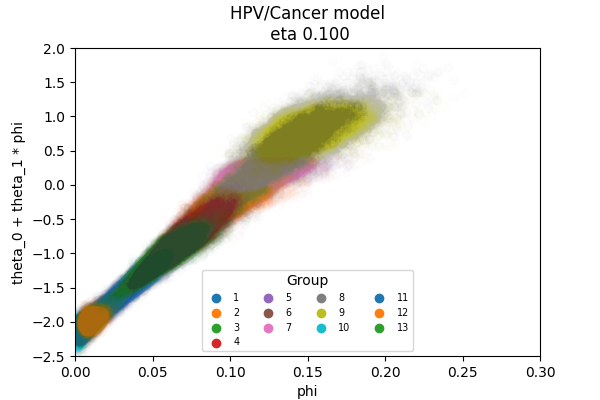}
  \includegraphics[width=0.37\textwidth]{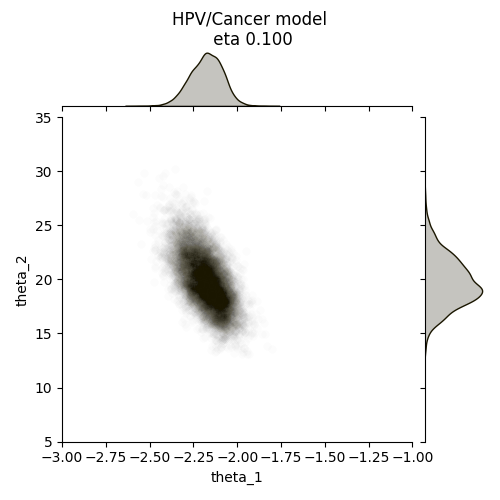}
  \includegraphics[width=0.48\textwidth]{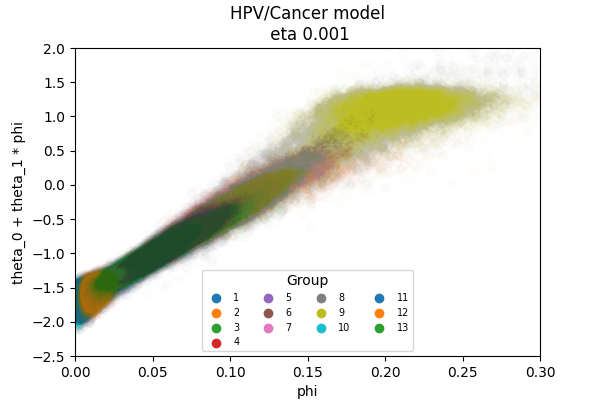}
  \includegraphics[width=0.37\textwidth]{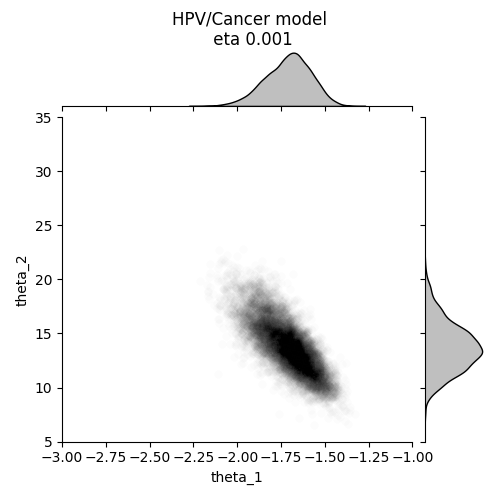}

  \caption[Epidemiology model VMP]{
    Samples from the \acrlong*{vmp} in \cref{subsec:exp_epidemiology}, obtained using \acrshort*{vmp}-\acrshort*{nsf}-\acrshort*{mlp}.
    Rows correspond to influence levels $\eta=(0.001,0.1,1)$.
    The left column shows the relation between HPV prevalence ($\varphi$) and cervical cancer incidence $\mu_i=\theta_1+\theta_2\varphi_i$ colored by group $i=1,...,n$ for the $n=13$ groups in the data.
    The right column shows how the joint distribution of slope ($\theta_1$) and intercept ($\theta_2$) varies with $\eta$.
    All samples are produced using a single \acrshort*{vmp}-map.
    %In the appendix (\cref{fig:epidemiology_mcmc}) we include a similar figure with samples produced by \acrshort*{mcmc} sampling to illustrate the accuracy of the variational approximation.
  }
  \label{fig:epidemiology_vmp}
\end{figure}

\subsection{Random effects model with Multiple Cuts} \label{subsec:exp_rnd_eff}

The example in this section illustrates the \acrshort*{vmp} with multiple cuts, demonstrating its convenience in settings with more than two modules. The modules are all potentially misspecified, which in this model gives thirty cuts.
The \emph{influence parameter vector} $\bmo{\eta}=(\eta_1,\ldots,\eta_{N})$, $\eta\in H$ regulates the influence of each module.
We take our random effects model and synthetic data setup from \cite{Liu2009modularization} and \cite{Jacob2017together}.

Denote by $Y_i=(y_{i,1}, \ldots, y_{i,n_i})$ the data in group $i=1,...,N$.  We take $N=30$ and $n_i=5$ below. The hierarchical Gaussian model with random effects $\beta=(\beta_i)_{i=1,...,N}$ and variances $\sigma=(\sigma_i)_{i=1,...,N}$ is specified as follows:
\begin{align*}
  Y_{i,j} \sim Normal( \beta_{i}, \sigma_{i}^2 ), & \quad i=1,\ldots,N \; ; \; j=1,\ldots,n_i \\
  \beta_{i} \mid \tau \sim Normal( 0, \tau^2 ),   &
\end{align*}
with $\sigma$ and $\tau$ priors
\begin{align*}
  p(\sigma_{i}^2)            & \propto \sigma_{i}^{-2}                                                          \\
  p(\tau \mid \bmo{\sigma} ) & \propto \frac{1}{\tau^2 + \frac{1}{N} \sum_{i=1}^{N} \frac{\sigma_{i}^2}{n_i} }.
\end{align*}
A graphical representation of the model and its cuts is displayed in \cref{fig:rnd_eff_model}.
\begin{figure}[!htb]
  \centering
  % \large
  \def\svgwidth{0.4\textwidth}
  \import{images/}{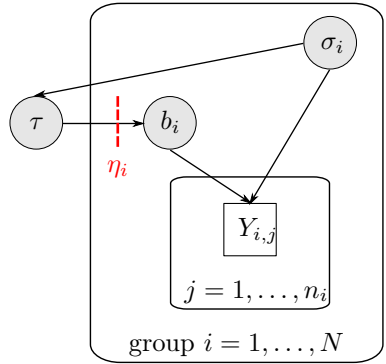}
  \caption{Graphical representation of the model for random effects.}
  \label{fig:rnd_eff_model}
\end{figure}
The Bayes posterior is
\[
  p(\bmo{\sigma},\bmo{\beta},\tau \mid Y) \propto p(Y \mid \bmo{\sigma},\bmo{\beta})p(\bmo{\beta} \mid \tau)p(\tau \mid \bmo{\sigma}) p(\bmo{\sigma}).
\]
Since this is a study on synthetic data, we follow \cite{Liu2009modularization} and take the number of groups $N$ to be large and the number of replications $n_i$ per group to be small as this gives a strong distorting effect under the misspecified model.
We simulate data from the true observational model, $p^*(Y)=p(Y\mid \bmo{\beta^*},\bmo{\sigma^*})$, with two very large random effects, $\beta^*_1=10$ and $\beta^*_2=5$ and zero for the rest, $\beta^*_i=0$ for $i=3,\ldots,N$. We take a unit scale for all groups, $\sigma^*_i=1$ for $i=1,\ldots,N$.
The data are shown in \cref{fig:rnd_eff_data}.
\begin{figure}[!htb]
  \centering
  % TODO: group index must start at 0 in the figure
  \includegraphics[width=\textwidth]{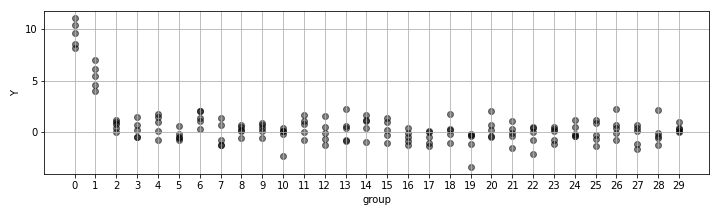}
  \caption[Random effect data]{Simulated data for the random effects model of \cref{subsec:exp_rnd_eff}. With $\beta_1^*=10$, $\beta_2^*=5$, $\beta^*_i=0$ for $i=3,\ldots,N$ and $\sigma^*_i=1$ for $i=1,\ldots,N$.}
  \label{fig:rnd_eff_data}
\end{figure}

Our choice of how to divide the model into modules depends on how we plan to cut.
\cite{Liu2009modularization} have two modules and one cut: Module 1 is the observation module $(Y_{i},\beta_i,\sigma_i)_{i=1,...,N}$, while Module 2 is the prior module $(\beta_i,\sigma_i,\tau)_{i=1,...,N}$. Those authors discuss how the Bayes posterior is distorted when the underlying true random effect $\beta_1$ in a single group of observations is significantly different from the rest of the groups, so the $\beta|\tau$ prior is misspecified. They take a Cut posterior $p_{\cut}(\sigma,\tilde\beta|Y)$, replacing the prior $p(\beta|\tau)$ with an improper imputation prior $\tilde p(\tilde\beta)\propto 1$. This eliminates \emph{prior feedback} as in \cref{subsec:mod_prioir_feed_paper} and \cref{sec:prior_feedback} using an implicit imputation prior. This prior is conjugate so the random effects $\tilde\beta$ are integrated out to give, at the first stage, $p_{\cut}(\sigma \mid Y)=\prod_i p(\sigma_i|Y_i)$. At the second stage the Cut distribution of $\sigma|Y$ is fed into the posterior $p(\beta,\tau|Y,\sigma)$, using the original prior $p(\beta \mid \tau)$ and conditioning on the imputed $\varphi$.

In contrast to this all (Bayes) or nothing (Cut) approach, \acrshort*{smi} \emph{reduces} the influence of \emph{some} of the groups that may be causing contamination of the posterior.
Our modules are for $i=1,...,N$ the separate generative models for $(Y_i,\beta_i,\sigma_i)$ and $(\beta_i,\sigma_i,\tau)$ so we have in effect $2N=60$ modules, and we work with the joint distributions involving $\beta$. The marginals would still be available, but the higher dimensional joint distributions have interesting shapes and present more of a challenge.
Our target \acrshort*{smi} posterior is
\begin{equation}\label{eqn:rnd_eff_smi_posterior}
  p_{smi, \bmo{\eta}}(\bmo{\sigma}, \bmo{\beta}, \tau, \bmo{\tilde\beta}, \tilde\tau \mid Y) = p_{pow, \bmo{\eta}}(\bmo{\sigma}, \bmo{\tilde\beta}, \tilde\tau \mid Y) \; p(\bmo{\beta}, \tau \mid \bmo{\sigma}, Y),
\end{equation}
where
\begin{align}
  p_{pow, \bmo{\eta}}(\bmo{\sigma}, \bmo{\tilde\beta}, \tilde\tau \mid Y) & \propto p(Y \mid \bmo{\tilde\beta}, \bmo{\sigma} ) \; p(\tilde\tau \mid \bmo{\sigma}) \; \prod_{i=1}^{N} \tilde{p}_{\eta_i}(\tilde\beta_i \mid \tilde\tau), \\
  p(\bmo{\beta}, \tau \mid \bmo{\sigma}, Y)                               & \propto p(Y \mid \bmo{\beta}, \bmo{\sigma} ) \; p(\tau \mid \bmo{\sigma}) \; \prod_{i=1}^{N} p(\beta_i \mid \tau)
\end{align}
Performing \acrlong*{smi} using \cref{eqn:rnd_eff_smi_posterior} entails a 30-dimensional influence parameter $\eta \in H$ with $H= [0,1]^{30}$.
The \acrshort*{smi} imputation is
\[
  (\sigma,\tilde\beta,\tilde\tau)\sim p_{pow, \bmo{\eta}}(\bmo{\sigma}, \bmo{\tilde\beta}, \tilde\tau \mid Y),
\]
and the analysis is
\[
  (\bmo{\beta}, \tau \mid \bmo{\sigma})\sim p(\bmo{\beta}, \tau \mid \bmo{\sigma}, Y).
\]
The modulated priors $\tilde{p}_{\eta_i}$ used at the imputation stage are chosen to interpolate between the same cut prior $\tilde p(\tilde\theta)\propto 1$ and analysis priors as before. Following the discussion in \cref{sec:prior_feedback}, we define the modulated imputation prior as a normal density
\[
  \tilde{p}_{\eta_i}(\tilde\beta_i \mid \tilde\tau) = \mathcal{N}(\tilde\beta_i ; 0, \tilde\tau / \eta_i ),
\]
and $\tilde{p}_{\eta}(\tilde\beta \mid \tilde\tau)=\prod_i \tilde{p}_{\eta_i}(\tilde\beta_i \mid \tilde\tau)$. This parameterisation gives the Cut prior $\tilde{p}_{\eta}(\tilde\beta \mid \tilde\tau)\to \tilde{p}(\tilde\beta)$ as $\eta\to 0$ and the Bayes prior $\tilde{p}_{\eta}(\tilde\beta \mid \tilde\tau)\to p(\tilde\beta|\tilde\tau)$ as $\eta\to 1$.
% We could also have defined this imputation prior using an interpolation based on a power, $p(\beta_i \mid \tilde\tau)^{\eta}$. However, this would result in modifying the prior for $\sigma$ and $\tau$ (perhaps inadvertently!), as in that case $\int \tilde{p}_{\eta_i}(\tilde\beta_i \mid \tilde\tau)p(\tau \mid \sigma)$

Our goal is to optimally modulate the feedback from each group into the shared distribution of the $\beta$'s. Accurate estimates of the utility $u(\eta)$ in \cref{eq:ELPD_var_smi_def} are needed in order to locate the maximum-utility influence-vector $\eta^*$ in \cref{eqn:eta-star-defn}. We could
approximate the modular posterior, using either nested-\acrshort*{mcmc} or variational \acrshort*{smi}, at a lattice of $\eta$-values in $H=[0,1]^N$, but this quickly becomes impractical with increasing $N$. Instead we approximate the candidate \acrshort*{smi} posteriors at \emph{all} $\eta$ in a single function by learning $q_{f_{\hat\alpha}(\eta)}$ and $q_{\hat\alpha,\eta}$, the flow- and map-based \acrlongpl*{vmp}. We found we could do this fairly accurately with the essentially same \acrshort*{vmp}-\acrshort*{nsf}-\acrshort*{mlp} meta-variational setup we used in \cref{subsec:exp_epidemiology}.
%(see \cref{subsec:exp_rnd_eff_extra}).
The inputs to the flow are $\epsilon_1\sim N(0,\mathbb{I}_N)$ (these express $\sigma$), $\epsilon_2\sim N(0,\mathbb{I}_{N+1})$ (expressing $\beta, \tau$) and $\epsilon_3\sim \epsilon_2$ (expressing $\tilde\beta, \tilde\tau$). Runtimes for \cref{alg:v_meta_smi} are manageable as simulation of the \acrshort*{vmp},
\begin{equation}\label{eq:hm_sim_vsmi}
  (\sigma,\beta,\tau,\tilde\beta,\tilde\tau)\sim q_{f_\alpha(\eta)} 
\end{equation}
at given $\alpha,\eta$ is fast: set $(\lambda_1,\lambda_2,\lambda_3)=f_{\alpha}(\eta)$ and then compute $\sigma=T_1(\lambda_1,\epsilon), (\beta,\tau)=T_2(\lambda_2,\epsilon)$ and $(\tilde\beta,\tilde\tau)=T_2(\lambda_3,\epsilon)$ using the deterministic flow mapping in \cref{eq:smi_transform}. Simulation of $q_{\hat\alpha,\eta}$ is slightly faster.

Our results at $\eta=\bm{0}_N,\bm{1}_N$ are qualitatively consistent with the Cut and Bayes analyses in \citet[Sec.~4.4]{Jacob2017together}. We took two misspecified effects rather than one, but in other respects the setup is the same. Samples from the exact $p_{\smi,\eta}$ posterior produced via \acrshort*{mcmc} are shown in \cref{fig:rnd_eff_mcmc} for a selection of variable pairs.
Comparing these with the corresponding distributions given by \acrshort*{vmp}-\acrshort*{nsf}-\acrshort*{mlp} in \cref{fig:rnd_eff_vmp} using a \acrshort*{vmp}-map, we see good agreement across each of the three rows/$\eta$-configurations, despite the highly irregular contour shapes. We emphasise that \emph{all} samples are produced from a \emph{single} \acrshort*{vmp} $q_{f_{\hat\alpha}(\eta)}$, and we just plug in different $\eta$-values to get different rows in \cref{fig:rnd_eff_vmp}. The \acrshort*{vmp}-flow density $q_{\hat\alpha,\eta}$ converged much more rapidly to agreement with the ground truth than did the \acrshort*{vmp}-map density $q_{f_{\hat\alpha}(\eta)}$, but $\mathcal{Q}_{H,\hat\alpha}$ approximates $\mathcal{P}_{\smi}$ accurately in both cases. We omit the corresponding \acrshort*{vmp}-flow plots as they are essentially identical. 

Selected components of the \acrshort*{vmp}-map $f_{\hat\alpha}(\eta)$ are shown in \cref{fig:rnd_eff_vmp_map}. The surfaces in the top row show a complex structure across the two axes: varying influence parameters in the misspecified groups has a significant impact on the variational posteriors.
However, the surface in the bottom row of plots is almost constant with $\eta_3$: cutting one of these ``good'' groups with labels $3,...,30$ doesn't have a strong impact on the variational posterior.

Producing nested-\acrshort*{mcmc} plots for comparison with the corresponding \acrshortpl*{vmp} at a few $\eta$-values is undemanding. However, this is where the contribution from \acrshort*{mcmc} ends. Accurate estimation of the the utility $u(\eta)=\elpd(\eta)$ over $\eta\in H$ becomes prohibitively expensive using two-stage nested-\acrshort*{mcmc} methods.
%The $\eta$-weight distribution $\rho$ is a numerical convenience, used to concentrate the $\eta$ search in regions where accurate estimation is needed, so we could ignore the normalising constant in the \acrshort*{mcmc} Hastings-ratio, and in effect allow the $\eta$-target to become $\rho(\eta)/p_{\smi,\eta}(Y,Z)$. However, this leads to something like path-sampling, where we need to reweight the $\rho$-distribution in order to push $\eta$ into important search regions.
The posterior predictive $p(y,z|Y,Z)$ in \cref{eq:ELPD_var_smi_def} must in general be estimated using samples from the \acrshort*{vmp}. Although the density $q_{\hat\alpha,\eta}$ is available in closed form and can be sampled independently, it is nevertheless a complicated function.  However, as noted above, the simulation in \cref{eq:hm_sim_vsmi} is fast. In \cref{fig:rnd_eff_elpd} we plot (negative) $\elpd(\eta)$-surfaces using the operational WAIC-estimator. We check this estimate using direct simulation of synthetic data $y'\sim p^*(\cdot)$. In the top row, reducing the feedback from the two misspecified modules improves predictive performance (the \acrshort*{elpd} is larger at smaller $\eta_1,\eta_2$-values).
In the bottom row, where we vary $(\eta_1,\eta_3)\in[0,1]$, the rates for one misspecified and one well-specified group, we see that the \acrshort*{elpd} surface is relatively flat for $\eta_3$, the influence a well-specified group, though trending up with increasing $\eta_3$. 

\Cref{fig:rnd_eff_greedy_search} illustrates the initialisation stage for the minimisation of the WAIC. The second \acrshort*{sgd}-stage on the WAIC target terminates quickly from this initialisation. The estimated optimal values $\hat\eta^*$ (rounded the first decimal place) are 
\[\hat\eta^*=(0,0,1,1,1,1,1,1,1,1,1,1,1,1,1,0.9,1,1,0.9,1,1,1,0.8,1,1,1,1,1,1,1).\]
% [0.          , 0.0017527284, 0.9773286   , 1.          ,
%              0.9884817   , 0.979367    , 0.9967817   , 0.9892808   ,
%              0.92853594  , 0.99279094  , 0.98333687  , 1.          ,
%              0.9996555   , 0.9517141   , 1.          , 0.8765335   ,
%              0.96036845  , 0.9856562   , 0.94023335  , 1.          ,
%              0.99676484  , 0.99690336  , 0.80675805  , 1.          ,
%              0.9999974   , 0.99999845  , 0.99589676  , 0.9926863   ,
%              0.97440845  , 0.98974717  ]
This result shows the method is working as information from the first two modules is cut while the rest are Bayes or close to Bayes. This is as expected as we have synthetic data with modules 1 and 2 misspecified.
The resulting $\hat\eta^*$ gives a $p_{\smi,\hat\eta^*}$ which is hard to distinguish from
the $p_{\smi,\eta}$ posterior at $\eta=(0,0,1,...,1)$ (shown
in \cref{fig:rnd_eff_model}) as the \acrshort*{smi}-posterior is insensitive to changes to $\eta_3,...,\eta_N$ close to Bayes values.

\begin{figure}[!htb]
  \centering
  \includegraphics[width=0.24\textwidth]{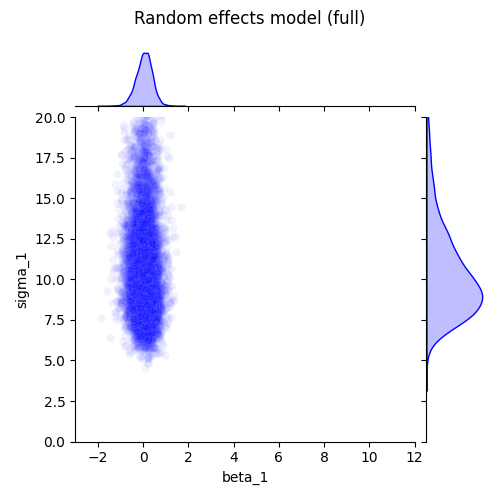}
  \includegraphics[width=0.24\textwidth]{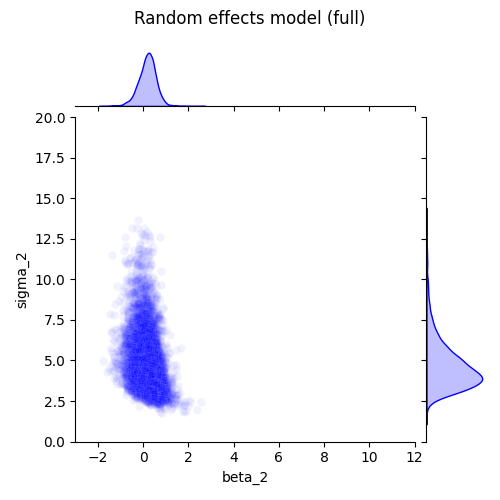}
  \includegraphics[width=0.24\textwidth]{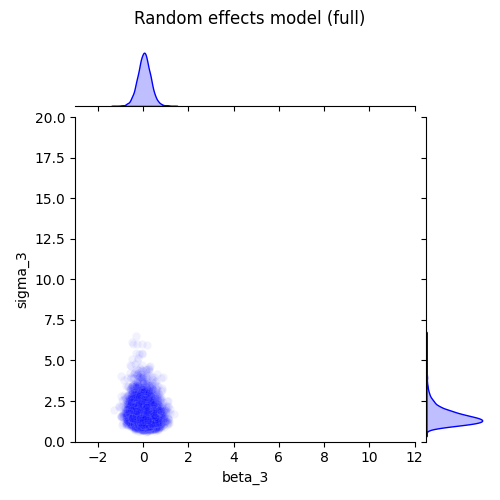}
  \includegraphics[width=0.24\textwidth]{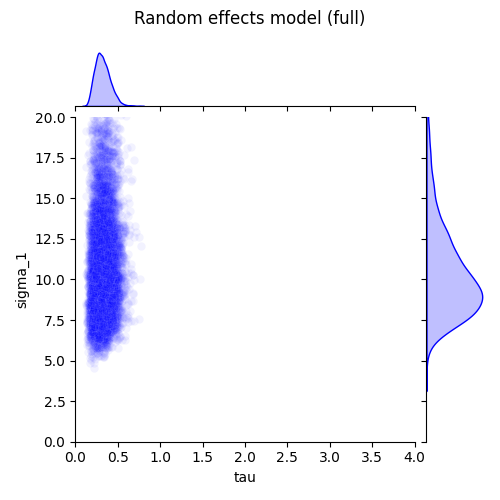}

  \includegraphics[width=0.24\textwidth]{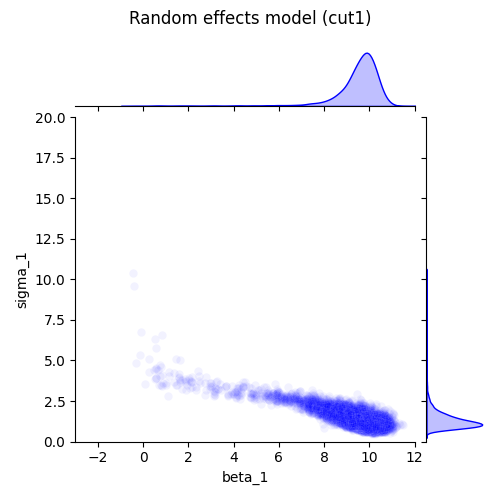}
  \includegraphics[width=0.24\textwidth]{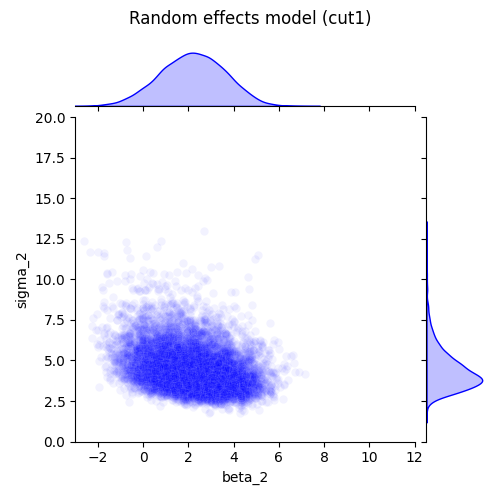}
  \includegraphics[width=0.24\textwidth]{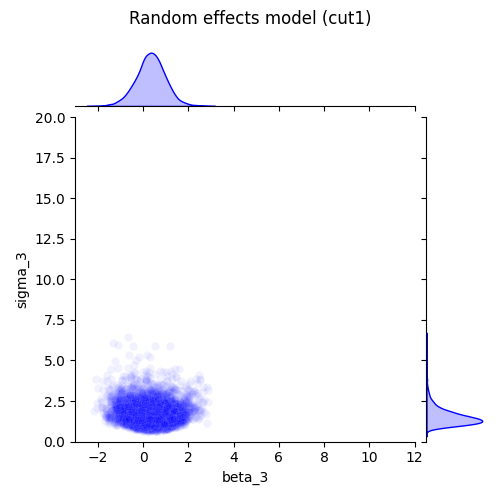}
  \includegraphics[width=0.24\textwidth]{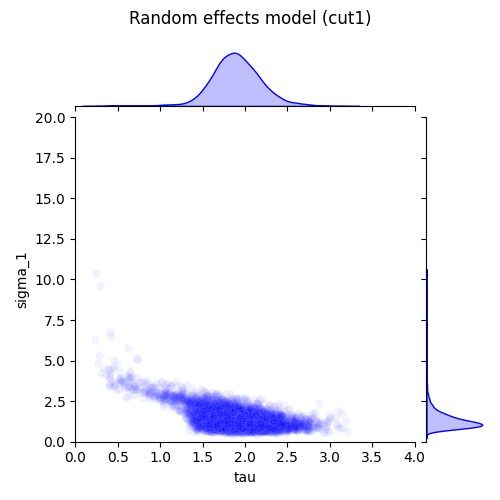}

  \includegraphics[width=0.24\textwidth]{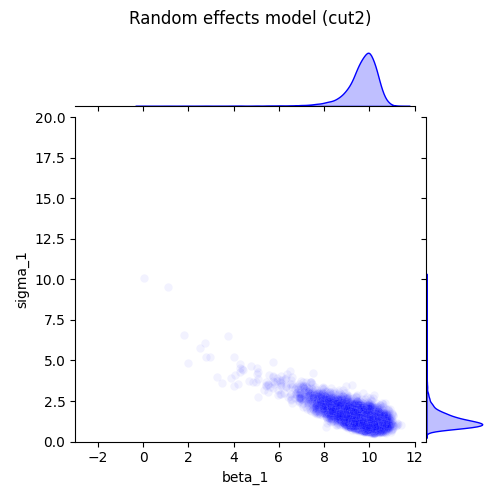}
  \includegraphics[width=0.24\textwidth]{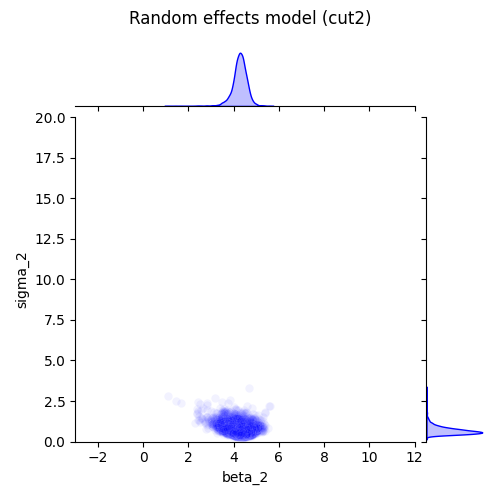}
  \includegraphics[width=0.24\textwidth]{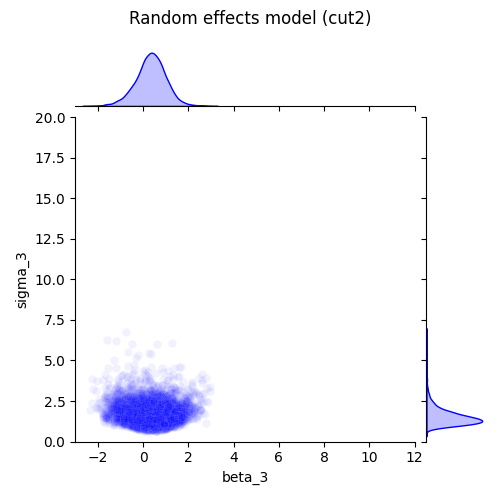}
  \includegraphics[width=0.24\textwidth]{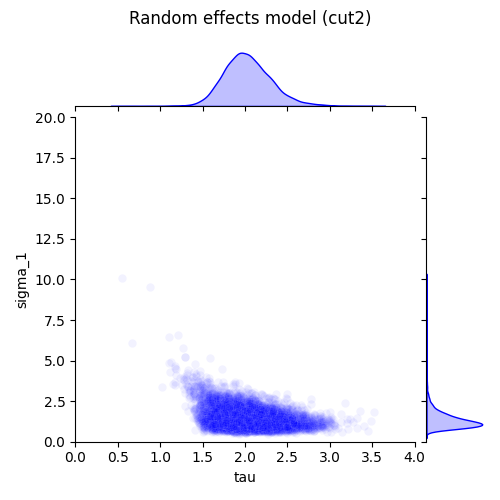}

  \caption[Random Effects model VMP]{
    Samples from the \acrlong*{vmp} of the Random Effects model, obtained from a \acrshort*{vmp}-\acrshort*{nsf}-\acrshort*{mlp} architecture. Each graph shows the joint distribution of a selected pair of parameters. Rows correspond to three modular \emph{feedback} configurations between groups: (Top row) Bayes, $\eta_1=...=\eta_{30}=1$; (Middle) One Cut module, $\eta_1=0$, $\eta_2=...=\eta_{30}=1$; (Bottom) Two Cut Modules, $\eta_1=\eta_2=0$, $\eta_3=...=\eta_{30}=1$.
    Compare \acrshort*{mcmc} in \cref{fig:rnd_eff_mcmc}.
    %All samples are produced from a single \acrshort*{vmp}-map.
  }
  \label{fig:rnd_eff_vmp}
\end{figure}

\begin{figure}[!htb]
  \centering
  \includegraphics[width=0.98\textwidth]{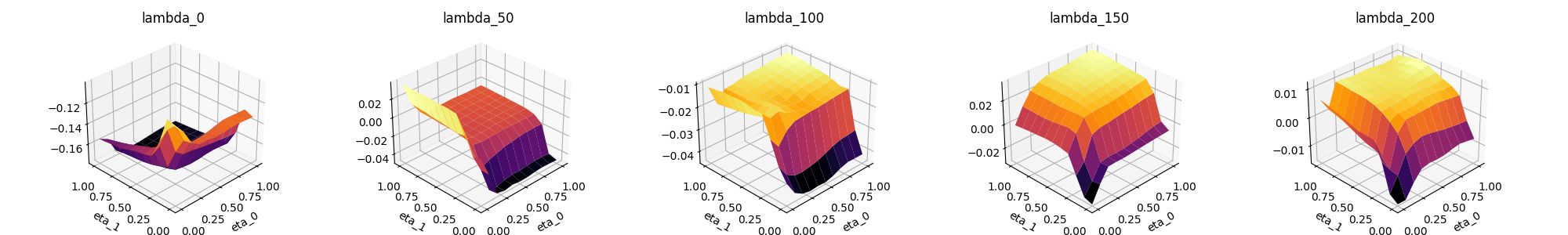}
  \includegraphics[width=0.98\textwidth,trim={0 0 0 1cm},clip]{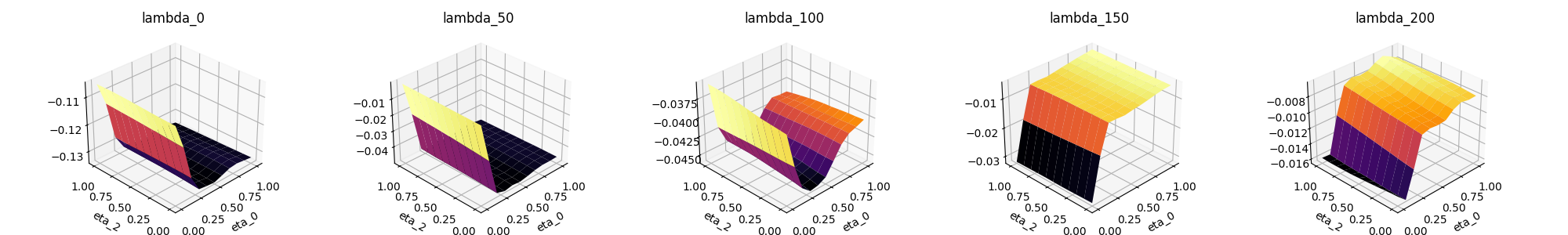}

  \caption[Random Effects model VMP map]{
  Trained \acrshort*{vmp}-map $\lambda^*(\eta)=f_{\alpha^*}(\eta)$ for selected components of the variational parameter vector $\lambda^*_i(\eta),\ i=0,50,100,150,200$ in a \acrlong*{nsf}.
  Here $\eta = (\eta_{1},\eta_{2},\ldots,\eta_{30})$, and we vary two selected $\eta$-components at a time, while keeping the rest constant.
  Top: vary $(\eta_1,\eta_2)\in[0,1]$ (associated with the misspecified groups) and set $\eta_3=\ldots=\eta_{30}=1$.
  Bottom: vary $(\eta_1,\eta_3)\in[0,1]$ (one misspecified group, one well-specified) and set $\eta_2=0$, $\eta_4=\ldots=\eta_{30}=1$.
  }
  \label{fig:rnd_eff_vmp_map}
\end{figure}

\begin{figure}[!htb]
\includegraphics[width=0.98\textwidth,trim={0 0 0 1.5cm},clip]{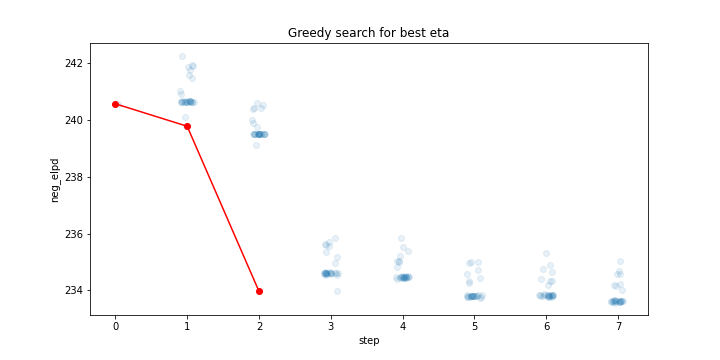}
\caption[Random Effects model VMP map]{
  Greedy search algorithm to initialize \acrshort*{sgd} $\eta^*$-estimation for the Random Effects model. The red line is the trajectory of the negative \acrshort*{elpd} obtained by applying the initialisation algorithm in \cref{subsec:best_smi_vmp}. The blue dots represent the $-\elpd$ of the ``candidate'' models at each step for the greedy search, produced by cutting one (additional) module. The search stops after two iterations, cutting the first and second modules, as expected. Each full step takes a fraction of second to compute.
  }
\label{fig:rnd_eff_greedy_search}
\end{figure}

\begin{figure}[!htb]
  \centering
  \includegraphics[width=0.75\textwidth,trim={10cm 0 0 0},clip]{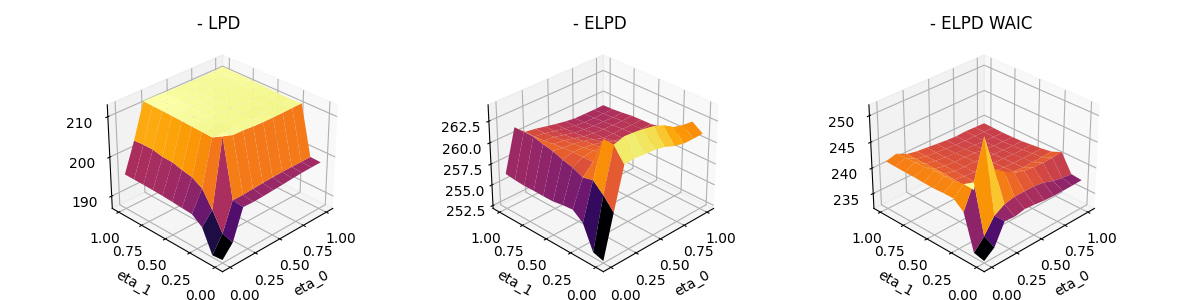}
  \includegraphics[width=0.75\textwidth,trim={10cm 0 0 1cm},clip]{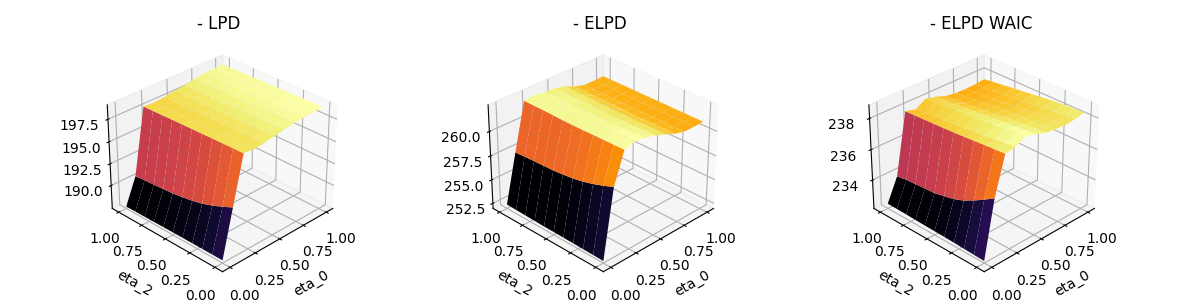}

  \caption[Random Effects model ELPD]{
    Predictive scores comparing performance of the \acrshort*{smi} posterior for different values of $\eta$ in the Random Effects model.
    We show two approximations to the negative ELPD. % \citep{Vehtari2016}.
    (Left column) \acrshort*{elpd} estimated using draws from the true generative model.
    (Right columns) \acrshort*{elpd} estimated using the WAIC. These match the left column which acts as a check. % \citep{Watanabe2012}.
    %Here $\eta = (\eta_{1},\eta_{2},\ldots,\eta_{30})$, and we vary two selected $\eta$-components at a time, while keeping the rest constant.
    (Top row) varying the influence parameters associated with the two misspecified groups, $(\eta_1,\eta_2)\in[0,1]$, fixing $\eta_2=\ldots=\eta_{30}=1$. (Bottom row) varying $(\eta_1,\eta_3)\in[0,1]$, the rates for one misspecified and one well-specified group, fixing $\eta_2=0$ and $\eta_4=...=\eta_{30}=1$.
  }
  \label{fig:rnd_eff_elpd}
\end{figure}

\section{Conclusions}

We have given variational families and loss functions for approximating Cut- and \acrshort*{smi}-posterior distributions. Much of the presentation is agnostic to the details of the variational family. However, we focus on parameterisation based on \acrshortpl*{nf} as this overcomes many well known weaknesses of variational inference. We saw no sign of underdispersion relative to an \acrshort*{mcmc}-baseline in our examples. In contrast \acrshort*{mfvi} approximated the $\eta$-dependent mean of the $p_{\smi,\eta}$ target well but was significantly underdispersed.

The loss function we use (really two loss functions) in \cref{defn:var-smi} is not the standard \acrshort*{kl} divergence between the variational density and target, as that loss does not allow control of information flow between modules. Our loss removes dependence on cut modules when we impute parameters in well-specified modules at $\eta=0$ (the Cut-posterior). Just as $p_{\smi,\eta}$ interpolates between the Cut and Bayes posterior distributions, so our variational approximation $q_{\lambda^*(\eta)}$ exactly interpolates between a variational approximation to the Cut due to \cite{Yu2021variationalcut} and standard variational Bayes. Although the optimized loss need not decrease as we enlarge the variational family, it goes to zero, and we recover the exact target, as the family expands to include the target.

In variational \acrshort*{smi} our goal is to approximate distributions in the family $\mathcal{P}_{\smi}=\{p_{\smi,\eta},\ \eta\in H\}$. We gave a \acrlong*{vmp} $\mathcal{Q}_{H,\alpha}=\{q_{\alpha,\eta},\ \eta\in H\}$ which fits all the distributions in $\mathcal{P}_{\smi}$ at the same time, by taking $\eta$ as a conditioned quantity in the \acrshort*{nf}. We called the modified \acrshort*{nf} the \acrshort*{vmp}-flow. The \acrshort*{sgd} in \cref{alg:v_meta_smi} finds $\alpha^*$ which fits $\mathcal{Q}_{H,\alpha^*}$ to $\mathcal{P}_{\smi}$ in a single joint optimisation for efficient end-to-end training. We gave two parameterisations of the \acrshort*{vmp}. We favor the \acrshort*{vmp}-flow. Our second parameterisation of the \acrshort*{vmp}, $\mathcal{Q}_{H,\alpha}=\{q_{\lambda(\alpha,\eta)},\ \eta\in H\}$, trains a \acrshort*{vmp}-map $f_{\alpha}(\eta)$ to output the optimal variational parameters $\lambda^*(\eta)$ as functions of $\eta$. The two approaches gave similar variational approximations, but training the \acrshort*{vmp}-flow was faster, as \acrshort*{sgd} converged more steadily and rapidly with less tuning of optimisation hyper-parameters.

One advantage of using the \acrshort*{vmp} is that is allows us to modulate feedback from multiple modules at the same time. When we apply a Cut-posterior we have to pre-identify misspecified modules in order to give the locations of cuts. We gave an analysis in which we cut every data-module separately and estimated the associated influence parameters. This allows us to discover rather than pre-identify the cut-modules. 
Accurate estimation of the \acrshort*{elpd} over $\eta\in H$ calls for \acrshort*{smi} or variational-\acrshort*{smi} posterior samples at 
$C$-exponentially many $\eta$ values
for $C$ cuts, and this is prohibitively expensive using one-$\eta$-at-a-time methods such as \cref{alg:vsmi} or nested-\acrshort*{mcmc} methods. 
%A possible joint \acrshort*{mcmc} approach, which adds $\eta\sim \rho$ to the \acrshort*{smi} parameter space $(\sigma,\beta,\tau,\tilde\beta,\tilde\tau)$, is hindered by an intractable $p_{\smi,\eta}$ normalising constant, and leads to path-sampling.

The main weaknesses of our methods are first, a certain amount of experimentation was needed to find flow architectures that worked well for our targets. However, having found an architecture (with coupling-layer \acrshort*{mlp}-conditioners and rational spline transformers) that worked well, it worked well for all targets. Further exploration is needed to see if this holds more generally. Tuning of the initialisation and learning rate in \cref{alg:vsmi} and \cref{alg:v_meta_smi} were needed also. Another weakness is that the final step of the analysis, after the \acrshort*{vmp} is trained and we have only to estimate and maximise the \acrshort*{elpd} and select the optimal influence parameters $\eta$, is not straightforward, at least for high dimensional $\eta$. The estimated \acrshort*{elpd} is non-convex in $\eta$. Working with the \acrshort*{vmp} makes this step as easy as possible, as sampling the \acrshort*{vmp} is very fast.

%The \acrshort*{vmp} exploits the smoothness of $\lambda(\eta)$ over $\eta$ and rapid simulation of $q_{\lambda(\eta)}$ in \cref{alg:v_meta_smi} and in utility estimation. This brings multiple Cuts within reach.

%Still quite alot of messing about with architechtures and optimisation schemes.

The \acrshort*{vmp} framework may be useful outside \acrlong*{smi}. Approximating a complete family of models indexed by a set of continuous hyperparameters has potential applications in parameter selection for hyperpriors in standard Variational Bayesian inference. The marginal density  $q_{\alpha^*,\eta}(\varphi,\tilde\theta),\ \eta\in H$ gives variational approximations to the power posterior distribution $p_{\pow,\eta}(\varphi,\tilde\theta\mid Y,Z)$ at every $\eta$ at the same time, and this may be all we need if the power posterior rather than \acrshort*{smi} is the target.

%\clearpage
%\newpage

\section*{Acknowledgements}
We thank Dennis Prangle, David Nott and Kamélia Daudel for insighful discussions on Variational Methods and Modular Inference.

Research supported with Cloud TPUs from Google's TPU Research Cloud (TRC)

\bibliographystyle{ba}
\bibliography{references}

\clearpage
\newpage

\appendix
\section{Modulating prior feedback}\label{sec:prior_feedback}

The material in this section does not address the main point of this paper, which is to present variational inference for \acrshort*{smi}.
However, the issue we cover here does not seem to have been addressed explicitly in the literature to date, and we need some supporting theory for one of our main examples, in \cref{subsec:exp_rnd_eff}.

In the standard Cut-model setup given above, a likelihood factor which is present in the analysis stage has been removed in the imputation stage.
For example, in \cref{eq:cut_posterior}, the likelihood $p(Y \mid \varphi,\theta)$ is absent in $p(\varphi \mid Z)$ and present in $p(\theta \mid Y,\varphi)$.
In some applications of Cut-posteriors and \acrshort*{smi}, going back to \cite{Liu2009modularization} and \cite{Jacob2017together}, and including the archaeological applications in \cite{Styring2017extensification}, \cite{Carmona2020smi} and \cite{Styring2022urban}, the misspecified module has no associated data. In these examples the feedback within a prior module is cut.
If a Cut is applied to a \textit{prior density} and we simply remove the prior factor at imputation then all that remains in the imputation posterior distribution is the base measure.
We have effectively replaced the imputation prior density with a constant, and this may be inappropriate in some settings.
However, we are free to choose the imputation prior and we should use this freedom.
%In all Cut model examples from the literature where this issue is relevant, the imputation prior distribution is taken to be proportional to volume measure.
In this section we discuss Cut-priors as a special case of Cut-posterior inference and point out that the choice of imputation prior must be justified.
%We can see this in \cref{eq:cut_posterior}: the likelihood factor $p(Y \mid \varphi,\theta)$, which is present in \cref{eq:phi_post}, the  imputation posterior $p(\varphi \mid Z,Y)$ for $\varphi$, has been removed in the Cut \cref{eq:cut_posterior}, where the imputation posterior is $p(\varphi \mid Z)$.

In order to see how this works, we take a simply structured example, as we did in \cref{fig:toy_multimodular_model}.
Consider a generative model of the form
\begin{align*}
  \varphi & \sim p(\cdot)                                  \\
  \theta  & \sim p(\cdot \mid \varphi)                     \\
  Y_i     & \sim p(\cdot \mid \varphi,\theta),\ i=1,...,n.
\end{align*}
This is the model represented by the leftmost graph in \cref{fig:cut-prior}.
\begin{figure}
  \centering
  \includegraphics[height=1.5in]{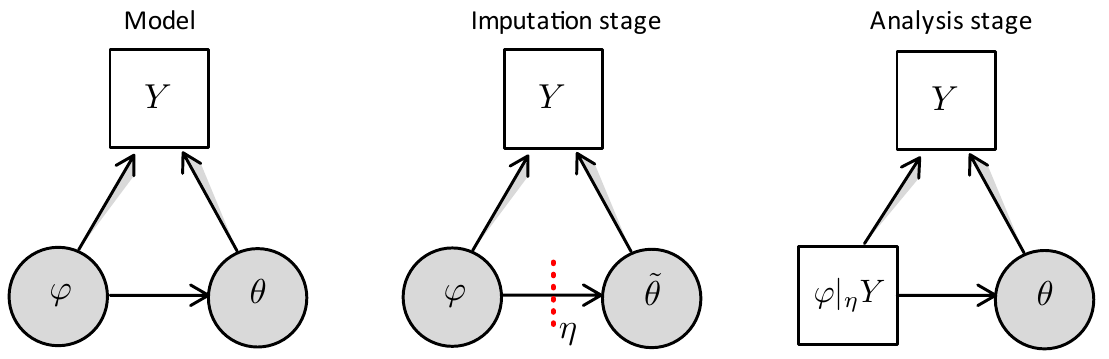}
  \caption{Graphical representation of Cut and \acrshort*{smi}-posterior inference with a Cut prior. Notation as \protect\cref{fig:toy_multimodular_model_2stg}. (Left) Graphical model with prior $p(\theta \mid \varphi)$ indicated by arrow $\varphi\to\theta$.  (Mid) First (imputation) stage, auxiliary variable $\tilde\theta$ introduced for $\varphi$ imputation. Prior $\varphi|\tilde\theta$ modulated as indicated by the red dashed line. (Right) Second (analysis) stage giving inference for $\theta$, now conditioning on the Cut distribution of $\varphi|Y$.}
  \label{fig:cut-prior}
\end{figure}
The posterior is
\begin{equation}\label{eq:bayes_in_cut_prior}
  p(\varphi,\theta \mid Y)\propto p(Y \mid \varphi,\theta)p(\varphi)p(\theta \mid \varphi).
\end{equation}
Suppose we want to cut feedback from $\theta$ into $\varphi$.
In the Cut-posterior for imputation of $\varphi$, the generative model for the data is
\begin{align*}
  \varphi      & \sim p(\cdot)                                         \\
  \tilde\theta & \sim \tilde p(\cdot)                                  \\
  Y_i          & \sim p(\cdot \mid \varphi,\tilde \theta),\ i=1,...,n.
\end{align*}
This model is represented by the middle graph in \cref{fig:cut-prior}.
The new parameter $\tilde\theta$ is an auxiliary variable with Cut prior $\tilde p(\cdot)$.
Some measure for $\tilde\theta$ is needed in the imputation of $\varphi$ as $\tilde\theta$ is unknown and present in $p(Y \mid \varphi,\tilde \theta)$.
Work to date takes volume measure in the parameter space of $\theta$ without generally remarking that a choice had to be made (for example Equation~17 in \cite{Jacob2017together}). Exceptions include \cite{Moss2022}, who take a carefully constructed Cut prior $\tilde p$, and \cite{Styring2017extensification} and \cite{Yu2021variationalcut}, who remark on the choice.
The issue doesn't arise when we remove a likelihood factor, as any parameters enter the likelihood as conditioned variables.
%and $\tilde Y$ is the data vector of random variables determined by this generative model, and we enforce $\tilde Y=Y$ in a Cut-posterior analysis - this is just the statement that we use different models in the imputation and analysis.
In the analysis stage the generative model is
\begin{align*}
  \theta & \sim p(\cdot \mid \varphi)                     \\
  Y_i    & \sim p(\cdot \mid \varphi,\theta),\ i=1,...,n.
\end{align*}
This model is represented by the rightmost graph in \cref{fig:cut-prior}.
The Cut-posterior is
\begin{equation}\label{eq:cut_posterior_CUTPRIOR}
  p_{\cut}(\varphi,\theta,\tilde\theta \mid Y)=p_{\cut}(\varphi,\tilde\theta \mid Y)p(\theta \mid Y,\varphi),
\end{equation}
where
\begin{equation}\label{eq:CUTPRIOR_cut_imputation_post}
  p_{\cut}(\varphi,\tilde\theta \mid Y)\propto p(\varphi)\tilde p(\tilde\theta)p(Y \mid \varphi,\tilde\theta),
\end{equation}
and
\begin{equation}\label{eq:cut_prior_ThetaAnalysis_post}
  p(\theta \mid Y,\varphi)=p(\theta \mid \varphi)\frac{p(Y \mid \varphi,\theta)}{p(Y \mid \varphi)}.
\end{equation}
The cut prior $\tilde p(\tilde\theta)$ used for imputation in \cref{eq:CUTPRIOR_cut_imputation_post} is a modelling choice, like the Bayes prior $p(\theta \mid \varphi)$ which appears in the posterior.
Typically $p(\theta \mid \varphi)$ is a Subjective Bayes prior expressing the relations linking $\theta$ and $\varphi$ available from physical considerations, but is misspecified, and $\tilde p(\tilde\theta)$ will typically be a non-informative Objective Bayes prior.

The loss function $l_{\cut}(Y; \varphi,\theta,\tilde\theta, \pi_0)$ for which
\[
  p_{\cut}(\varphi,\theta,\tilde\theta \mid Y)\propto \exp(-l_{\cut}(Y; \varphi,\theta,\tilde\theta, \pi_0))p(\varphi)\tilde p(\tilde\theta)p(\theta \mid \varphi)
\]
is a Gibbs posterior is
\begin{equation}\label{eq:HM_full_cut_loss}
  l_{\cut}(Y; \varphi,\theta,\tilde\theta, \pi_0)=-\log(p(Y \mid \varphi,\tilde\theta))-\log(p(Y \mid \varphi,\theta))+\log(p(Y \mid \varphi)).
\end{equation}
and $\pi_0(\varphi,\theta,\tilde\theta)=p(\varphi)\tilde p(\tilde\theta)p(\theta \mid \varphi)$ is the prior which we must specify as part of the loss as it appears in $p(Y \mid \varphi)$ and changes as belief is updated.

This Cut-posterior belief update which cuts feedback in a prior is order coherent in the sense of \cite{Bissiri2016}.
This is known \citep{Carmona2020smi} for Cut-posteriors with the ``standard'' setup of \cref{sec:mod_bayes}.
However, the Cut-posterior with imputation and analysis priors is qualitatively different.

\begin{proposition}\label{prop:cut_prior_is_OK}
  The Cut posterior in \cref{eq:cut_posterior_CUTPRIOR} with Cut-prior feedback is an order coherent belief update.
\end{proposition}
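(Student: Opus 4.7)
The plan is to follow the same coherence argument used for the standard Cut posterior, tracking carefully how the self-referential marginal-likelihood factor $\log p(Y\mid\varphi)$ in the loss $l_{\cut}$ rewrites itself when the prior is updated. Concretely, we need to verify the Bissiri coherence identity
\[
  \psi(l_{\cut}(Y;\cdot,\pi_0),\pi_0) = \psi(l_{\cut}(Y^{(2)};\cdot,\pi_1),\pi_1),\qquad \pi_1 = \psi(l_{\cut}(Y^{(1)};\cdot,\pi_0),\pi_0),
\]
for any partition $Y=(Y^{(1)},Y^{(2)})$ with $Y_i$ conditionally independent given $(\varphi,\theta)$. The prior $\pi_0(\varphi,\theta,\tilde\theta)=p(\varphi)\tilde p(\tilde\theta)p(\theta\mid\varphi)$ is fixed but, critically, $\pi_1$ replaces it as the reference measure in the second step, which changes the meaning of the marginal likelihood $p(Y^{(2)}\mid\varphi)$ appearing in $l_{\cut}$.

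First, I would compute $\pi_1$ explicitly from \cref{eq:cut_posterior_CUTPRIOR}, obtaining $\pi_1(\varphi,\tilde\theta,\theta)=p_{\cut}(\varphi,\tilde\theta\mid Y^{(1)})\,p(\theta\mid Y^{(1)},\varphi)$, and read off its conditional structure: the $\varphi$-marginal is the Cut $\varphi$-posterior after $Y^{(1)}$, the conditional $\pi_1(\tilde\theta\mid\varphi)$ is the corresponding $\tilde\theta$-conditional (not $\tilde p$ any more), and $\pi_1(\theta\mid\varphi)=p(\theta\mid Y^{(1)},\varphi)$. Then I would write out the second-stage Gibbs update, multiplying $\pi_1$ by $\exp(-l_{\cut}(Y^{(2)};\varphi,\theta,\tilde\theta,\pi_1))$ with $l_{\cut}$ as in \cref{eq:HM_full_cut_loss} but with $p(Y^{(2)}\mid\varphi)$ reinterpreted as
\[
  p_1(Y^{(2)}\mid\varphi) \;=\; \int p(Y^{(2)}\mid\varphi,\theta)\,\pi_1(\theta\mid\varphi)\,d\theta.
\]

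The key reduction is that, by conditional independence of $Y^{(1)}$ and $Y^{(2)}$ given $(\varphi,\theta)$ and the identification $\pi_1(\theta\mid\varphi)=p(\theta\mid Y^{(1)},\varphi)$, this integral equals $p_0(Y^{(2)}\mid Y^{(1)},\varphi)$ under the original generative model. With this substitution, the product $\pi_1\cdot\exp(-l_{\cut}(Y^{(2)};\cdot,\pi_1))$ becomes, up to normalisation,
\[
  p(\varphi)\tilde p(\tilde\theta)p(\theta\mid\varphi)\cdot\frac{p(Y^{(1)}\mid\varphi,\tilde\theta)\,p(Y^{(2)}\mid\varphi,\tilde\theta)\,p(Y^{(1)}\mid\varphi,\theta)\,p(Y^{(2)}\mid\varphi,\theta)}{p_0(Y^{(1)}\mid\varphi)\,p_0(Y^{(2)}\mid Y^{(1)},\varphi)},
\]
where the numerator collapses to $p(Y\mid\varphi,\tilde\theta)\,p(Y\mid\varphi,\theta)$ and the denominator to $p_0(Y\mid\varphi)$ by the chain rule and conditional independence. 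This is exactly the one-shot update $\pi_0\cdot\exp(-l_{\cut}(Y;\cdot,\pi_0))$, proving coherence after renormalising.

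The main obstacle is step three, namely the identification $p_1(Y^{(2)}\mid\varphi)=p_0(Y^{(2)}\mid Y^{(1)},\varphi)$. This is the non-additive part of the loss and the reason the proof is not immediate: it hinges on the fact that the \emph{analysis}-stage conditional $p(\theta\mid Y^{(1)},\varphi)$ that Cut-posterior inference produces is precisely the Bayesian update of $p(\theta\mid\varphi)$ by $Y^{(1)}$, so feeding it back as a prior inside the second-stage marginal yields the correct joint predictive. The auxiliary variable $\tilde\theta$ enters only through the (additive) term $\log\tilde p(\tilde\theta)$ in $\pi_0$ and the likelihood factors $p(Y^{(k)}\mid\varphi,\tilde\theta)$, both of which split cleanly over the partition of $Y$, so the fact that a \emph{prior} rather than a likelihood has been cut introduces no new difficulty beyond bookkeeping. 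The argument is a direct transcription of the Cut-posterior coherence proof in \cite{Carmona2020smi}, adapted to the expanded parameter $(\varphi,\theta,\tilde\theta)$ and the modified prior $\pi_0$.
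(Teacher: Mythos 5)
Your proposal is correct and follows essentially the same route as the paper's proof: you split $Y=(Y^{(1)},Y^{(2)})$, identify the updated prior $\pi_1$ with $\pi_1(\theta\mid\varphi)=p(\theta\mid Y^{(1)},\varphi)$ and $\pi_1(\varphi,\tilde\theta)=p_{\cut}(\varphi,\tilde\theta\mid Y^{(1)})$, and reduce coherence to the chain-rule identity $p(Y^{(1)}\mid\varphi)\,p(Y^{(2)}\mid Y^{(1)},\varphi)=p(Y\mid\varphi)$. The paper packages this same key step as ``prequential additivity'' of the loss $l_{\cut}$, whereas you phrase it as the identification $p_1(Y^{(2)}\mid\varphi)=p_0(Y^{(2)}\mid Y^{(1)},\varphi)$ followed by telescoping of the product, but the computation is identical.
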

\begin{proof}\label{proof:cut_prior_is_OK}
  If we split the data $Y=(y^{(1)},Y^{(2)})$ then the imputation prior
  $\pi_0(\varphi,\tilde\theta)=p(\varphi)\tilde p(\tilde\theta)$ is updated to $\pi_1(\varphi,\tilde\theta)=p_{\cut}(\varphi,\tilde\theta \mid Y^{(1)})$ and the analysis prior $\pi_0(\theta \mid \varphi)=p(\theta \mid \varphi)$ is updated to $\pi_1(\theta \mid \varphi)=p(\theta \mid \varphi,Y^{(1)})$.
  The update is order coherent if
  \begin{align*}
    \exp(-l_{\cut}(Y; \varphi,\theta,\tilde\theta, \pi_0))\pi_0(\varphi,\theta,\tilde\theta) = \exp(-l_{\cut}(Y^{(2)}; \varphi,\theta,\tilde\theta, \pi_1))\pi_1(\varphi,\theta,\tilde\theta).
  \end{align*}
  Expanding the RHS using the updated prior $\pi_1$ and applying \cref{eq:HM_full_cut_loss},
  \begin{align*}
    RHS & = \exp(-l_{\cut}(Y^{(2)}; \varphi,\theta,\tilde\theta, \pi_1))p_{\cut}(\varphi,\tilde\theta,\theta \mid Y^{(1)})                                                                                                                                 \\
        & \propto \exp(-l_{\cut}(Y^{(2)}; \varphi,\theta,\tilde\theta, \pi_1))\ \times\ p(\varphi)\tilde p(\tilde\theta)p(Y^{(1)} \mid \varphi,\tilde\theta)\ \times\ p(\theta \mid \varphi)\frac{p(Y^{(1)} \mid \varphi,\theta)}{p(Y^{(1)} \mid \varphi)} \\
        & = \exp(-l_{\cut}(Y^{(2)}; \varphi,\theta,\tilde\theta, \pi_1)-l_{\cut}(Y^{(1)}; \varphi,\theta,\tilde\theta, \pi_0))p(\varphi)\tilde p(\tilde\theta)p(\theta \mid \varphi)                                                                       \\
        & =LHS
  \end{align*}
  where the last step holds so long as
  \[
    l_{\cut}(Y^{(2)}; \varphi,\theta,\tilde\theta, \pi_1)+l_{\cut}(Y^{(1)}; \varphi,\theta,\tilde\theta, \pi_0)=l_{\cut}(Y; \varphi,\theta,\tilde\theta, \pi_0).
  \]
  This is the property \cite{Nicholls2022smi} call prequential additivity. It is easilly verified here.
  By independence $p(Y \mid \varphi,\theta)=p(Y^{(2)} \mid \varphi,\theta)p(Y^{(1)} \mid \varphi,\theta)$ so,
  \begin{align*}
    l_{\cut}(Y^{(2)}; \varphi,\theta,\tilde\theta, \pi_1)+l_{\cut}(Y^{(1)}; \varphi,\theta,\tilde\theta, \pi_0) = &
    -\log(p(Y \mid \varphi,\tilde\theta))-\log(p(Y \mid \varphi,\theta))                                                                                                                   \\
                                                                                                                  & \ +\log(p(Y^{(2)} \mid \varphi,Y^{(1)}))+\log(p(Y^{(1)} \mid \varphi))
  \end{align*}
  with
  \begin{align*}
    \log(p(Y^{(2)} \mid \varphi,Y^{(1)}))+\log(p(Y^{(1)} \mid \varphi)) & = \log\left(\frac{p(Y^{(2)},\varphi,Y^{(1)})p(\varphi)}{p(\varphi,Y^{(1)})p(\varphi)}\right)+\log(p(Y^{(1)} \mid \varphi)) \\
                                                                        & =\log(p(Y \mid \varphi)).
  \end{align*}
\end{proof}

When we do \acrshort*{smi} with modulated priors we interpolate between the Bayes posterior in \cref{eq:bayes_in_cut_prior} and Cut posterior in \cref{eq:cut_posterior_CUTPRIOR}. Let $p_\eta(\tilde\theta \mid \varphi)$ be a family of probability densities indexed by $\eta$ and satisfying
$p_{\eta=0}(\tilde\theta \mid \varphi)=\tilde p(\tilde\theta)$ (modulated prior equals Cut prior) and $p_{\eta=1}(\tilde\theta \mid \varphi)=p(\tilde\theta \mid \varphi)$ (modulated prior equals Bayes prior).
We take
\begin{equation}\label{eq:smi_full_cutprior}
  p_{\smi,\eta}(\varphi,\theta,\tilde\theta \mid Y)=p_{\pow, \eta}(\varphi,\tilde\theta \mid Y)p(\theta \mid Y,\varphi)
\end{equation}
where in this setting with a Cut prior
\[
  p_{\pow, \eta}(\varphi,\tilde\theta \mid Y) \propto p(Y \mid \varphi,\tilde\theta)p(\varphi) p_\eta(\tilde\theta \mid \varphi),
\]
and $p(\theta \mid Y,\varphi)$ is given in \cref{eq:cut_prior_ThetaAnalysis_post}, so the second stage analysis is the Cut posterior analysis.
This ensures $p_{\smi,0}=p_{\cut}$ and $p_{\smi,1}$ gives the Bayes posterior. Taking a normalised family $p_\eta(\tilde\theta \mid \varphi),\ \eta\in [0,1]$ of interpolating priors ensures that the marginal prior for $p(\varphi)$ in the imputation doesn't depend on $\eta$. An un-normalised family such as $p_\eta(\varphi,\tilde\theta)\propto \tilde p(\tilde\theta)^{1-\eta} p(\tilde\theta \mid \varphi)^\eta$ has all the desired interpolating properties, but the marginal $p(\varphi)$ in the imputation stage will then depend on $\eta$.
In some settings (for example when working with normal priors with fixed variance) the two prior parameterisations may be equivalent as $\eta$ scales the variance.

\acrshort*{smi} with modulated prior feedback is an order coherent belief update. We now give \cref{prop:smi_cut_prior_is_OK}, stated in \cref{sec:mod_bayes}.

\noindent{\bf{\Cref{prop:smi_cut_prior_is_OK}}.}
{\it
  The \acrshort*{smi} posterior in \cref{eq:smi_posterior_CUTPRIOR} with cut prior feedback is an order coherent belief update.
}
\begin{proof} \label{proof:smi_cut_prior_is_OK}
  Replace $\tilde p(\tilde\theta)\to p_\eta(\tilde\theta \mid \varphi)$ in the proof of \cref{prop:cut_prior_is_OK}.
\end{proof}
In contrast to the tempered likelihood in \cref{eq:powjoint_toymodel} used in Cut-likelihood \acrshort*{smi} in \cref{eqn:smi_01}, where the tempered likelihood is not normalised over $Y$, the modulated prior $p_\eta(\tilde\theta \mid \varphi)$ is a normalised distribution over $\tilde\theta$. However, \acrshort*{smi} in \cref{eq:smi_full_cutprior} with a modulated prior is not simply Bayesian inference with a revised prior, as the priors in the imputation and analysis stages are not the same (unless $\eta=1$).

% \section{Proofs for Propositions in Section~\protect\ref{sec:mod_bayes}}
%
% \noindent{\bf{\Cref{prop:cut_prior_is_OK}}.}
% {\it
%   The Cut posterior in \cref{eq:cut_posterior_CUTPRIOR} with Cut-prior feedback is an order coherent belief update.
% }

\newpage
%%%%%%%%%%%%%%%%%%%%%%%%%%%%%%
\section{Variational families} \label{sec:variational_families_details}
%=============================

\subsection{Choice of transformations for SMI} \label{subsec:smi_flow}

The transformations $T_1$ and $T_2$ introduced in \cref{eq:bayes_transform,eq:smi_transform} determine our flow-based variational approximation to the \acrshort*{smi}-posterior. We make a few standard assumptions about these transformations:
\begin{itemize}
  \item They are diffemorphisms, meaning that they must be differentiable, invertible and their inverse must be differentiable (needed as we wish to be able to evaluate, differentiate and sample the density $q_\lambda$),
  \item They map the domain of the base distribution onto the domain of the corresponding model parameter (e.g. if $\varphi$ represents a probability, we need $\varphi=T(\epsilon_1)\in[0,1]$),
  \item They support fast sampling from the flow-based density, and fast computation of the density at the sampled values. This allows efficient computation of the forward transformation and the determinant of its Jacobian matrix.
\end{itemize}
Additionally, there are two attributes of $T_1$ and $T_2$ that are important for an accurate approximation, but not strict requirements, namely:
\begin{itemize}
  \item They are expressive transformations, ideally \acrlongpl*{nf} with the property of being a universal approximator,
  \item $T_2$ is defined as a conditional transformation, so that correlation in the posterior of $\varphi$ and $\theta$ can be represented.
\end{itemize}

These requirements are met by \acrfullpl*{nf}. There are now many ways to define expressive flow-based transformations for $T_1$ and $T_2$ \citep[see][for reviews]{Kobyzev2020normalizing,Papamakarios2021normalizing}.
For our experiments in \cref{sec:experiments}, we defined these transformations by the composition of multiple \emph{coupling} layers
% \footnote{
%   Coupling layers split the input vector of random variables, $z$ say, into two parts such that $z=(z_{\leq d}, z_{>d})$. The first part is transformed elementwise independently of other dimensions. The second part is transformed elementwise in a way that depends on the first part.
% }
based on the \acrfull*{nsf} \citep{Durkan2019neural}, followed by a last layer that maps to the domain of the model parameters.
In order to allow posterior correlation between $\varphi$ and $\theta$, we defined $T_2$ as a \emph{conditional} transformation, so that the elementwise transformation from $\epsilon_2$ to $\theta$ (and from $\epsilon_3$ to $\tilde\theta$) also depends on $\epsilon_1$.

% we use neural spline flows

% As an example of how to define the conditional transformation is by

% The variational parameters $\beta_2$ are typically parameters of the conditioner $c$.

% \subsection*{Mean-Field}

% Denote $\varphi=(\varphi_1,\ldots,\varphi_{13})^t \in \mathcal{R}^{13}$ and $\theta=(\theta_1,\theta_2)^t \in \mathcal{R}^2$. We show how to perform variational inference on the SMI posterior of this model using the proposed
% two-stage optimisation procedure.

% \subsubsection*{Full-rank Gaussian}
% Define the vector $\mu$ and the lower triangular matrix $L$ (with positive diagonal elements) as

% \subsubsection*{Inverse Autoregressive Flow (IAF)}
% Following \cite{Kingma2016iaf}
% $T_1$ is a Autoregressive Neural Network, MADE-like \cite{Germain2015made}
% $T_2$ is a Conditional Autoregressive Neural Network, using $\varphi$ as context information.

% \subsubsection*{Rational Quadratic Spline Flow}
% Following \cite{Durkan2019neural}

\subsection{Normalizing Flows.} \label{sec:nf}

\subsubsection{Conditioners and transformers in a general flow}
We now define the maps $T_1: \Re^{p_\phi}\to \Re^{p_\phi}$ and $T_2: \Re^{p_\theta+p_\phi}\to \Re^{p_\theta}$ in the normalising flows in terms of their transformers and conditioners. The material in the section is based on \cite{Papamakarios2021normalizing} and further detail may be found there.
Let $x=(x_1,...,x_p)$ and $v=(v_1,...,v_q)$ be generic real vectors and let \[x'_i=\tau(x_i,h_i)\] transform $x_i$ using parameter vectors $h_i\in \Omega_h,\ i=1,...,p$. Here \[h_i=c_i(x_{<i};v,w_i)\] is the output of a conditioner parameterised by a vector $w_i$ with $c_i$ taking as input the conditioning variables $x_{<i}=(x_1,...,x_{i-1})$ and $v$. Note that the conditioner argument $x_{<i}$ in $c_i$ changes dimension as we step through the components $i=1,...,p$ but all the component updates condition on a common set of shared variables $v$. In our setting the conditioners $c_i: \Re^{i-1+q}\to \Omega_h$ are \acrshortpl*{mlp} and $w_i$ are the weights in the $i$'th net. The transformer is a strictly monotonic function. For example, if it is affine then $h_i=(m_i,s_i)$ and $\tau(x_i,h_i)=m_i+s_ix_i$. We experimented with a range of transformers and settled on \emph{rational-quadratic spline transformers} \citep{Durkan2019neural}.
 
One pass over the flow composes these maps 
\[x_i\leftarrow \tau(x_i,c_i(x_{<i};v,w_i)) \quad \mbox{for $i=1,...,p$}\] to update all components $x\to x'$.
Denote by $g: \Re^{p+q}\to \Re^p$, $x'=g(x;v,w)$ a map formed in this way, with $w$ the set of all parameters present in the composition of maps. We make multiple passes over the components, with independent sets of parameters $w$, permuting the indices in order to get an expressive flow. The auto-regressive dependence of $x'_i$ on $x_{<i}$ and the monotone transformer ensure the map is invertible, \[x=g^{-1}(x';v,w)\] given the values of the shared conditioning variables $v$ and parameters $w$, with a tractable lower triangular Jacobian.

In a variant of this setup we used a composition of \emph{coupling layer conditioners} \citep{Dinh2016realnvp} to define $g$. These have the advantage that both density evaluation and sampling are fast operations, important for fitting the flow to the target, and then sampling the related flow to estimate the utility. In this case we set $x'_i=x_i,\ i=1,...,d$ with $d=\lfloor p/2\rfloor$ and
\[x'_i=\tau(x_i,h_i),\quad\mbox{with}\quad h_i=c_i(x_{\le d};v,w_i)\qquad \mbox{ for $i=d+1,...,p$.}\]
The Jacobian determinant from one application of the map is just $\prod_{i>d} |\partial \tau(x_i,h_i)/\partial x_i|$. Again we compose the maps over different permutations of the indices of $x$ (eight times) so that each entry appears in the conditioner and as output from the transformer to get an overall map $g(x;v,w)$.

\subsubsection{Normalizing flows for variational SMI}
In variational-\acrshort*{smi} in \cref{subsec:vsmi}, the \acrshort{nf} has the form 
\begin{align}
\varphi_{(\lambda_1,\epsilon)}&=T_1(\epsilon_1;\lambda_1)\nonumber\\
&=g(\epsilon_1;\emptyset,\lambda_1)\nonumber\\
\intertext{
as $\lambda_1$ parameterises the conditioners of $\varphi$ with no additional conditioners,}
\theta_{(\lambda_2,\epsilon)}&=T_2(\epsilon_2;\lambda_2,\epsilon_1)\nonumber\\
&=g(\epsilon_2;\epsilon_1,\lambda_2)\nonumber\\
\intertext{as $\theta$ is conditioned on $\epsilon_1$ and hence $\varphi$ and} 
\tilde\theta_{(\lambda_3,\epsilon)}&=T_2(\epsilon_3;\lambda_3,\epsilon_1)\nonumber\\
&=g(\epsilon_3;\epsilon_1,\lambda_3),\nonumber
\intertext{similarly $\tilde\theta$, and together}
T(\epsilon; \lambda)                & = \left( T_1(\epsilon_1;\lambda_1) ,\; T_2(\epsilon_2;\lambda_2, \epsilon_1) ,\; T_2(\epsilon_3;\lambda_3, \epsilon_1) \right). \label{eq:smi_transform_details}
\end{align}
These equations define the maps in \cref{eq:smi_transform}. When we want to train this to approximate $p_{\smi,\eta}$ we simply retrain at each $\eta$ where $q_{\lambda^*(\eta)}$ is needed.

\subsubsection{Parameterising the VMP-map}
In the \acrshort*{vmp}, we consider two ways to parameterise the flow defining the meta-posterior.
In the first, in \cref{eq:vmp-map-transform-variables-alpha123} in \cref{subsec:learning_vmp_map}, we define a function (a \acrshort*{mlp}) \[f_\alpha(\eta)=(f^{(1)}_{\alpha_1}(\eta),f^{(2)}_{\alpha_2})(\eta),f^{(3)}_{\alpha_3}(\eta))\] in which the output components correspond to $(\lambda_1(\eta),\lambda_2(\eta),\lambda_3(\eta))$ at any $\eta\in [0,1]^C$. The corresponding mappings in terms of the \acrshortpl*{nf} and their inputs are
\begin{align}
\varphi_{(\lambda_1(\alpha_1,\eta),\epsilon)}&=T_1(\epsilon_1;\lambda_1(\eta))\nonumber\\
&=g(\epsilon_1;\emptyset,f^{(1)}_{\alpha_1}(\eta))\nonumber\\
\theta_{(\lambda_2(\alpha_2,\eta),\epsilon)}&=T_2(\epsilon_2;\lambda_2(\eta),\epsilon_1)\nonumber\\
&=g(\epsilon_2;\epsilon_1,f^{(2)}_{\alpha_2}(\eta))\nonumber\\
\tilde\theta_{(\lambda_3(\alpha_3,\eta),\epsilon)}&=T_2(\epsilon_3;\lambda_3(\eta),\epsilon_1)\nonumber\\
&=g(\epsilon_3;\epsilon_1,f^{(3)}_{\alpha_3}(\eta)).\nonumber\\
T(\epsilon; \lambda(\eta))                & = \left( T_1(\epsilon_1;\lambda_1(\eta)) ,\; T_2(\epsilon_2;\lambda_2(\eta), \epsilon_1) ,\; T_2(\epsilon_3;\lambda_3(\eta), \epsilon_1) \right). \label{eq:vmp-map-transform-variables-alpha123_details}
\end{align}
This keeps the conditioner-transformer relations unchanged from \cref{eq:smi_transform_details}, and simply injects the right parameters $\lambda$ into the flow to express $p_{\smi,\eta}$ at any particular $\eta$.

A perfectly trained universal \acrshort*{vmp}-map $f_\alpha(\eta)$ would inject an optimal set $\lambda^*(\eta)$ of parameters into the \acrlong*{nf} at every $\eta$. 
We assume $f_{\alpha}(\eta)$ is a continuous function of $\eta$, motivated by the discussion above. We require $f_{\alpha}(\eta)$ to be differentiable almost everywhere in $\alpha$ for the purpose of optimisation. These are minimal assumptions. %, and continuously differentiable if we want the theory we developed for $q_\lambda$ to extend (without detailed proof) to $q_{f_{\alpha}(\eta)}$.
In order to extend the theory from \cref{sec:vi_modular} to this setting, suppose we have $\lambda^*(\eta)\in \Lambda^*$ where $\Lambda^*(\eta)$ is given in \cref{defn:var-smi} (at $\eta$). Suppose $f_\alpha(\eta)$ is continuously differentiable in $\alpha$ at each $\eta$ and there exists $\alpha^*$ such that $f_{\alpha^*}(\eta)=\lambda^*(\eta),\ \eta\in H$. In this case substituting $\lambda=f_{\alpha}(\eta)$ into $\mathcal{L}^{\smi,\eta}$ in \cref{prop:stop-gradient-loss} and minimising over $\alpha$ will give the same $\lambda^*$ with the same properties (P1-3).

Referring to the loss in \cref{eq:vsmi_loss}, let
\[
  A^*=\{\alpha\in A: \mathcal{L}^{(\msmi-map)}(\alpha)=\min_{a\in A} \mathcal{L}^{(\msmi-map)}(a)\}
\]
be the set of optimal $\alpha$-values.
If for some $\alpha^*\in A$, the \acrshort*{vmp}-map $f_{\alpha^*}(\eta)$ expresses the function $\lambda^*(\eta)$ perfectly at $\eta\in\eta_{1:R}$, that is if $f_{\alpha^*}(\eta_r)\in \Lambda^*(\eta_r),\ r=1,...,R$ then
\[
  A^*=\{\alpha\in A: f_{\alpha}(\eta_r)\in \Lambda^*(\eta_r),\ r=1,...,R\}
\]
since these solutions in $\alpha$ and no others minimise $\mathcal{L}^{(\smi,\eta)}$ at every $\eta$ in the sum in \cref{eq:meta_smi_target_loss}.
In this case we recover $\lambda^*\in \Lambda^*$ and (P1-3) at each $\eta\in \eta_{1:R}$.

For our examples in section 5, we found three Multi-Layer Perceptrons (MLPs) in
parallel, trained with input $\eta$ and output $f^{(k)}_{\alpha_k}(\eta),\ k=1,2,3$ expressed the non-linear
relationships holding between variational parameters, and scaled well with increasing
numbers of cuts (dimension of input $\eta \in H$).
We report experiments with  \acrshort*{vmp}-maps parameterised with \acrfullpl*{gp} \citep{Rasmussen2005gp} and Cubic Splines \citep{Hastie2001esl}. These may be convenient when the dimension of $H$ is small.

We found in training $f_{\alpha}$ that $\alpha$ should be initialised to output a constant function $f_{\alpha_0}(\eta)\approx\lambda_0$ independent of $\eta$. Here $\lambda_0$ are variational parameters obtained in a ``pre-training stage'' using \cref{alg:vsmi} for a fixed central value of $\eta$.
This strategy significantly reduces training time and improves convergence.

\subsubsection{Parameterising the VMP-flow}
The other way we parameterise the meta-posterior is to modify the conditioner. Consider the coupling layer setup. In order to express the $\eta$-dependence (in say the coupling layer conditioner), $x'_i=x_i,\ i=1,...,d$ and
\[
x'_i=\tau(x_i,h_i), \quad h_i=c_i(x_{\le d};v,w_i)+c'(\eta;w'_i),\quad \mbox{for $i=d+1,...,p$}.
\]
Here $\eta$ enters via a second additive conditioner $c'$ (which is not indexed by $i$ as it always takes the same conditioning variable $\eta$ but has parameters $w'_i$ which vary across variables $x_i,\ i>d$). Like $v$, $\eta$ is a common conditioner in every transform. When we compose this map 
to form the overall map, $g(x;(v,\eta),(w,w'))$ say, \emph{each} application of the map has its own sets of parameters $w_i$ and $w'_i$ \emph{for each} $i=d+1,...,p$. When we parameterise the meta-posterior we write $\alpha_k=(\lambda_k,\mu_k),\ k=1,2,3$ and
\begin{align}
    \varphi_{(\alpha_1,\eta,\epsilon)}&=T_1(\epsilon_1;\alpha_1,\eta)\nonumber\\
    &=g(\epsilon_1;(\emptyset,\eta),(\lambda_1,\mu_1)),\label{eq:meta_new_map_phi}\\
    \theta_{(\alpha_2,\eta,\epsilon)}&=T_2(\epsilon_2;\alpha_2, (\eta,\epsilon_1))\nonumber\\
    &=g(\epsilon_2;(\epsilon_1,\eta),(\lambda_2,\mu_2)),\label{eq:meta_new_map_theta}\\
    \tilde\theta_{(\alpha_3,\eta,\epsilon)}&=T_2(\epsilon_3;\alpha_3,(\eta,\epsilon_1))\nonumber\\
    &=g(\epsilon_3;(\epsilon_1,\eta),(\lambda_3,\mu_3)),\label{eq:meta_new_map_theta_tilde}\\
    T(\epsilon;\alpha,\eta) & = \left( T_1(\epsilon_1;\alpha_1,\eta) ,\; T_2(\epsilon_2;\alpha_2, (\eta,\epsilon_1)) ,\; T_2(\epsilon_3;\alpha_3,(\eta,\epsilon_1)) \right).
\end{align}
so that $\lambda=(\lambda_1,\lambda_2,\lambda_3)$ contain the $w$-parameters of the ``old'' conditioner $c_i$ in each of the flows for $\varphi,\theta$ and $\tilde\theta$ respectively and $\mu=(\mu_1,\mu_2,\mu_3)$ are the corresponding $w'$-parameters of the new $\eta$-dependent conditioner $c'$. These are collectively  the \acrshort{vmp} parameters $\alpha=(\alpha_1,\alpha_2,\alpha_3)$.
We take $\mu\in M$ and $\alpha\in A$ with $A=\Lambda\times M$. The influence parameter $\eta$ enters through the conditioner so it is associated with $\epsilon_1$ as a conditioner for $\theta$ and $\tilde\theta$ and as a conditioner on its own for $\varphi$.

The family of variational densities $q_{\alpha,\eta}(\varphi,\theta,\tilde\theta)$ defined by the \acrshort*{vmp}-flow are given in \cref{eq:smi_transform_new_vmp}. The optimal parameters $\alpha^*=(\lambda^*,\mu^*)$, $\alpha^*\in A$, minimise a loss $\mathcal{L}^{(\msmi-flow)}(\alpha)$ which is closely related to $\mathcal{L}^{(\msmi-map)}(\alpha)$ in \cref{eq:meta_smi_target_loss}. In more detail,
\begin{align} \label{eq:vsmi_loss-meta-new}
  \mathcal{L}^{( \msmi-flow, \eta)}(\alpha) = \elbo_{\pow, \eta}(\alpha_1,\alpha_3) + \elbo_{\bayes \cancel{\nabla}(\varphi)}(\alpha_1,\alpha_2)
\end{align}
where
\begin{align}
  \elbo_{\pow, \eta}(\alpha_1,\alpha_3) = \E_{(\varphi,\tilde\theta)\sim q_{\alpha_1,\alpha_3,\eta}}[                & \log p_{\pow, \eta}(\varphi, \tilde\theta, Z, Y) - \log q_{\alpha_1,\alpha_3,\eta}(\varphi, \tilde\theta) ] \label{eq:elbo_modular_pow_meta_new}                  \\
  \elbo_{\bayes \cancel{\nabla}(\varphi)}(\alpha_1,\alpha_2,\eta) = \E_{(\varphi,\theta)\sim q_{\alpha_1,\alpha_2,\eta}}[ & \log p( \cancel{\nabla}(\varphi), \theta, Z, Y) - \log q_{\alpha_1,\alpha_2,\eta}(\cancel{\nabla}(\varphi), \theta) ]. \label{eq:elbo_modular_stop_grad_meta_new}
\end{align}
The loss in \cref{eq:meta_smi_target_loss} given for the map becomes for the flow,
\[
 \mathcal{L}^{(\msmi-flow)}(\alpha) =\E_{\eta\sim\rho}\left(\mathcal{L}^{(\msmi-flow,\eta)}(\alpha)\right).
\]
In order to apply the reparameterisation trick, the expressions for $\varphi_{(\alpha_1,\eta,\epsilon)}, \theta_{(\alpha_2,\eta,\epsilon)}$ and $\tilde\theta_{(\alpha_3,\eta,\epsilon)}$ in \cref{eq:meta_new_map_phi}-\cref{eq:meta_new_map_theta_tilde} are substituted into \cref{eq:elbo_modular_pow_meta_new} and \cref{eq:elbo_modular_stop_grad_meta_new} so that gradients in $(\alpha_1,\alpha_3)$ and $\alpha_2$ can be taken inside the expectation which is now over $\epsilon\sim p(\cdot)$.

\newpage
\section{Optimisation of the VMP loss}\label{sec:sgd-for-meta-losses}

Here we give the algorithm we use to estimate the the parameters $\alpha$ of the \acrshort*{vmp}-map and -flow. \Cref{alg:v_meta_smi} gives \acrshort*{sgd} for the \acrshort*{vmp}-map loss $\mathcal{L}^{(\msmi-map)}(\alpha)$ (notation $\mathcal{L}^{(\msmi)}(\alpha)$ as the algorithm for optimisation of $\mathcal{L}^{(\msmi-flow)}(\alpha)$ is similar).

\begin{algorithm}[tbh]
  \caption{Variational Meta-Posterior approximation for $\mathcal{P}_{\smi}=\{p_{\smi,\eta},\ \eta\in H\}$} \label{alg:v_meta_smi}
  \begin{algorithmic}
    %\STATE This algorithm assumes that we want to partially cut the influence of module 2 (with response $Y$) into $\varphi$.
    \STATE \textbf{Input:} $\mathcal{D}$: Data. $p(\varphi,\theta,\mathcal{D})$: Multi-modular probabilistic model. $q_\lambda=(p(\epsilon), T, \lambda)$: Variational family. $f_{\alpha}(\eta)$: \acrshort*{vmp}-map. $\rho$: $\eta$-weighting distribution over $H$. \\[0.1in]
    \STATE \textbf{Output:} \acrshort{vmp}-map $f_{\hat\alpha}(\eta)$ giving the Variational approximation for the $\mathcal{P}_{\smi}$ family.\\[0.1in]

    \STATE Initialise mapping parameters $\alpha$
    \WHILE{\acrshort*{sgd} not converged}
    \STATE Sample $\mathcal{D}^{(b)} \sim \mathcal{D}$ (random minibatch of data).
    \STATE Sample $R$ values of $\eta_{r}\sim \rho,\ r=1,...,R$.
    \FOR{$r = 1,\ldots,R$}
    \STATE Obtain variational parameters $\lambda_r = f_{\alpha}(\eta_r)$.
    \FOR{$s = 1,\ldots,S$}
    \STATE Sample the base distribution, $\epsilon_{r,s} \sim p(\cdot)$.
    \STATE Transform the sampled values $(\varphi_{r,s}, \theta_{r,s}, \tilde\theta_{r,s}) \leftarrow T_{\lambda_r}(\epsilon_{r,s})$ as in \cref{eq:smi_transform}.
    \ENDFOR
    \ENDFOR
    \STATE Compute the Monte Carlo estimate of the loss $\mathcal{L}^{( \msmi)}$ in \cref{eq:meta_smi_target_loss} and its gradients.
    \begin{equation}
      \widehat{\mathcal{L}}^{(\msmi)} = \widehat{\elbo}_{\pow}^{(\msmi)} + \widehat{\elbo}_{\cancel{\nabla}(\varphi)}^{(\msmi)}
    \end{equation}
    where
    \begin{align}
      \widehat{\elbo}_{\pow}^{(\msmi)}                     & = - \frac{1}{RS} \sum_{r=1}^{R}\sum_{s=1}^{S} \left[ \log p_{\pow, \eta_r}(\varphi_{r,s}, \tilde\theta_{r,s}, \mathcal{D}^{(b)}) - \log q(\varphi_{r,s}, \tilde\theta_{r,s}) \right]         \\
      \widehat{\elbo}_{\cancel{\nabla}(\varphi)}^{(\msmi)} & = - \frac{1}{RS} \sum_{r=1}^{R}\sum_{s=1}^{S} \left[ \log p(\cancel{\nabla}(\varphi_{r,s}), \theta_{r,s}, \mathcal{D}^{(b)}) - \log q( \cancel{\nabla}(\varphi_{r,s}), \theta_{r,s}) \right]
    \end{align}
    \STATE Update $\alpha$ using the estimated gradient vector $\nabla_{\alpha}\widehat{\mathcal{L}}^{(\msmi)}$
    \STATE Check convergence of $q_{f_{\alpha}(\eta)}(\varphi,\theta)$ for multiple $\eta \in H$
    \ENDWHILE

    \RETURN $\hat\alpha=\alpha$ %and $q_{f_{\hat\alpha}(\eta)}(\varphi,\theta)$.

  \end{algorithmic}
\end{algorithm}

%%%%%%%%%%%%%%%%%%%%%%%%%%%%%%
\section{Detailed derivation of Variational Modular posteriors.}
%=============================

%=============================
\subsection{Proofs for variational SMI properties} \label{sec:vsmi_detail}

\noindent{\bf{\Cref{prop:var_smi_distance}.}}
{\it
  The divergence defined in \cref{eqn:distance_to_set} can be written
  \[
    d(q_{\lambda},\mathcal{F}_{\smi, \eta})=\kl{q_{\lambda_1, \lambda_3}(\varphi, \tilde\theta)}{p_{\pow, \eta}(\varphi,\tilde\theta \mid Y,Z)},
  \]
  and hence does not depend on $\lambda_2$.
}
\begin{proof}\label{proof:var_smi_distance}
  Since $\tilde q\in \mathcal{F}_{\smi, \eta}$ it can be written
  $\tilde q(\varphi,\theta,\tilde\theta)=p_{\pow, \eta}(\varphi,\tilde\theta \mid Y,Z)\tilde q(\theta\mid \varphi,\tilde\theta)$
  where $\tilde q(\theta\mid \varphi,\tilde\theta)$ is an arbitrary conditional density. The divergence is
  \begin{align*}
    \min_{\tilde q\in \mathcal{F}_{\smi, \eta}}\kl{q_{\lambda}}{\tilde q} & =\min_{\tilde q(\theta\mid \varphi,\tilde\theta)}\kl{q_{\lambda_1,\lambda_3}(\varphi,\tilde\theta)q_{\lambda_2}(\theta\mid \varphi)}{p_{\pow, \eta}(\varphi,\tilde\theta \mid Y,Z)\tilde q(\theta\mid \varphi,\tilde\theta)}
    \\
                                                                          & =\kl{q_{\lambda_1, \lambda_3}(\varphi, \tilde\theta)}{p_{\pow, \eta}(\varphi,\tilde\theta \mid Y,Z)}
    \\
                                                                          & \qquad +\quad \min_{\tilde q(\theta\mid \varphi,\tilde\theta)} E_{\varphi\sim q_{\lambda_1}}[\kl{q_{\lambda_2}(\theta\mid \varphi)}{\tilde q(\theta\mid \varphi,\tilde\theta)}]
    \\
                                                                          & =\kl{q_{\lambda_1, \lambda_3}(\varphi, \tilde\theta)}{p_{\pow, \eta}(\varphi,\tilde\theta \mid Y,Z)}
  \end{align*}
  as the variation over $q(\theta\mid \varphi,\tilde\theta)$ is over \emph{all} conditional densities with optimum
  $\tilde q(\theta\mid \varphi,\tilde\theta)=q_{\lambda_2}(\theta\mid \varphi)$ since the expectation is non-negative and zero in that case.
\end{proof}

% \noindent {\bf Proposition~\ref{prop:var_smi_lambda2}}
% {\it
%   The set $\Lambda^*_{(2)}(\lambda^*_1)$ defined in \cref{eqn:lambda2-star-defn-equiv} is equivalently
%   \begin{align}
%   \tilde D^*_{\smi}(\lambda^*_1)&=\min_{\lambda_2\in \Lambda_2} \kl{q_{(\lambda^*_1,\lambda_2,\lambda^*_3)}}{p_{\smi, \eta}}\nonumber\\
%     \intertext{and}
%     \Lambda^*_{(2)}(\lambda^*_1)&=\{\lambda_2\in \Lambda_2:  \kl{q_{(\lambda^*_1,\lambda_2,\lambda^*_3)}}{p_{\smi, \eta}} = \tilde D^*_{\smi}(\lambda^*_1)\}.
% \end{align}
% }
% \begin{proof}
%   Expand the \acrshort*{kl} divergence in \cref{eqn:lambda2-star-defn} using \cref{eqn:naive_VI_KL_expand} and substitute
%   $(\lambda_1,\lambda_3)=(\lambda^*_1,\lambda^*_3)$.
%   The first term does not depend on $\lambda_2$ and the second term gives \cref{eqn:lambda2-star-defn-equiv}.
% \end{proof}

\noindent{\bf{\Cref{prop:var_smi_is_cut_at_eta0}.}}
{\it
Variational \acrshort*{smi} satisfies property (P1) at $\eta=0$: If the set
\[
  \Lambda^*_{(3)}=\{\lambda_3\in \Lambda_2: q_{\lambda^*_3}(\tilde\theta\mid \varphi)=p(\tilde\theta\mid \varphi)\}
\]
is non-empty and we set
\[
  \Lambda^*_{(1)} =   \{\lambda_1\in \Lambda_1:
  \kl{q_{\lambda_1}(\varphi)}{p(\varphi \mid Z)}=d^*_{\cut}\}
\]
with $d^*_{\cut}$ defined in (P1) then $\Lambda^*_{(1,3)}$ defined in \cref{eqn:lambda13-star-defn} satisfies
\[
  \Lambda^*_{(1,3)} = \Lambda^*_{(1)} \times \Lambda^*_{(3)},
\]
% \[
% d(q_{(\lambda_1,\lambda_2,\lambda_3)})=\kl{q_{\lambda_1}(\varphi)}{p(\varphi \mid Z)}.
% \]
so $q_{\lambda^*_1}$ does not depend in any way on $p(Y \mid \varphi,\theta)$ or $p(\theta \mid \varphi)$ at $\eta=0$.
}
\begin{proof}\label{proof:var_smi_is_cut_at_eta0}
  At $\eta=0$, $p_{\pow, \eta=0}(\varphi,\tilde\theta \mid Y,Z)=p(\varphi \mid Z)\,p(\tilde\theta\mid \varphi)$ in \cref{prop:var_smi_distance}. By \cref{eqn:lambda13-star-defn},
  \begin{align*}
    d^*_{\smi} & =\min_{(\lambda_1,\lambda_3)\in B}\kl{q_{\lambda_1,\lambda_3}}{p(\varphi \mid Z)\,p(\theta\mid \varphi)}
    \\
               & =\min_{(\lambda_1,\lambda_3)\in B} \left(\kl{q_{\lambda_1}(\varphi)}{p(\varphi \mid Z)}+E_{\varphi\sim q_{\lambda_1}}[\kl{q_{\lambda_3}(\tilde\theta\mid \varphi)}{p(\tilde\theta\mid \varphi)}]\right) \\
               & =\min_{\lambda_1\in \Lambda_1} \kl{q_{\lambda_1}(\varphi)}{p(\varphi \mid Z)}                                                                                                                           \\
               & =d^*_{cut}
  \end{align*}
  as the expectation is zero (and hence minimised for every argument) when $\lambda^*_3\in \Lambda_2$ satisfies $q_{\lambda^*_3}(\tilde\theta\mid \varphi)=p(\tilde\theta\mid \varphi)$. Such a $\lambda^*_3$ exists because $\Lambda^*_{(3)}$ is non-empty by the assumption in the proposition. It follows that $\lambda$ minimises $d(q_\lambda,\mathcal{F}_{\smi, \eta})$ at $\eta=0$ if and only if $(\lambda^*_1,\lambda_3)\in\Lambda^*_{(1)}\times \Lambda^*_{(3)}$.
\end{proof}

\noindent{\bf{\Cref{prop:smi-show-p2}.}}
{\it
  Variational \acrshort*{smi} satisfies property (P2). Let
  \[
    \Lambda^*_{(1)}=\bigcup_{(\lambda^*_1,\lambda^*_3)\in \Lambda^*_{(1,3)}} \{\lambda_1^*\}.
  \]
  The set of Bayes and \acrshort*{smi} variational posteriors for $\varphi,\theta$ are the same, that is,
  \[
    \bigcup_{\lambda_1^*\in\Lambda^*_{(1)}}\bigcup_{\lambda^*_2\in \Lambda_{(2)}^*(\lambda^*_1)} \{(\lambda^*_1,\lambda^*_2)\}=B^*,
  \]
  when $\eta=1$.
}
\begin{proof}\label{proof:smi-show-p2}
  When $\eta=1$ the power posterior for $\varphi,\tilde\theta$ is the Bayes posterior,
  \[
    p_{\pow, \eta=1}(\varphi,\tilde\theta \mid Y,Z)=p(\varphi \mid Y,Z)\,p(\tilde\theta \mid Y,\varphi)
  \]
  so \cref{eqn:lambda13-star-defn} is the same as variational Bayes as determined by \cref{eqn:variational_bayes_standard}. Since $q_{\lambda_3}(\tilde\theta \mid \varphi)$ and $q_{\beta_2}(\theta \mid \varphi)$ have the same parameterisation, we have $B^*=\Lambda^*_{(1,3)}$. For any fixed $\lambda^*_1\in \Lambda^*_{(1)}$ let
  \[
    \Lambda^*_{(3)}(\lambda^*_1)=\{\lambda_3\in \Lambda_2: \kl{q_{\lambda^*_1, \lambda_3}(\varphi, \tilde\theta)}{p_{\pow, \eta}(\varphi,\tilde\theta \mid Y,Z)}=d^*_{\smi}\}
  \]
  so that
  \begin{equation}
    \Lambda^*_{(1,3)}=\bigcup_{\lambda_1\in \Lambda^*_{(1)}}\bigcup_{\lambda_3\in \Lambda^*_{(3)}(\lambda_1)} \{(\lambda_1,\lambda_3)\}\label{eqn:proof-p2-lstar13}
  \end{equation}
  Fixing $\lambda^*_1\in \Lambda^*_{(1)}$, and taking $\eta=1$,
  \begin{align*}
    \kl{q_{\lambda^*_1, \lambda_3}(\varphi, \tilde\theta)}{p_{\pow, \eta}(\varphi,\tilde\theta \mid Y,Z)} & =\kl{q_{\lambda^*_1}}{p(\varphi \mid Y,Z)} \\ &\qquad +\quad E_{\varphi\sim q_{\lambda^*_1}}[\kl{q_{\lambda_3}(\tilde\theta\mid \varphi)}{p(\tilde\theta \mid Y,\varphi)}]
  \end{align*}
  and the target level is
  \begin{align*}
    d^*_{\smi} & =\kl{q_{\lambda^*_1}}{p(\varphi \mid Y,Z)}+ \min_{\lambda_3\in \Lambda_2} E_{\varphi\sim q_{\lambda^*_1}}[\kl{q_{\lambda_3}(\tilde\theta\mid \varphi)}{p(\tilde\theta \mid Y,\varphi)}] \\
               & =\kl{q_{\lambda^*_1}}{p(\varphi \mid Y,Z)}+D^*_{\smi}(\lambda^*_1),
  \end{align*}
  so cancelling the common $\kl{q_{\lambda^*_1}}{p(\varphi \mid Y,Z)}$ term,
  \begin{align}
    \Lambda^*_{(3)}(\lambda^*_1) & = \{\lambda_3\in \Lambda_2: E_{\varphi\sim q_{\lambda^*_1}}[\kl{q_{\lambda_3}(\tilde\theta\mid \varphi)}{p(\tilde\theta \mid Y,\varphi)}]=D^*_{\smi}(\lambda^*_1)\}\nonumber \\
                                 & =\Lambda^*_{(2)}(\lambda^*_1)\label{eqn:proof-p2-l2},
  \end{align}
  from \cref{eqn:lambda2-star-defn-equiv}, so at $\eta=1$,
  \[
    \bigcup_{\lambda_1\in \Lambda^*_{(1)}}\bigcup_{\lambda_2\in \Lambda^*_{(2)}(\lambda_1)} \{(\lambda_1,\lambda_2)\}=\bigcup_{\lambda_1\in \Lambda^*_{(1)}}\bigcup_{\lambda_3\in \Lambda^*_{(3)}(\lambda_1)} \{(\lambda_1,\lambda_3)\}
  \]
  by \cref{eqn:proof-p2-l2}, so the LHS is equal $\Lambda^*_{(1,3)}$
  by \cref{eqn:proof-p2-lstar13} and we saw that $\Lambda^*_{(1,3)}=B^*$, giving the set relation claimed in the proposition.
  We conclude from this that at $\eta=1$, $q_{\lambda^*_1,\lambda^*_2}(\varphi,\theta)$ is a marginal variational \acrshort*{smi} posterior if and only if  it is also a variational Bayes posterior.
\end{proof}

% \noindent{\bf{\Cref{prop:smi-show-p3}.}}
% {\it
% Variational \acrshort*{smi} satisfies property (P3). If $p_{\smi, \eta}\in\mathcal{Q}$ then $q_{\lambda^*}=p_{\smi, \eta}$ for $\lambda^*\in \Lambda^*$.
% }
% \begin{proof}\label{proof:smi-show-p3}
%   This usually is immediate for standard variational methods but has to be checked here. If $p_{\smi, \eta}\in\mathcal{Q}$ then there exist $\lambda_1,\lambda_3$ such that  $q_{\lambda_1,\lambda_3}=p_{\pow, \eta}$ and $\lambda_2$ such that $q_{\lambda_2}(\theta\mid \varphi)=p(\theta \mid Y,\varphi)$ and since these choices minimise the \acrshort*{kl}-divergences in \cref{prop:var_smi_distance,prop:var_smi_lambda2} they are the optimal values, so $q_{\lambda^*}=p_{\smi, \eta}$.
% \end{proof}

\noindent{\bf{\Cref{prop:stop-gradient-loss}.}}
{\it
  The set $\Lambda^*$ in \cref{defn:var-smi} is the set of solutions of $\nabla_{\lambda} \mathcal{L}^{( \smi, \eta)} = 0$ corresponding to minima.
}
\begin{proof}\label{proof:stop-gradient-loss}
  Assuming the flow-based construction of the variational family $q_{\lambda}$ defined in \cref{eqn:q-lambda-smi-var}, the (stopped) gradients of $\mathcal{L}^{( \smi, \eta)}$ are
  \begin{align}
    \nabla_{\lambda_1} \mathcal{L}^{( \smi, \eta)} = \E_{\epsilon \sim p(\epsilon)}[ & \nabla_{\varphi} \left\{ \log p(Z \mid \varphi) + \eta \log p(Y \mid \varphi, \tilde\theta) + \log p(\varphi) \right\} \nabla_{\lambda_1} \{ \varphi \} \nonumber
    \\
                                                                                     & + \nabla_{\lambda_1} \log \left\vert J_{T_1}(\epsilon_1) \right\vert  ] \label{eq:grad_smi_loss_l1}
    \\
    \nabla_{\lambda_2} \mathcal{L}^{( \smi, \eta)} = \E_{\epsilon \sim p(\epsilon)}[ & \nabla_{\theta} \left\{ \log p(Y \mid \varphi, \theta) + \log p(\theta \mid \varphi) \right\} \nabla_{\lambda_2} \{ \theta \} \nonumber
    \\
                                                                                     & + \nabla_{\lambda_2} \log \left\vert J_{T_2}(\epsilon_1,\epsilon_2) \right\vert  ] \label{eq:grad_smi_loss_l2}
    \\
    \nabla_{\lambda_3} \mathcal{L}^{( \smi, \eta)} = \E_{\epsilon \sim p(\epsilon)}[ & \nabla_{\tilde\theta} \left\{ \eta \log p(Y \mid \varphi, \tilde\theta) + \log p(\tilde\theta \mid \varphi) \right\} \nabla_{\lambda_3} \{ \tilde\theta \} \nonumber
    \\
                                                                                     & + \nabla_{\lambda_3} \log \left\vert J_{T_2}(\epsilon_1,\epsilon_3) \right\vert  ]. \label{eq:grad_smi_loss_l3}
  \end{align}

  The system of equations determined by $\nabla_{\lambda} \mathcal{L}^{( \smi, \eta)} = 0$ is the same system defined in \cref{remk:var-smi-roots-gradients} for $\Lambda^*$, namely, \cref{eq:vsmi_roots_l1,eq:vsmi_roots_l2,eq:vsmi_roots_l3}.
\end{proof}

\noindent{\bf{\Cref{prop:smi-show-p3-extra}.}}
{\it
  Let $\mathcal{L}^*(v)=\min_{\lambda\in\Lambda}\mathcal{L}^{(v)}(\lambda)$
  and
  \[
    \Lambda^*(v)=\{\lambda\in \Lambda: \mathcal{L}^{(v)}(\lambda)=\mathcal{L}^*(v)\}.
  \]
  Under regularity conditions on $\mathcal{F}_{smi,\eta}$ and $p_{ \smi, \eta}$ given in \cref{prop:use_IFT_show_loss_limit}, for every solution $\lambda^*\in \Lambda^*$ in \cref{defn:var-smi} and all sufficiently small $v\ge 0$ there exists a unique continuous function $\lambda^*(v)$ satisfying $\lambda^*(v)\in \Lambda^*(v)$ and \[\lim_{v\to 0}\lambda^*(v)=\lambda^*.\]
}
\begin{proof}\label{proof:smi-show-p3-extra}
  Take the definition of $\mathcal{L}^{(v)}(\lambda)$ in \cref{eqn:loss_weighted_var_smi}, use \cref{prop:var_smi_distance} to replace $d$ and \cref{eqn:naive_VI_KL_expand} to expand $\kl{q_{\lambda}}{p_{\smi, \eta}}$. This gives
  \begin{align*}
    \mathcal{L}^{(v)}(\lambda)
     & =(1+v)\kl{q_{\lambda_1, \lambda_3}(\varphi, \tilde\theta)}{p_{\pow, \eta}(\varphi,\tilde\theta \mid Y,Z)}
    \\
     & \qquad+\quad v\cdot E_{\varphi\sim q_{\lambda_1}}[\kl{q_{\lambda_2}(\theta\mid \varphi)}{p(\theta \mid Y,\varphi)}]
  \end{align*}
  Minima $\lambda^{(v)}\in\Lambda^*(v)$ are stationary points of $\mathcal{L}^{(v)}$ so they solve $\nabla_{\lambda}\mathcal{L}^{(v)}(\lambda)=0$ with positive curvature. Using the reparameterisation trick these equations are, for $v>0$,
  \begin{align}
    0 = \E_{\epsilon \sim p(\epsilon)}[ & (1+v)\nabla_{\varphi} \left\{ \log p(Z \mid \varphi) + \eta \log p(Y \mid \varphi, \tilde\theta) + \log p(\varphi) \right\} \nabla_{\lambda_1} \{ \varphi \} \nonumber
    \\
                                        & + v\nabla_{\varphi} \left\{ \log p(Y \mid \varphi, \theta) - \log p(Y\mid \varphi) \right\} \nabla_{\lambda_1} \{ \varphi \} \nonumber                                 \\
                                        & + \nabla_{\lambda_1} \log \left\vert J_{T_1}(\epsilon_1) \right\vert  ] \label{eq:root_loss_l1_smi_general}                                                            \\
    0 = \E_{\epsilon \sim p(\epsilon)}[ & \nabla_{\theta} \left\{ \log p(Y \mid \varphi, \theta) + \log p(\theta \mid \varphi) \right\} \nabla_{\lambda_2} \{ \theta \} \nonumber                                \\
                                        & + \nabla_{\lambda_2} \log \left\vert J_{T_2}(\epsilon_1,\epsilon_2) \right\vert  ] \label{eq:root_loss_l2_smi_general}                                                 \\
    0 = \E_{\epsilon \sim p(\epsilon)}[ & \nabla_{\tilde\theta} \left\{ \eta \log p(Y \mid \varphi, \tilde\theta) + \log p(\tilde\theta \mid \varphi) \right\} \nabla_{\lambda_3} \{ \tilde\theta \} \nonumber   \\
                                        & + \nabla_{\lambda_3} \log \left\vert J_{T_2}(\epsilon_1,\epsilon_3) \right\vert  ] \label{eq:root_loss_l3_smi_general}
  \end{align}
  % First of all $(\lambda^*_1,\lambda^*_3)=\lambda^{(0)}$ (as $\mathcal{L}^{(0)}$ equals the right hand side of \cref{eqn:lambda13-star-def}) and $\lambda^{(v)}_1,\lambda^{(v)}_3$ are continuous in $v$ so certainly $(\lambda^*_1,\lambda^*_3)=\lim_{v\to 0}(\lambda^{(0)}_1,\lambda^{(0)}_3)$. However the value of $\lambda^{(v)}_2$ must minimise $\kl{q_{\lambda^{(v)}_1,\lambda_2,\lambda^{(v)}_3}}{p_{\smi, \eta}}$... not finished.
  The system of equations \cref{eq:root_loss_l1_smi_general,eq:root_loss_l2_smi_general,eq:root_loss_l3_smi_general} converges to the system of equations \cref{eqn:lam-star-roots1a,eqn:lam-star-roots1b} as $v\to 0$. Under regularity conditions set out in \cref{prop:use_IFT_show_loss_limit} below, the solutions converge, in the sense that every point in $\Lambda^*$ is the limit as $v\to 0$ of some continuous sequence of solutions to $\nabla_\lambda\mathcal{L}^{(v)}=0$.
\end{proof}

We now state the regularity conditions and show solutions converge. Recall that $\Lambda_1=\Re^{L_1}$ and $\Lambda_2=\Re^{L_2}$ with $L=L_1+2L_2$ so that $\dim(\Lambda)=L$. The system of equations \cref{eq:root_loss_l1_smi_general,eq:root_loss_l2_smi_general,eq:root_loss_l3_smi_general} has the form $F(\lambda,v)=\mathbf{0}_L$ where
\[
  F(\lambda,v)=f(\lambda)+v\cdot g(\lambda)
\]
with $f,g:\Re^L\to \Re^L$ and $F:\Re^{L+1}\to \Re^L$. Equations \cref{eq:vsmi_roots_l1,eq:vsmi_roots_l2,eq:vsmi_roots_l3} are equivalent to $F(\lambda,0)=\mathbf{0}_L$ which is just $f(\lambda)=\mathbf{0}_L$. Now \cref{eq:vsmi_roots_l1,eq:vsmi_roots_l3} are the reparameterisation of \cref{eqn:lam-star-roots1a} and \cref{eq:vsmi_roots_l2} is the reparameterisation of \cref{eqn:lam-star-roots1b} so we identify
$f=(f_1,f_2,f_3)$ with
\begin{align*}
  f_1(\lambda_1,\lambda_3) & =\nabla_{\lambda_1}\kl{q_{\lambda_1,\lambda_3}}{p_{\pow, \eta}}                                                    \\
  f_2(\lambda_1,\lambda_2) & =\nabla_{\lambda_2} E_{\varphi\sim q_{\lambda_1}}\kl{q_{\lambda_2}(\theta\mid \varphi)}{p(\theta \mid Y,\varphi)}, \\
  f_3(\lambda_1,\lambda_3) & =\nabla_{\lambda_3}\kl{q_{\lambda_1,\lambda_3}}{p_{\pow, \eta}}
\end{align*}
where $f_1: \Re^{L_1+L_2}\to \Re^{L_1}$, $f_2: \Re^{L_1+L_2}\to \Re^{L_2}$ and $f_3: \Re^{L_1+L_2}\to \Re^{L_2}$.

\begin{proposition}\label{prop:use_IFT_show_loss_limit}
  Assume $\Lambda^*$ is not empty and let $\lambda^*\in \Lambda^*$ be given. Assume $f$ and $g$ are continuously differentiable in $\lambda$ at $\lambda=\lambda^*$ and the Hessians of $\kl{q_{\lambda_1,\lambda_3}}{p_{\pow, \eta}}$ (in $\lambda_1,\lambda_3$) and $E_{\varphi\sim q_{\lambda_1}}\kl{q_{\lambda_2}(\theta\mid \varphi)}{p(\theta \mid Y,\varphi)}$ (in $\lambda_2$) are invertible at $\lambda=\lambda^*$. For every $\lambda^*\in\Lambda^*$ there is $\delta>0$ and a unique continuous function $\lambda^*(v)$ satisfying $F(\lambda^*(v),v)=0$ for $0\le |v|\le \delta$ and $\lambda^*(0)=\lambda^*$ at $v=0$.
\end{proposition}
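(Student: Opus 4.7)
The plan is to recognise this as a textbook Implicit Function Theorem application to $F(\lambda,v)=f(\lambda)+v\,g(\lambda)$ at the point $(\lambda^*,0)$. First I would note that $F(\lambda^*,0)=f(\lambda^*)=0$ because, by \cref{defn:var-smi} and \cref{remk:var-smi-roots}, $\lambda^*\in\Lambda^*$ is precisely a root of the system \cref{eqn:lam-star-roots1a,eqn:lam-star-roots1b}, which is the system $f=0$. The continuous differentiability of $F$ in $(\lambda,v)$ at $(\lambda^*,0)$ is immediate from the assumed continuous differentiability of $f$ and $g$ in $\lambda$ together with the affine dependence on $v$. So the only non-trivial hypothesis of IFT to check is the invertibility of the partial Jacobian $\partial_\lambda F(\lambda^*,0)=\partial_\lambda f(\lambda^*)$.

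The main step is to unpack the block structure of $\partial_\lambda f(\lambda^*)$. Writing $K_1(\lambda_1,\lambda_3)=\kl{q_{\lambda_1,\lambda_3}}{p_{\pow,\eta}}$ and $K_2(\lambda_1,\lambda_2)=E_{\varphi\sim q_{\lambda_1}}\kl{q_{\lambda_2}(\theta\mid\varphi)}{p(\theta\mid Y,\varphi)}$, I have $f_1=\nabla_{\lambda_1}K_1$, $f_3=\nabla_{\lambda_3}K_1$, $f_2=\nabla_{\lambda_2}K_2$. The critical observation is that $K_1$ has no $\lambda_2$-dependence and $K_2$ has no $\lambda_3$-dependence, so the cross-partials $\partial_{\lambda_2}f_1$, $\partial_{\lambda_2}f_3$ and $\partial_{\lambda_3}f_2$ all vanish. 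Reordering rows into the block $(f_1,f_3,f_2)$ and columns into $(\lambda_1,\lambda_3,\lambda_2)$, the Jacobian becomes block lower triangular,
\[
  \partial_\lambda f(\lambda^*)=\begin{pmatrix} H_{K_1} & 0 \\ \ast & H_{K_2}\end{pmatrix},
\]
where $H_{K_1}$ is the Hessian of $K_1$ in $(\lambda_1,\lambda_3)$ and $H_{K_2}$ is the Hessian of $K_2$ in $\lambda_2$. Its determinant factorises as $\det(H_{K_1})\det(H_{K_2})$, and both factors are nonzero by assumption, so $\partial_\lambda f(\lambda^*)$ is invertible.

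With all hypotheses verified, the Implicit Function Theorem delivers a $\delta>0$ and a unique continuously differentiable function $\lambda^*:(-\delta,\delta)\to\Lambda$ with $\lambda^*(0)=\lambda^*$ and $F(\lambda^*(v),v)=0$ on this interval, which is exactly the claim (continuity is actually promoted to $C^1$ for free). The only subtle point, and arguably the main obstacle, is simply the bookkeeping of the block structure; once one notices that $K_1$ and $K_2$ depend on disjoint parameter blocks apart from the shared $\lambda_1$, the two separate invertibility hypotheses on $H_{K_1}$ and $H_{K_2}$ assemble into invertibility of the full Jacobian via block triangularity, and the rest is a direct invocation of IFT.
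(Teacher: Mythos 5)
Your proposal is correct and follows essentially the same route as the paper: both reduce the claim to invertibility of $\partial F/\partial\lambda = \partial f/\partial\lambda$ at $(\lambda^*,0)$, exploit the vanishing cross-partials to factorise the determinant into the product of the two Hessian determinants, and then invoke the Implicit Function Theorem. Your reordering of the blocks into an explicitly lower-triangular form is a minor cosmetic difference from the paper's direct expansion of the $3\times 3$ block determinant, not a different argument.
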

\begin{proof}
  If the Jacobian $\partial F/\partial \lambda$ is invertible at $\lambda=\lambda^*$ then by assumptions of the proposition and the Implicit Function Theorem, there exists $\delta>0$ and a unique continuous function $\lambda^*: \Re\to\Re^L$ satisfying $F(\lambda^*(v),v)=0$ for $0\le |v|\le \delta$. It follows that every point in $\Lambda^*$ is the limit as $v\to 0$ of some continuous sequence of solutions to $\nabla_\lambda\mathcal{L}^{(v)}=0$.

  The Jacobian $\partial F/\partial \lambda=\partial f/\partial \lambda$ at $v=0$ since the additive term in $g$ does not contribute at $v=0$.
  It follows that the Jacobian $\partial F/\partial \lambda$ is invertible at $\lambda=\lambda^*$ if $\left\vert\partial f/\partial \lambda\right\vert_{\lambda=\lambda^*}\ne 0$. The Jacobian matrix $\partial f/\partial \lambda$ has a block structure,
  \begin{align*}
    \frac{\partial f}{\partial \lambda} & =\left[\begin{array}{ccc}
                                                     \partial f_1/\partial\lambda_1 & \mathbf{0}_{L_1\times L_2}     & \partial f_1/\partial\lambda_3 \\
                                                     \partial f_2/\partial\lambda_1 & \partial f_2/\partial\lambda_2 & \mathbf{0}_{L_2\times L_2}     \\
                                                     \partial f_3/\partial\lambda_1 & \mathbf{0}_{L_1\times L_2}     & \partial f_3/\partial\lambda_3 \\
                                                   \end{array}\right] \\[0.1in]
                                        & =\left[\begin{array}{ccc}
                                                     L_1 \times L_1 & L_1\times L_2  & L_1 \times L_2 \\
                                                     L_2 \times L_1 & L_2 \times L_2 & L_2\times L_2  \\
                                                     L_2 \times L_1 & L_2\times L_2  & L_2 \times L_2 \\
                                                   \end{array}\right]
  \end{align*}
  with block dimensions in the second line. It follows that the determinant is
  \[
    \left\vert\frac{\partial f}{\partial \lambda}\right\vert=\left\vert\begin{array}{cc}
      \partial f_1/\partial\lambda_1 & \partial f_1/\partial\lambda_3 \\
      \partial f_3/\partial\lambda_1 & \partial f_3/\partial\lambda_3 \\
    \end{array}\right\vert\ \times\ \left\vert\partial f_2/\partial\lambda_2 \right\vert.\\
  \]
  This is just the product of the determinants of the Hessians of $\kl{q_{\lambda_1,\lambda_3}}{p_{\pow, \eta}}$ (in $\lambda_1,\lambda_3$) and $E_{\varphi\sim q_{\lambda_1}}\kl{q_{\lambda_2}(\theta\mid \varphi)}{p(\theta \mid Y,\varphi)}$ (in $\lambda_2$), so $\partial F/\partial\lambda$ is invertible if these Hessians are invertible at $\lambda=\lambda^*$.
  %We take as regularity conditions the requirement that the roots of $f(\lambda;v)=0$ converge to the roots of $f(\lambda;0)=0$ as $v\to 0$ (``the solutions converge if the equations converge''). Since the systems of equations do converge, these conditions are sufficient for $\Lambda^*=\lim_{v\to 0} \Lambda^*(v)$.
  %the roots $\lambda^{(v)}$ of the system $\nabla_{\lambda}\mathcal{L}^{(v)}(\lambda)=0$ with positive curvature converge as $v\to 0$ to the corresponding roots of the system in \cref{eqn:lam-star-roots1a,eqn:lam-star-roots1b} defining $\lambda^*$.
\end{proof}

%%%%%%%%%%%%%%%%%%%%%%%%%%%%%%
\section{Further details of Experiments.} \label{sec:experiments_extra}
%=============================

\subsection{Epidemiological Data} \label{sec:epidemiology_extra}

\acrshort*{mcmc} samples from the posterior distribution of the epidemiological model in \cref{subsec:exp_epidemiology} are shown in \cref{fig:epidemiology_mcmc}.
Samples from the \acrshort*{mfvi} approximation fitted using variational \acrshort*{smi} separately at each $\eta$ are shown \cref{fig:epidemiology_mfvi}.

\begin{figure}[ht!]
  \centering
  \includegraphics[width=0.48\textwidth]{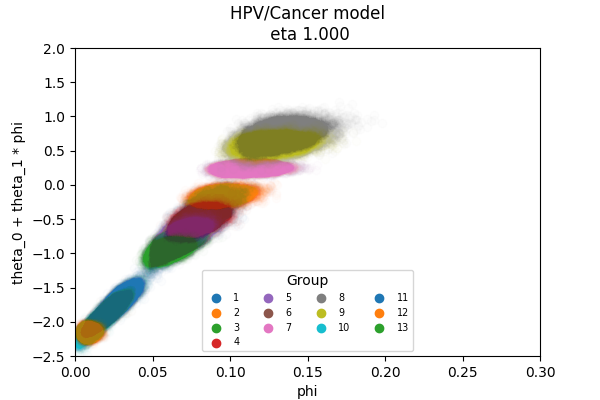}
  \includegraphics[width=0.37\textwidth]{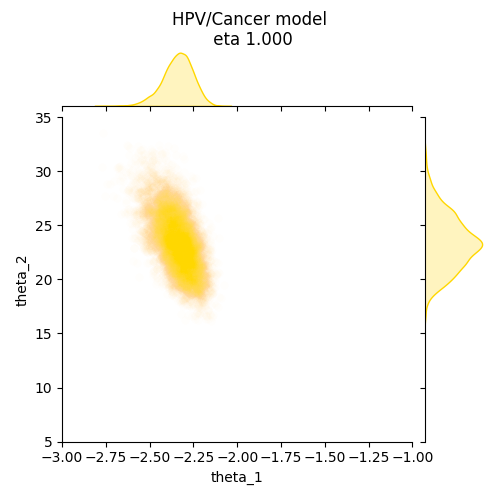}
  \includegraphics[width=0.48\textwidth]{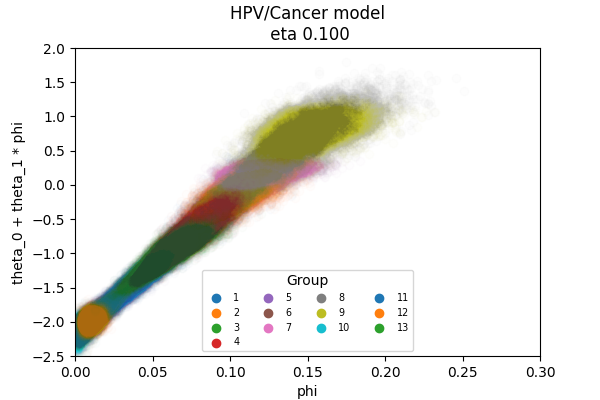}
  \includegraphics[width=0.37\textwidth]{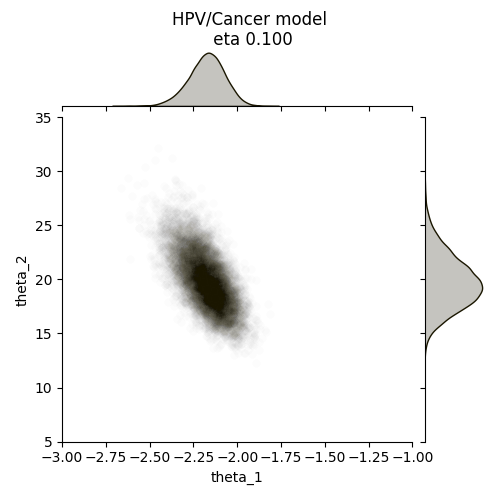}
  \includegraphics[width=0.48\textwidth]{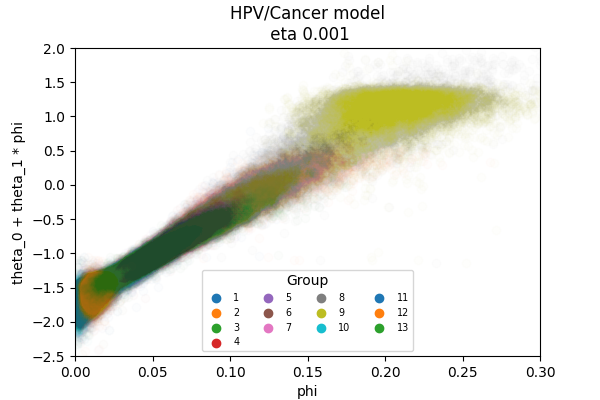}
  \includegraphics[width=0.37\textwidth]{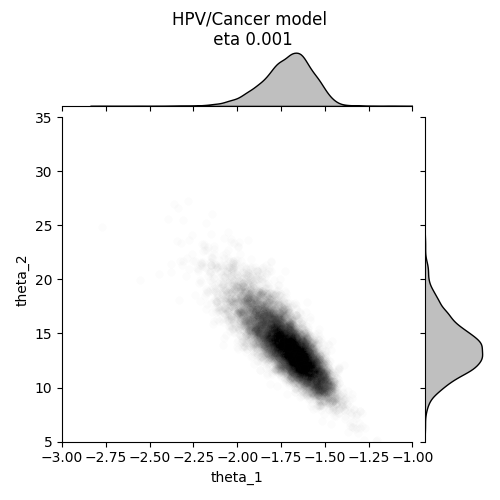}
  \caption[Epidemiology model MCMC]{
    Samples from the posterior distribution of the epidemiological model, obtained via \acrshort*{mcmc}.
    Rows correspond to three rates of \emph{feedback} from the Poisson module, $\eta=(0.001,0.1,1)$.
    In the left column, we plot the relation between HPV prevalence ($\phi$) and cervical cancer incidence ($\mu$) for the 13 groups in the data.
    In the right column, the joint distribution of slope ($\theta_1$) and intercept ($\theta_2$) of such relation.
  }
  \label{fig:epidemiology_mcmc}
\end{figure}

\begin{figure}[!ht]
  \centering
  \includegraphics[width=0.48\textwidth]{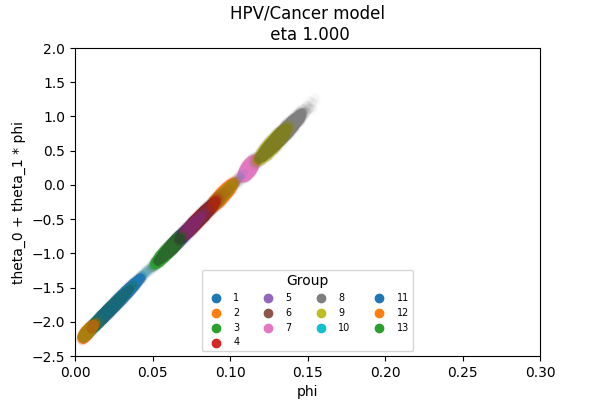}
  \includegraphics[width=0.37\textwidth]{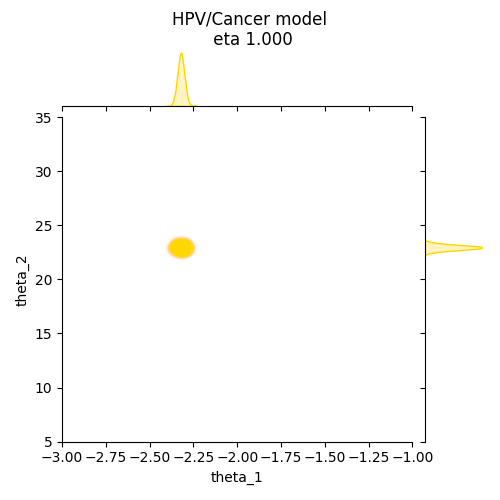}
  \includegraphics[width=0.48\textwidth]{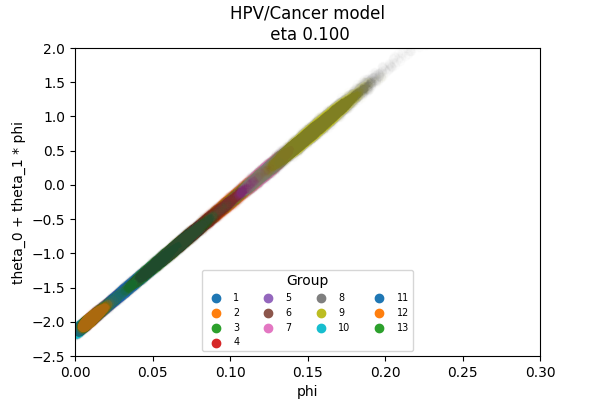}
  \includegraphics[width=0.37\textwidth]{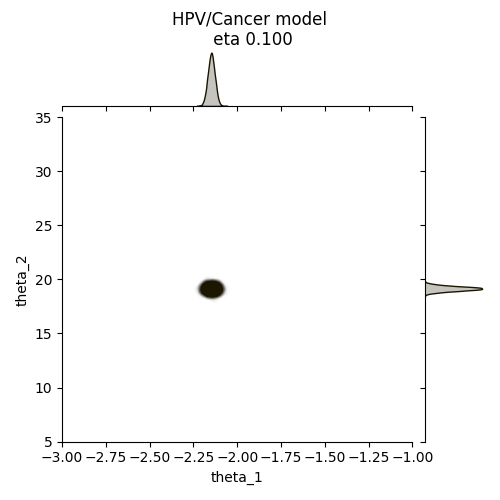}
  \includegraphics[width=0.48\textwidth]{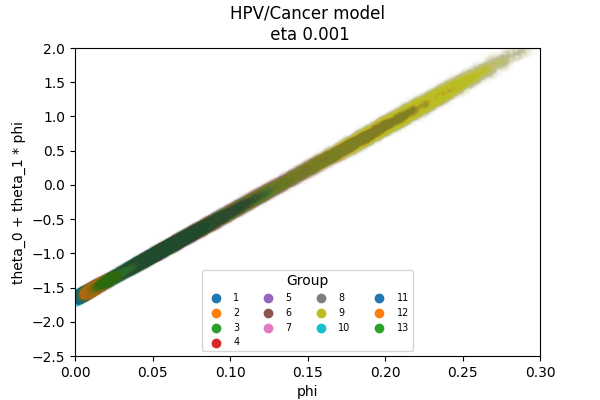}
  \includegraphics[width=0.37\textwidth]{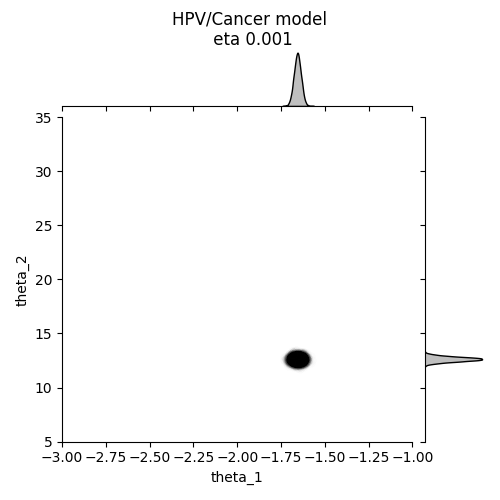}

  \caption[Epidemiology model MFVI]{
    Samples from the variational posterior distribution of the epidemiological model, obtained using a Mean-Field approximation.
    Plots are interpreted as in \cref{fig:epidemiology_mcmc}. We train one variational posterior for each rate $\eta=(0.001,0.1,1)$ separately.
    The approximations clearly underestimate posterior variance.
  }
  \label{fig:epidemiology_mfvi}
\end{figure}

\subsection{Random effects model}\label{subsec:exp_rnd_eff_extra}

Samples from the posterior distribution of the Random Effects model, obtained via \acrshort*{mcmc} are shown in \cref{fig:rnd_eff_mcmc}.

\begin{figure}[!ht]
  \centering
  \includegraphics[width=0.24\textwidth]{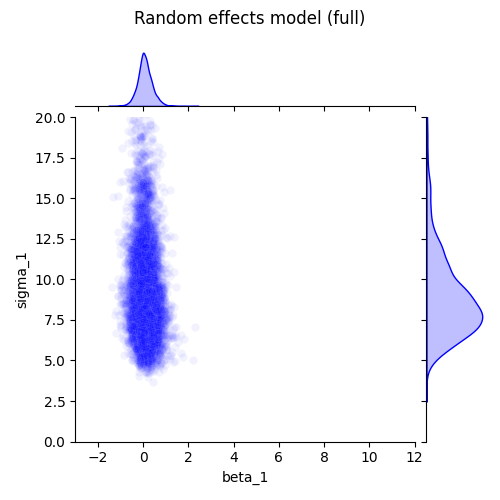}
  \includegraphics[width=0.24\textwidth]{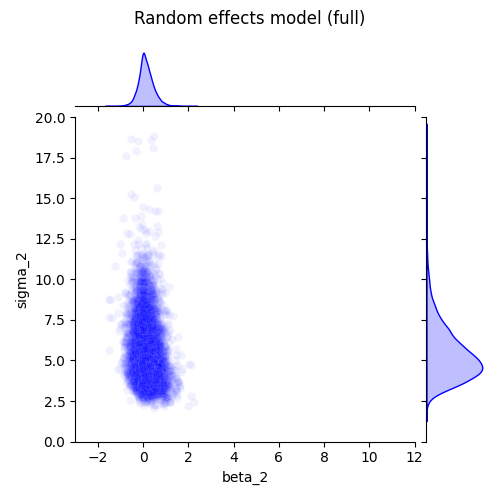}
  \includegraphics[width=0.24\textwidth]{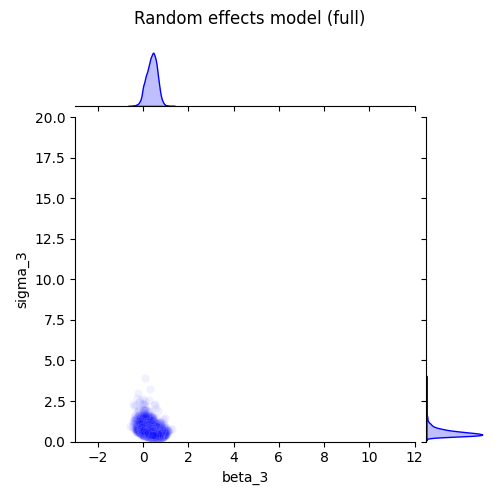}
  \includegraphics[width=0.24\textwidth]{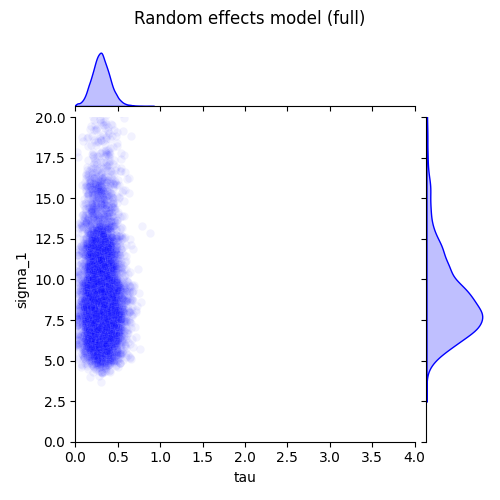}

  \includegraphics[width=0.24\textwidth]{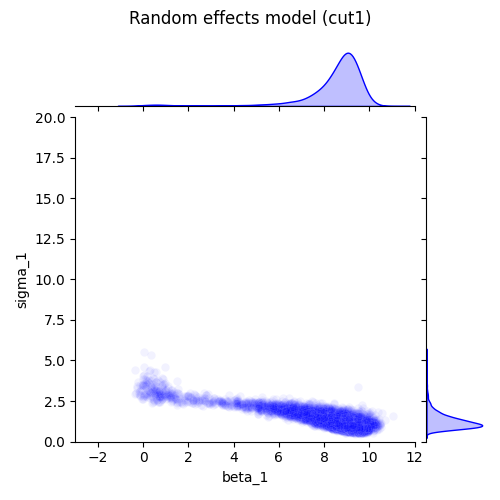}
  \includegraphics[width=0.24\textwidth]{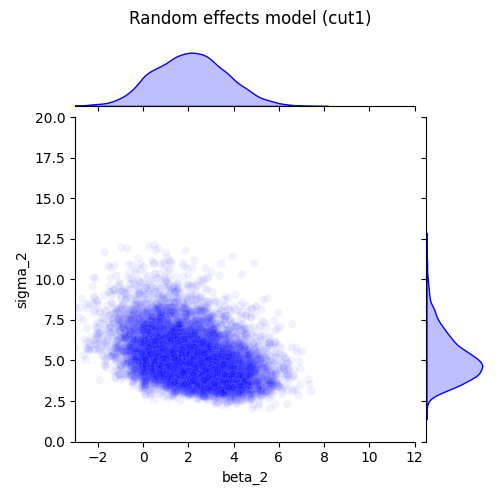}
  \includegraphics[width=0.24\textwidth]{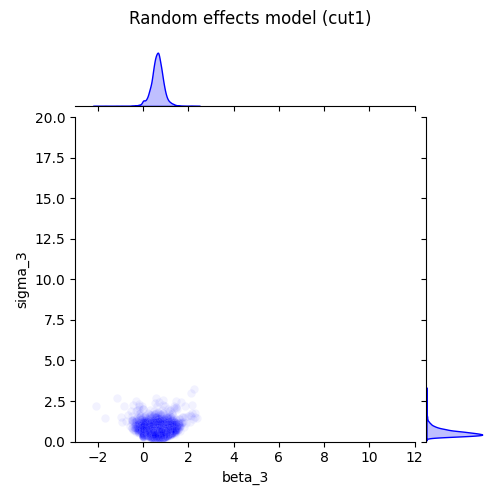}
  \includegraphics[width=0.24\textwidth]{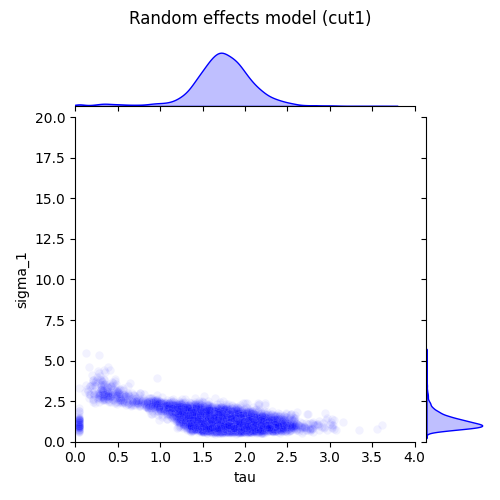}

  \includegraphics[width=0.24\textwidth]{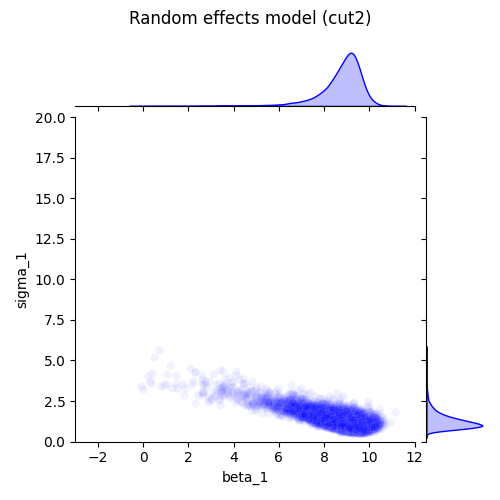}
  \includegraphics[width=0.24\textwidth]{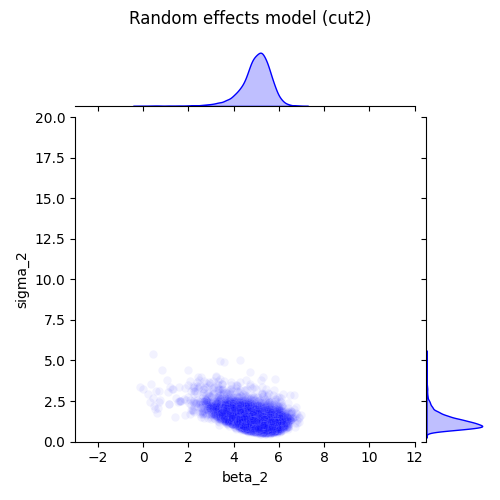}
  \includegraphics[width=0.24\textwidth]{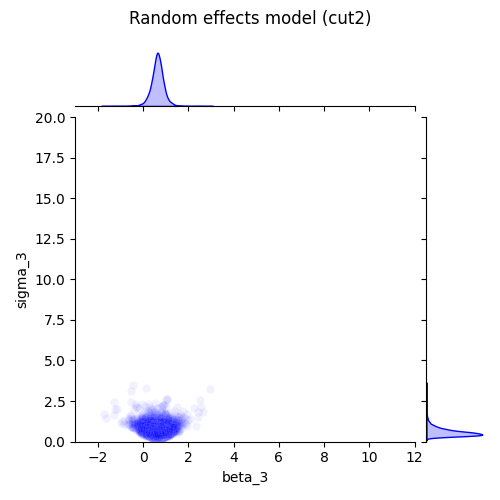}
  \includegraphics[width=0.24\textwidth]{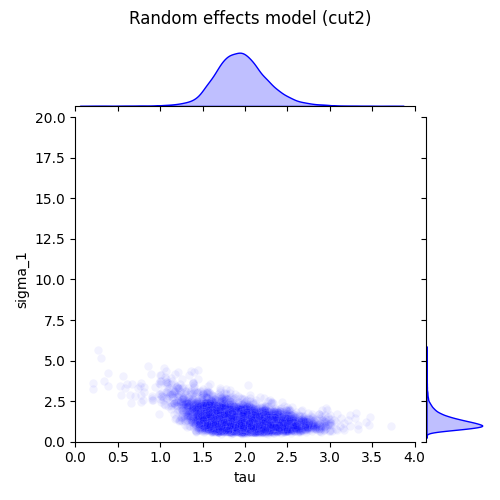}
  \caption[Epidemiology model MCMC]{
    Samples from the posterior distribution of the Random Effects model, obtained via \acrshort*{mcmc}.
    Each graph shows the joint distribution of a selected pair of parameters. Rows correspond to three modular \emph{feedback} configurations between groups: (Top row) Bayes, $\eta_1=...=\eta_{30}=1$; (Middle) One Cut module, $\eta_1=0$, $\eta_2=...=\eta_{30}=1$; (Bottom) Two Cut Modules, $\eta_1=\eta_2=0$, $\eta_3=...=\eta_{30}=1$.
  }
  \label{fig:rnd_eff_mcmc}
\end{figure}

% %%%%%%%%%%%%%%%%%%%%%%%%%%%%%%
% \section*{Monitoring training}
% %=============================

% How to display KL or another loss that decreases during training?

\end{document}